\documentclass[twocolumn, 10pt]{article}
\usepackage[superscript]{cite}
\usepackage[colorlinks=true, allcolors=blue]{hyperref}
\usepackage{amsmath}
\usepackage{cases}
\usepackage{mathtools}
\usepackage{times}
\usepackage{graphicx}
\usepackage{graphics}
\usepackage{mathptmx}
\usepackage{times}
\usepackage{amssymb}
\usepackage{dcolumn}
\usepackage{bm}
\usepackage{units}
\usepackage{siunitx}
\usepackage{caption}
\usepackage{geometry}
\usepackage{titlesec}
\titleformat*{\section}{\fontsize{12}{12}\selectfont\bfseries}
\titlespacing{\section}{0pc}{1.5ex}{0pc}
\titleformat*{\subsection}{\fontsize{11}{11}\selectfont\bfseries}
\titlespacing{\subsection}{0pc}{1.5ex}{0pc}
\raggedbottom
\usepackage[switch, columnwise]{lineno}
\usepackage{titling}
\geometry{a4paper, tmargin=1cm, bmargin=2cm, lmargin=1.2cm, rmargin=1.2cm, headheight=0cm, headsep=\baselineskip}
\date{\vspace{-5ex}}
\usepackage{microtype}
\usepackage{subfigure}
\usepackage{booktabs} 
\usepackage{color,soul}
\allowdisplaybreaks
\usepackage[noend]{algorithmic}
\usepackage{amsthm}
\newtheorem{thm}{Theorem}[section]

\newtheorem{defn}{Definition}[section]

\usepackage{xpatch}
\makeatletter
\newenvironment{proof*}[1][\proofname]{\par
	\pushQED{\qed}%
	\normalfont \partopsep=\z@skip \topsep=\z@skip
	\trivlist
	\item[\hskip\labelsep
	\itshape
	#1\@addpunct{.}]\ignorespaces
}{%
	\popQED\endtrivlist\@endpefalse
}
\makeatother
\newcommand{\beginsupplement}{%
	\setcounter{table}{0}
	\renewcommand{\thetable}{S\arabic{table}}%
	\setcounter{figure}{0}
	\renewcommand{\thefigure}{S\arabic{figure}}%
	\setcounter{section}{0}
	\renewcommand{\thesection}{S\arabic{section}}%
	\setcounter{equation}{0}
	\renewcommand{\theequation}{S\arabic{equation}}%
}
\usepackage{amssymb}

\usepackage{float}
\newfloat{Algorithm}{t}{lop}
\let\OLDthebibliography\thebibliography
\renewcommand\thebibliography[1]{
	\OLDthebibliography{#1}
	\setlength{\parskip}{0pt}
	\setlength{\itemsep}{0pt plus 0.3ex}
}

\usepackage{fancyhdr}
\pagestyle{fancy}
\fancyhf{}
\fancyhead[RE]{\leftmark}
\fancyhead[LO]{\rightmark}
\fancyfoot[RO,LE]{\thepage}
\fancypagestyle{plain}{
	\fancyhf{} 
	\fancyhead[R]{\textbf{\thepage}} 
	
	}

\title{\Huge \bfseries Optimality of short-term synaptic plasticity in modelling certain dynamic environments}
\author{Timoleon Moraitis\textsuperscript{*}, Abu Sebastian, and Evangelos Eleftheriou \\ IBM Research
	-- Zurich, 8803 R\"{u}schlikon, Switzerland \\\textsuperscript{*}Present address: Huawei -- Zurich Research Center, 8050 Zurich, Switzerland \\ timosmoraitis@gmail.com}

\begin{document}
\frenchspacing \maketitle
\textbf{
 Biological neurons and their \textit{in-silico} emulations for neuromorphic artificial intelligence (AI) use extraordinarily energy-efficient mechanisms, such as spike-based communication and local synaptic plasticity. It remains unclear whether these neuronal mechanisms only offer efficiency or also underlie the superiority of biological intelligence. Here, we prove rigorously that, indeed, the Bayes-optimal prediction and inference of randomly but continuously transforming environments, a common natural setting, relies on short-term spike-timing-dependent plasticity, a hallmark of biological synapses. Further, this dynamic Bayesian inference through plasticity enables circuits of the cerebral cortex in simulations to recognize previously unseen, highly distorted dynamic stimuli. Strikingly, this also introduces a biologically-modelled AI, the first to overcome multiple limitations of deep learning and outperform artificial neural networks in a visual task. The cortical-like network is spiking and event-based, trained only with unsupervised and local plasticity, on a small, narrow, and static training dataset, but achieves recognition of unseen, transformed, and dynamic data better than deep neural networks with continuous activations, trained with supervised backpropagation on the transforming data. These results link short-term plasticity to high-level cortical function, suggest optimality of natural intelligence for natural environments, and repurpose neuromorphic AI from mere efficiency to computational supremacy altogether.
}

\pagestyle{plain}

Fundamental operational principles of neural networks in the central nervous system (CNS) and their computational models \cite{maass1997NN,ponulak2011ANE,gruning2014ESANN} are not part of the functionality of even the most successful artificial neural network (ANN) algorithms. A key aspect of biological neurons is their
communication by use of action potentials, i.e. stereotypical voltage spikes, which carry information in their timing. In addition, to process and learn
from this timing information, synapses, i.e. connections between neurons, are equipped with plasticity mechanisms, which dynamically change the
synaptic efficacy, i.e. strength, depending on the timing of postsynaptic and/or presynaptic spikes. For example, spike-timing-dependent plasticity
(STDP) is a Hebbian, i.e. associative, plasticity, where pairs of pre- and post-synaptic spikes induce changes to the synaptic strength dependent
on the order and latency of the spike pair \cite{markram1997Science,bi1998JN,song2000NatureNeuroscience}. Plastic changes can be long-term or
short-term. Short-term plasticity (STP) \cite{zucker1989ARN,tsodyks1997PNAS,chamberlain2008NP} has been shown for instance to act as a
signal-filtering mechanism \cite{rosenbaum2012PLoS}, to focus learning on selective timescales of the input by interacting with STDP
\cite{moraitis2018IEEE,moraitis2018IJCNN}, to enable long short-term memory \cite{salaj2020spike}, and to act as a tempering mechanism for generative models \cite{leng2018spiking}. Such biophysical mechanisms have been emulated by the physics of electronics to implement
neuromorphic computing systems. Silicon spiking neurons, synapses, and neuromorphic processors are extremely power-efficient \cite{mead1990IEEEProc,merolla2014Science,qiao2015FN,indiveri2018NMEH,Davies2018Micro} and have shown particular promise in tasks such as interfacing with biological
neurons, including chips learning to interpret brain activity \cite{boi2016FN,serb2020SR}.

But the human brain exceeds current ANNs by far not only
in terms of efficiency, but also in its end performance in demanding tasks. Identifying the source of this computational advantage is an important
goal for the neurosciences and could also lead to better AI. Nevertheless, many properties of spiking neurons and synapses in the brain, particularly those mechanisms that are time-based, such as spiking activations themselves, but also STDP, STP, time-varying postsynaptic potentials etc. are studied in a separate class of computational models, i.e. spiking neural networks (SNNs) \cite{maass1997NN}, and are only sparsely and loosely embedded in state-of-the-art AI.

Therefore, it is reasonable to speculate that the brain's spike-based computational mechanisms may also be part of the reason for its performance advantage. Indeed there is evidence that the brain's powerful computations emerge from simple neural circuits with spike-based plasticity. For example, the brain is
well-documented to maintain statistically optimal internal models of the environment \cite{wolpert1995Science,kording2004Nature,ma2006NatureNeuroscience,blaisdell2006Science,griffiths2006PS,doya2007MIT,fiser2010TCS,berkes2011Science,bastos2012Neuron}. Spiking neurons can
give rise to such Bayesian models, whereas STDP can form and update them to account for new observations \cite{nessler2013PLoS}. The structure of
such SNNs is considered a canonical motif, i.e. pattern, in the brain's neocortex \cite{douglas2004ARN}. Their function through STDP is akin to on-line clustering or
expectation-maximization \cite{nessler2013PLoS}, and their models can be applied to common benchmarking tasks of machine learning (ML)
\cite{diehl2015FCN}. Such theoretical modelling and simulation results that link low-level biophysics like synaptic plasticity with high-level cortical function, and that embed biological circuitry in a rigorous framework like Bayesian inference are foundational but also very scarce. This article's first aim is to generate one such insight, revealing a link of STP to cortical Bayesian computation.
\begin{figure*}[h!]
	\centering
	\includegraphics[width = 155mm]{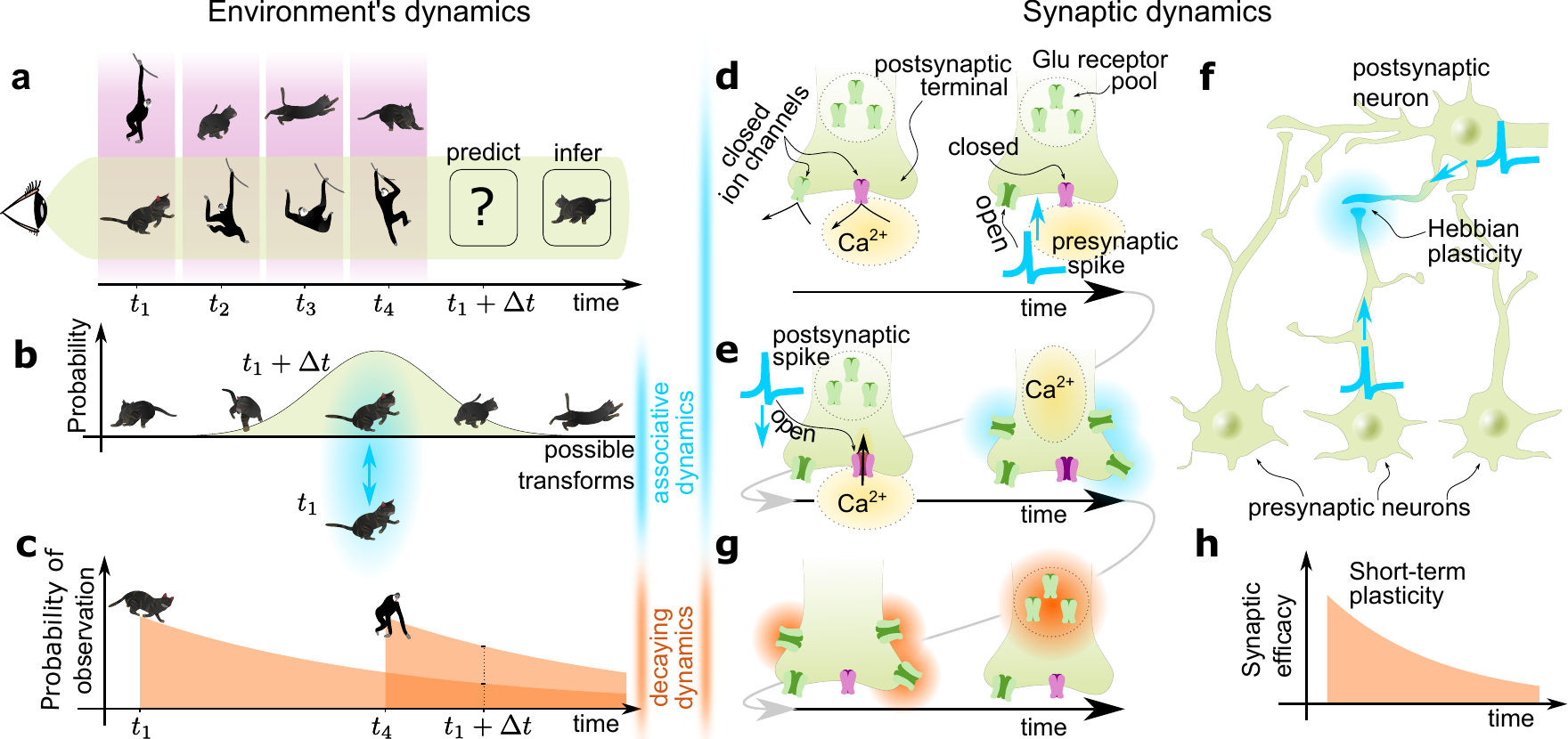}
	\caption{\textbf{Analogous dynamics between natural environments and synapses.} \textbf{a}, An environment (here, a visual one, depicted by the
		magenta field) contains objects (e.g. cat and gibbon) that transform and move randomly but continuously. An observer (left, eye) observes (green
		field) one object at a time.  Objects are replaced at random time instances (e.g. $t_2$). The observer aims to maintain a model of the environment that
		can predict future observations (question mark) and infer their type (frame labelled ``infer"). \textbf{b}, An object's observation ($t_1$, bottom
		cat) implies that the probability distribution of future ($t_1+\Delta t$) transforms of this object (horizontal axis) is centred around its last
		observation. \textbf{c}, An observed object ($t_1$, cat, and $t_4$, gibbon) is likely to be observed again soon after, but less so as time passes
		(decaying curves). \textbf{d}, Left: A synapse's postsynaptic terminal has ion channels that are resting closed, preventing ions, e.g. $Ca^{2+}$,
		from entering the postsynaptic neuron. Right: A presynaptic spike releases neurotransmitters, e.g. glutamate (Glu) that attach to ion channel
		receptors and open some channels, but do not suffice to open $Ca^{2+}$ channels, which are postsynaptic-voltage-gated. The synapse includes a pool of
		additional inactive Glu receptors. \textbf{e}, In the event of an immediately subsequent postsynaptic spike, channels open, allowing $Ca^{2+}$ to enter (left), which
		has a potentiating effect on the activity and the number of Glu receptors on the membrane (right, blue highlight). This increases the efficacy of the
		synapse, and is observed as STDP. \textbf{f}, This establishes a Hebbian link associating only the causally activated pre- and
		post-synaptic neurons. \textbf{g}, As $Ca^{2+}$'s effect on the Glu receptor pool decays with time (orange highlight), then, \textbf{h}, the efficacy
		of the synapse also decays towards its original resting weight, and is observed as STP, so the overall effect amounts to ST-STDP.
		Synaptic dynamics are analogous to the environment's dynamics (associative dynamics: blue elements, compare d-f vs b; decaying dynamics: orange elements, compare g-h vs c). We show analytically that computations performed by such synapses provide the optimal solution to task a of the observer.}
	\label{fig:data_model}
\end{figure*}

The second aim is to demonstrate that models with biological or neuromorphic mechanisms from SNNs can be superior to AI models that do not use these mechanisms, and could thus explain some of the remaining uniqueness of natural intelligence. Evidence suggests that this may be possible in certain scenarios. Proposals for functionality unique to SNNs have been presented \cite{maass1997NN,moraitis2018IJCNN,leng2018spiking}, including models
\cite{poirazi2003Neuron} with very recent experimental confirmation \cite{gidon2020Science} that individual spiking neurons in the primate brain,
even a single compartment thereof, can compute functions that were traditionally considered to require multiple layers of conventional artificial
neural networks (ANNs). Nevertheless, in concrete terms, thus far the accuracy that is achievable by brain-inspired SNNs in tasks of machine
intelligence has trailed that of ANNs \cite{bellec2018NIPS, pfeiffer2018FN, rajendran2019IEEESPMag, wozniak2020NatureMI}, and there is little
theoretical understanding of SNN-native properties in an ML context \cite{nessler2013PLoS, bengio2015ArXiv}. As a result, not only do
the benefits in AI accuracy from SNN mechanisms remain hypothetical, but it is also unclear if these mechanisms are responsible for any of the
brain's superiority to AI.

In this article, we show that, in a common problem setting -- namely, prediction and inference of environments with random
but continuous dynamics -- not only do biological computational mechanisms of SNNs and of cortical models implement a solution, they are indeed the theoretically optimal
model, and can outperform deep-learning-trained ANNs that lack these neuromorphic mechanisms.
As part of this, we reveal a previously undescribed role of STP. We also show cortical circuit motifs can perform Bayesian inference on non-stationary input distributions and can recognize previously unseen dynamically transforming stimuli, under severe distortions.

In the next sections, first, we provide an intuition as to why STP in the form of short-term STDP (ST-STDP) could turn out to be useful in modelling dynamic environments. Subsequently, the theoretically optimal solution for this task is derived, without any reference to synaptic plasticity. The result is a solution for Bayesian inference, with an equivalent interpretation as a ``neural elastic clustering“ algorithm. The theoretical sections conclude by showing the equivalence between this elastic clustering and a cortical microcircuit model with ST-STDP at its synapses, thus confirming the original intuition. Next, in spiking simulations we demonstrate the operation of the cortical-like model with STP, while it is stimulated by transforming images. Finally, we compare its performance to that of ANNs.
\section*{ST-STDP reflects the dynamics of natural environments}
We model the environment as a set of objects, each belonging to one of $K$ classes. Each object can transform in a random or unknown, but time-continuous manner. To predict future transformations and infer their class, an observer observes the environment. Only one object is observed at
each time instance, and the observer switches to different objects at random instances (Fig. \ref{fig:data_model}\textbf{a}). In the absence of
additional prior knowledge about an object's dynamics, a future observation of the potentially transformed object is distributed around its most
recent observation (Fig. \ref{fig:data_model}\textbf{b}; also see Methods section and Supplementary Information, section \ref{sec:pdf}). In addition, time and space continuity imply that an observed
object is likely to be observed again immediately after, but as objects move in space relative to the observer, this likelihood decays to zero with
time, according to a function $f(t)$ (Fig. \ref{fig:data_model}\textbf{c} and Supplementary Information, section \ref{sec:formulation_sup}). The model is rather general, as its main assumption is merely that the environment is continuous with random or unknown dynamics.
Furthermore, there is no assumption of a specific sensory domain, i.e. transforming objects may range from visual patterns in a scene-understanding
task, to proprioceptive representations of bodily states in motor control, to formations of technical indicators in financial forecasting, etc.
\begin{figure*}[h!]
	\centering
	\includegraphics[width = 155mm]{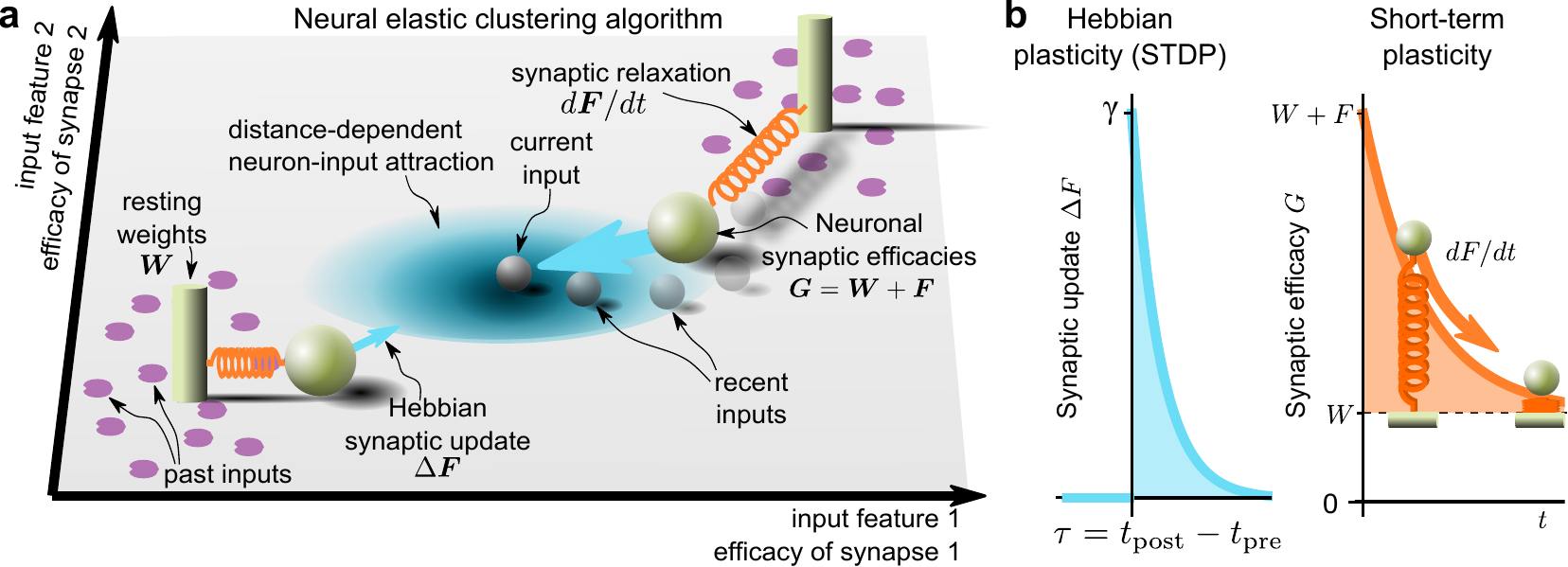}
	\caption{\textbf{Neural elastic clustering and underlying synaptic plasticity.} \textbf{a}, Random continuous environments are optimally modelled by a neurally-realizable clustering algorithm. Each cluster centroid (green spheres) corresponds to a
		neuron, here with two synapses, and is positioned in space according to its vector of synaptic efficacies $\boldsymbol{G}$. Recent inputs
		(transparent grey spheres) have pulled each neuron-centroid away from its fixed resting weight array $\boldsymbol{W}$ (vertical poles), by a short-term
		component $\boldsymbol{F}$ (orange spring). The neuron is constantly pulled back towards its resting state by a certain relaxation dynamics
		($d\boldsymbol{F}/dt$, spring). A new input (grey sphere, centre) pulls each neuron by a $\Delta\boldsymbol{F}$ (blue arrow) that depends on the
		proximity between the input and the neuron (blue field), reflected by the neuron's activation. \textbf{b}, The clustering algorithm is implemented by ST-STDP at the neurons' synapses. The proximity-dependent updates of a neuron-centroid are determined by Hebbian plasticity. In case of spike-based rate-coded inputs, updates depend (blue curve) on the time difference $\tau$ between pre- and post-synaptic spikes. The relaxation of the neuron to its resting position with time is realized by short-term plasticity (orange curve). $G$, $W$, and $F$ denote a representative
		synapse's efficacy, resting weight, and dynamic term.} \label{fig:clustering}
\end{figure*}

It can be seen that these common dynamics of natural environments bear significant analogies to the dynamics involved in short-term STDP (ST-STDP), i.e. a particular type of STP. That is a
realization of STDP with short-term effects \cite{brenowitz2005Neuron,cassenaer2007Nature,erickson2010JCN}, and has been proposed as a mechanism
for working memory in biological neural networks \cite{szatmary2010PLoS,fiebig2017JN}. It is observed in the early phase of long-term potentiation.
A series of paired pre- followed by post-synaptic spikes can lead to a persistent increase in synaptic efficacy \cite{frey1993Science,huang1998CB,baltaci2019NR}. However, when fewer stimuli are given, the induced change is short-term. This short-term increase in synaptic efficacy is mediated by
a series of biophysical events (see Fig. \ref{fig:data_model}, \textbf{d}-\textbf{h}). A presynaptic action potential releases excitatory neurotransmitters such
as glutamate (Glu), which attaches to receptors of ion channels on the postsynaptic terminal, thus opening some of the channels (Fig.
\ref{fig:data_model}\textbf{d}). However, calcium channels with N-methyl-D-aspartate (NMDA) receptors are voltage-gated \cite{schiller2000Nature},
i.e. they only open if the postsynaptic voltage increases while Glu is attached. A postsynaptic spike occurring shortly after the presynaptic one
achieves this, so that calcium does enter the postsynaptic cell. Calcium interacts with protein kinases that increase the activity and the number of
Glu receptors on the postsynaptic membrane (Fig. \ref{fig:data_model}\textbf{e}). This is observed as a Hebbian potentiation (Fig.
\ref{fig:data_model}\textbf{f}). The effect is short-term, as Glu receptors return to their resting state. Time constants for decay rates of short-term efficacy changes in cortex can be as short as tens of milliseconds \cite{wang2006heterogeneity}, and as long as 1.6 minutes \cite{erickson2010JCN}.

Note that the associative memories formed through the Hebbian aspect of ST-STDP resemble the also associative dynamics in
the environment, i.e. the association of the latest form of an object with its most likely future form (Fig. \ref{fig:data_model}, blue elements). In a second analogy, the transience of
this potentiation could be used in computations to reflect the transiently decreasing probability of observing the same object again (Fig. \ref{fig:data_model}, orange elements). Indeed, we performed a formal
analysis that concluded that the Bayesian generative model of the future observations given the past ones requires for its optimality a mechanism
equivalent to ST-STDP.

\section*{The neural elastic clustering algorithm}
Our formal derivation begins with obtaining, from the previous section's assumptions about the environment, a history-dependent description of the probability distribution of future observations. This is followed by defining a parameterized generative model that will be fitted to this distribution data. Ultimately, we find the maximum-likelihood optimal parameters of the model given the past observations, by analytically minimizing the divergence of the model from the data. We provide the full derivation in the Supplementary Information, and a summary in the Methods section.
We show that the derived generative model is equivalent to a typical cortical pattern of neural circuitry, with ST-STDP optimizing it adaptively (Supplementary Information, section \ref{sec:neuro_exp}), and is
geometrically interpretable as a novel and intuitive clustering algorithm based on centroids with elastic positions (Fig.
\ref{fig:clustering}\textbf{a}, and Supplementary Information, section \ref{sec:clustering_sup}). Specifically, if input samples $\boldsymbol{X}_t$
have $n$ features, then each of the $K$ classes is assigned a centroid, represented by a neuron with $n$ input synapses. Thus, at each time instance
$t$, the $k$-th neuron-centroid is lying in the $n$-dimensional space at a position determined by its vector of synaptic efficacies
$\boldsymbol{G}^{(k)}_t$. A function of the sample's proximity $u_t^{(k)}(\boldsymbol{X}_t)$ to this neuron determines the estimated likelihood
$q^{(k)}(\boldsymbol{X}_t)$ of this sample conditional on it belonging to the $k$-th class $C^{(k)}$. Assuming that neurons enforce normalization of
their synaptic efficacy arrays and receive normalized inputs, then the neuron defines its proximity to the input as the total weighted input, i.e.
the cosine similarity $u^{(k)}_t=\frac{ \boldsymbol{G}^{(k)}_t\cdot\boldsymbol{X}_t}{|| \boldsymbol{G}^{(k)}_t||\cdot||\boldsymbol{X}_t||}$. An
additional scalar parameter $G^{(0k)}_{t}$, represented in the neuron's bias, accounts for the prior belief about this class's probability. The bias
parameterizes the neuron's activation, which associates the sample with the $k$-th class $C^{(k)}$ as their joint probability. Ultimately, the
activation of the $k$-th neuron-centroid relative to the other neurons, e.g. the argmax function as in K-means clustering, or the soft-max function,
is the inference of the posterior probability $Q_t^{(k)}$ that the input belongs to $C^{(k)}$. Similarly to\cite{nessler2013PLoS} , we show that if
the chosen relationship is soft-max and the neurons are linear, then the network's output $Q_t^{(k)}$ is precisely the Bayesian inference given the
present input and parameters (Supplementary Information, sections \ref{sec:optimal_mean} and \ref{sec:neuro_exp}). The Bayesian generative model is
the mixture, i.e. the weighted sum, of the $K$ neurons' likelihood functions $q^{(k)}(\boldsymbol{X}_t)$, which in this case are exponential, and is
fully parameterized by the synaptic efficacies and neuronal biases.

The optimization rule of this model's parameters, for the spatiotemporal environments discussed in this article, is the elastic clustering algorithm.
Specifically, we show (Supplementary Information, section \ref{sec:optimal_givenpast}) that, given the past observations, the model's optimal synaptic efficacies, i.e. those that maximize the likelihood of the observations, comprise a fixed vector of resting weights $ \boldsymbol{W}^{(k)}$ and a dynamic term
$\boldsymbol{F}^{(k)}_t$ with a short-term memory, such that
\begin{equation}
	\boldsymbol{G}^{(k)}_t=
	\boldsymbol{F}^{(k)}_t
	+\boldsymbol{W}^{(k)}.
\end{equation}
The neuron-centroid initially lies at the resting position $\boldsymbol{W}^{(k)}$, which is found through conventional techniques such as expectation
maximization. The synaptic efficacies remain optimal if at every posterior inference result $Q^{(k)}_t$ their dynamic term $\boldsymbol{F}^{(k)}_t$
is incremented by
\begin{equation}
	\Delta\boldsymbol{F}^{(k)}_t=
	\gamma\boldsymbol{X}_t
	Q^{(k)}_t, \label{eq:F}
\end{equation}
where $\gamma$ is a positive constant, and if in addition, $\boldsymbol{F}^{(k)}_t$ subsequently decays continuously according to the dynamics
$f(t)$ of the environment (Fig. \ref{fig:data_model}\textbf{c}), such that $ \boldsymbol{G}^{(k)}_t$ relaxes towards the fixed resting point $
\boldsymbol{W}^{(k)}$. If $f(t)$ is exponential with a rate $\lambda$, then
\begin{equation}\frac{d \boldsymbol{F}^{(k)}_t}{dt}=-\lambda \boldsymbol{F}^{(k)}_t.\label{eq:lambda}\end{equation}
The latter two equations describe the attraction of the centroid by the input, as well as its elasticity (Fig. \ref{fig:clustering}\textbf{a}). The
optimal bias $G^{(0k)}_{t}$ is also derived as consisting of a fixed term and a dynamic term. This dynamic term too is incremented at every inference
step and relaxes with $f(t)$.

\section*{Equivalence to biological mechanisms}
It is remarkable that the solution (Eq. 1-3) that emerges from machine-learning-theoretic derivations in fact is an ST-STDP rule for the synaptic efficacies (see Supplementary Information, section \ref{sec:neuro_exp}), as we explain next. In
this direct equivalence, the equations dictate changes to the synaptic efficacy vector and thus describe a synaptic plasticity rule. Specifically, the rule dictates an attraction of the efficacy by the (presynaptic) input but also proportionally to the (postsynaptic) output $Q^{(k)}_t$ (Eq. \ref{eq:F}), and it is therefore a biologically plausible Hebbian plasticity rule. We also show that if input variables are encoded as binary Poisson processes, e.g. in the context of rate-coding -- a
principal strategy of input encoding in SNNs and of communication between biological neurons \cite{hubel1959JP,nessler2013PLoS,gerstner2014Cambridge,brette2015FSN,diehl2015FCN} -- a timing
dependence in the synaptic update rule emerges as part of the Bayesian algorithm (Supplementary Information, section \ref{sec:stoch}). The resulting
solution then specifies the Hebbian rule further as a spike-timing-dependent rule, i.e. STDP, dependent on the time interval $\tau$ between the pre- and post-synaptic spikes. Finally, the relaxation dynamics
$f(t)$ of the synaptic efficacies towards the resting weights $\boldsymbol{W}^{(k)}$, e.g. as in Eq. \ref{eq:lambda}, indicate that the plasticity is short-term, i.e. STP,
thereby the equations fully describing ST-STDP (see Fig. \ref{fig:clustering}\textbf{b}). This shows that, for these dynamic environments, the optimal model relies on the presence of a specific plasticity mechanism, revealing a possible direct link between synaptic dynamics and internal models of the environment in the brain. Importantly, this synaptic mechanism has not been associated with conventional ML or neurons with continuous activations before. Eq. 1-3 and our overall theoretical foundation integrate STP in those frameworks. Note that the dynamics of the bias parameter $G^{(0k)}_{t}$ capture the
evolution of the neuron's intrinsic excitability, a type of use-dependent plasticity that has been observed in numerous experiments
\cite{abbott2000NatureNeuroscience,daoudal2003LM,cudmore2004JN,turrigiano2011ARN}.

\begin{figure*}[h!]
	\centering
	\includegraphics[width = 155mm]{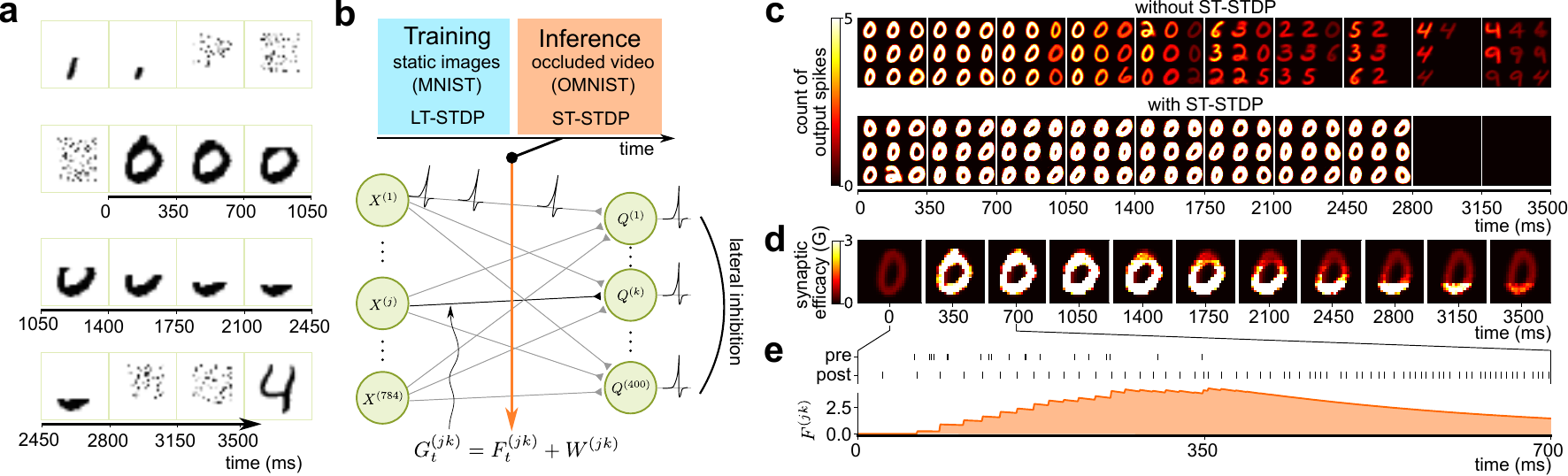}
	\caption{\textbf{SNN with ST-STDP during stimulation by transforming visual inputs.} \textbf{a}, A representative sequence of frames from the OMNIST set. The subsequence labelled as \unit[0-3500]{ms} is the input used in the rest of the figure panels. \textbf{b}, Schematic illustration of
		the neural network architecture used for the classification task. Each of the 784 input neurons, $X^{(j)}$, corresponds to a pixel and is connected to each of the 400 output
		neurons, $Q^{(k)}$, by a synapse with efficacy, $G^{(jk)}$. A lateral inhibition mechanism is implemented, whereby each output neuron's spike inhibits the
		rest of the neurons. During training with static unoccluded images (MNIST), standard, i.e. long-term, STDP
		(LT-STDP) is employed to obtain the fixed weights, $\boldsymbol{W}$, in an unsupervised manner. During inference on occluded video (OMNIST), synaptic
		plasticity switches to ST-STDP. \textbf{c}, Comparison between the SNN with and without ST-STDP in terms of the activated output neurons. The 9 most active output neurons over each period of \unit[350]{ms} are shown. Each neuron is presented as a 2D-map of its fixed weights $\boldsymbol{W}$, showing the digit pattern it has been trained to respond to. Colour-coding corresponds to the neuron's count of recorded output spikes over each period. It can be seen that, only when ST-STDP is enabled, a recognized input digit "zero" continues being recognized as such even when it is highly occluded (\unit[350-2800]{ms}, cf. a), and not when it is replaced by noise (\unit[2800-3500]{ms}). \textbf{d}, Instantaneous snapshots of the synaptic efficacies of one neuron with ST-STDP from c are shown at every 350-ms instance. \textbf{e},
		The pre- and post-synaptic spikes corresponding to one synapse for the first \unit[700]{ms}. The synapse receives input from the upper-most handwritten pixel
		of this digit. Pre-synaptic spikes cease after \unit[350]{ms} due to the occlusion. The evolution of its efficacy's short-term component $F$ is also
		shown in orange.} \label{fig:wta}
\end{figure*}
Remarkably, we show that the model we derived as optimal can be implemented, albeit stochastically, by a network structure that is common in cortex, including neurons that are also
biologically plausible (Supplementary Information, section \ref{sec:snn_exp}). In particular, the soft-max probabilistic inference outputs are
computed by stochastic exponential spiking neurons, assembled in a powerful microcircuit structure that is common within neocortical layers
\cite{douglas2004ARN}, i.e. a soft winner-take-all (WTA) circuit with lateral inhibition \cite{neftci2013PNAS,nessler2013PLoS,diehl2015FCN}.
Stochastic exponential spiking behaviour is in good agreement with the recorded activity of real neurons \cite{jolivet2006JCN}. Using techniques
similar to \cite{nessler2013PLoS}, the output spikes of this network can be shown to be samples from the posterior distribution, conditional on the
input and its past. The parameter updates of Eq. \ref{eq:F} in this spiking implementation are event-based, they occur at the time of each output
spike, and they depend on its latency $\tau$ from preceding input spikes. Finally, to complete the biological realism of the model, even the initial,
resting weights $\boldsymbol{W}^{(k)}$ can be obtained through standard, long-term STDP \cite{nessler2013PLoS} before the activation of ST-STDP.

An alternative model to the stochastic exponential neuron, and a common choice both in computational neuroscience simulations and in neuromorphic hardware circuits,
owing to its effective simplicity, is the leaky integrate-and-fire (LIF) model. We show for a WTA circuit constructed by LIF neurons, that if it
implements elastic clustering through the same ST-STDP rule of equations 1-3, this network maintains an approximate model of the random continuous
environment (Supplementary Information, sections \ref{sec:model_lin}, \ref{sec:neuro_equiv}).

Overall, thus far we have derived rigorously the optimal solution to modelling a class of common dynamic environments as a novel ML and theoretical neuroscience algorithm, namely neural elastic clustering. We have specified that it requires Hebbian STP for its optimality, and we have shown its plausible equivalence to cortical computations. As a consequence, through these computations, a typical cortical neural circuit described mathematically can maintain a Bayesian generative model that predicts and infers dynamic environments.

\section*{Application to recognition of transforming visual inputs}
We will now demonstrate the impact of ST-STDP on cortical-like circuit functionality also in simulations. Importantly, we will show that our simulation of a spiking model of a cortical neural microcircuit also achieves surprising practical advantages for AI as well, that can overcome key limitations of other neural AI approaches. We apply this SNN on the task of recognizing the frames of a sequence of gradually occluded MNIST (OMNIST) handwritten digits (see Methods section). Crucially, the network is tested on this task without any prior exposure to temporal visual sequences, or to occluded digits, but only to the static and untransformed MNIST digits. Such potential functionality of cortical microcircuit motifs for processing dynamically transforming and distorted sensory inputs has not been previously demonstrated. In addition, new AI benchmarks and applications, better suited for neuromorphic algorithms, have been highly sought but thus far missing \cite{pfeiffer2018FN}. Towards that goal, the task's design here is chosen indeed to specifically demonstrate the strengths of ST-STDP, while being attainable by a single-layer SNN with unsupervised learning and high biological realism.

In the OMNIST sequence, an
occluding object progressively hides each digit, and after a random number of frames the digit is fully occluded and replaced by random noise, before
the next digit appears in the visual sequence (see Fig. \ref{fig:wta}\textbf{a}). The task is to classify each frame into one of ten digit classes (0-9) or one
eleventh class of noisy frames. The MNIST classification task is a standard benchmarking task in which ANNs easily achieve almost perfect accuracies.
On the other hand, the derived OMNIST classification task used here is significantly more difficult due to the large occlusions, the noise, the
unpredictable duration of each digit's appearance, and the training on a different dataset, i.e. standard MNIST.
In addition, recognition under occlusion is considered a difficult problem that is highly relevant to the computer vision community, partially motivating our choice of task. Here we address the problem in a specific form that has some continuity in time and some random dynamics. This matches the type of sensory environment that in previous sections we showed demands ST-STDP for its optimal solution. ST-STDP is expected to help by exploiting the temporal continuity of the input to maintain a memory of previously unoccluded pixels also after their occlusion, whereas its synapse-specificity and its short-term nature are expected to help with the randomness. Our mathematical analysis supports this formally, but an intuitive explanation is the following. Continuity implies that recent observations can act as predictions about the future, and randomness implies that further refinement of that prediction would offer little help. Because of this, ST-STDP's short-term synaptic memories of recent unoccluded pixels of a specific object, combined with the long-term weights representing its object category, are the optimal model of future pixels of this object category.

We use an SNN of 400 output neurons in a soft-WTA structure consistent with the canonical local
connectivity observed in the neocortex \cite{douglas2004ARN}, and with the neural implementation we derived for the elastic clustering algorithm. Each output neuron is
stimulated by rate-coded inputs from 784 pixels through plastic excitatory afferent synapses (see Fig. \ref{fig:wta}\textbf{b}), and inhibited by each of the output neurons through an inhibitory interneuron. The 400 elastic
cluster neuron-centroids find their resting positions through unsupervised learning that emerges from conventional, long-term STDP
\cite{diehl2015FCN}, and driven by stimulation with the unoccluded MNIST training images. In cortex, such WTA circuits with strong lateral inhibition result in oscillatory dynamics, e.g. gamma oscillations as produced by the pyramidal-interneuron gamma (PING) circuit in cortex. PING oscillations can coexist with specialization of different sub-populations of the excitatory population \cite{cannon2014neurosystems} to different input patterns, similar to the specialization that is required here from the spiking neuron-centroids. Here too, within the duration of each single input pattern, the winning neurons of the WTA that specialize in one input digit will likely also oscillate, but we did not examine if that is the case, as it is out of our scope, and as it did not prevent the SNN from learning its task.

The 400 categories discovered by the 400 neuron-centroids through unsupervised learning in the SNN account for different styles of writing each of the 10 digits. Thus, each neuron corresponds to a subclass in the data, and each digit class corresponds to an ensemble of neurons. The winning ensemble determines the inferred label, and the winning ensemble is determined by its total firing rate. On the MNIST test set, the network achieves a
classification accuracy of $88.49\%\pm0.12\%$ (mean$\pm$ standard deviation over 5 runs with different random initialization), similar to prior results for this network size \cite{diehl2015FCN}. This network structure, and the
	training and testing protocol are adapted with small implementation differences from \cite{diehl2015FCN} and from our theoretically derived optimal implementation (see Methods section and Supplementary
	Information, section \ref{sec:sim_diffs}). When tested on the OMNIST test set without using ST-STDP or retraining, the performance drops to $61.10\%\pm0.48\%$.
However, when ST-STDP is enabled at the synapses, we observe that neurons that recognize an input digit continue recognizing it even when it is
partly occluded, and without significantly affecting the recognition of subsequent digits or noisy objects (Fig. \ref{fig:wta}\textbf{c}), reaching a
performance of $87.90\%\pm0.29\%$. This performance improvement by ST-STDP in this simulation with high biological realism suggests that ST-STDP in synapses may enable microcircuits of the neocortex
	to perform crucial computations robustly despite severe input distortions, by exploiting temporal relationships in the stimuli. This is achieved as ST-STDP strengthens specifically the synapses that contribute to the recognition of a recent frame (Fig.
\ref{fig:wta}, \textbf{d}-\textbf{e}). The spiking neurons' synaptic vectors follow the input vector and relax to their resting position (Fig. \ref{fig:pca}) confirming in practice that ST-STDP realizes the theoretical elastic clustering algorithm. This SNN with ST-STDP serves primarily as a demonstration of plausible processing in biological networks, embodying the theoretical model of our derivations. However, it is also a neuromorphic ML and inference algorithm for computer vision and other domains, directly suitable for efficient neuromorphic hardware.

\begin{figure*}[h!]
	\centering
	\begin{tabular}{c}
		\includegraphics[width = 155mm]{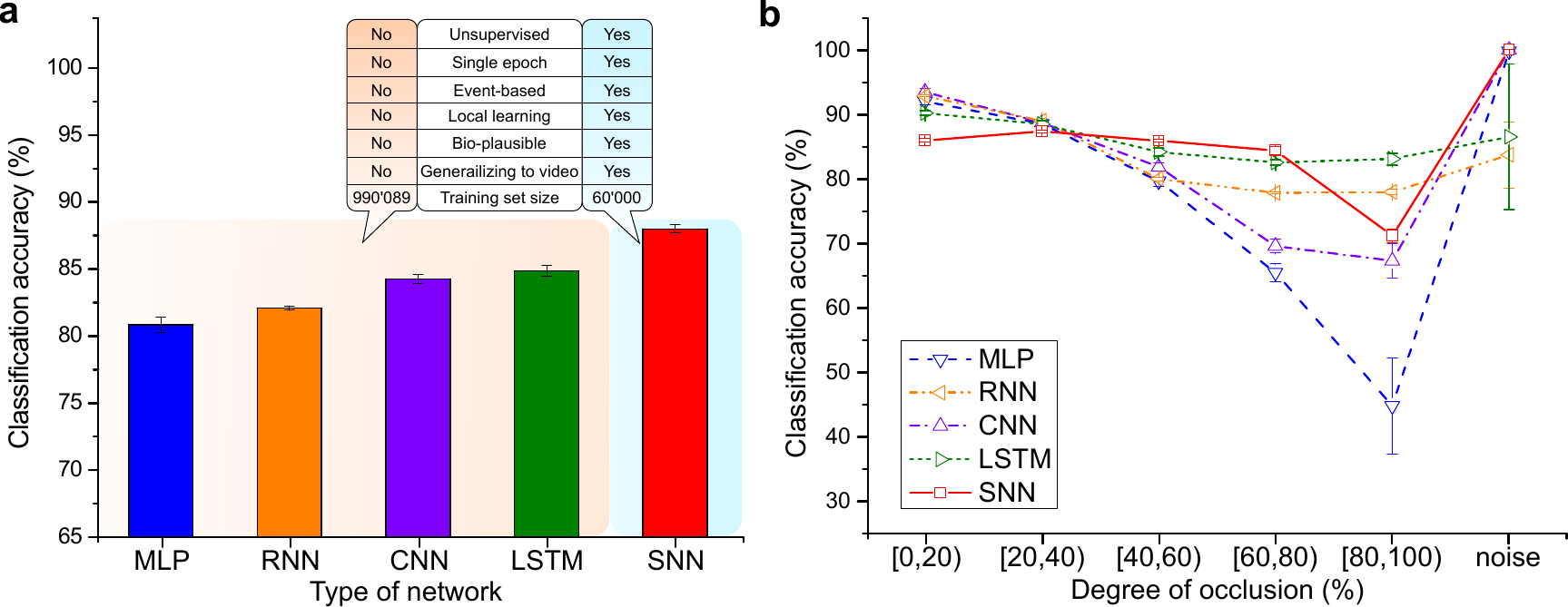}
	\end{tabular}
	\caption{\textbf{Comparison of SNN and ANN classification results.} \textbf{a}, Classification accuracy of SNN with ST-STDP on the OMNIST test set
		compared with that of an MLP, a CNN, an RNN and an LSTM. The SNN is trained on the MNIST training set while the other networks are trained on the OMNIST
		training set. The bar graph shows the mean and standard deviation over ten runs starting from different random weight initializations. The training
		set sizes of MNIST and OMNIST are 60,000 and 990,089, respectively. It can be seen that the SNN with ST-STDP achieves higher classification accuracy
		than all the other networks. In addition, the SNN compares positively across a number of qualities, namely its unsupervised training, in just one
		epoch, with event-based operation, synaptically-local learning, use of largely biologically-plausible mechanisms, generalization to transforming
		video from the standard static MNIST images, and from a significantly smaller training set. \textbf{b}, The performance of the five models in
		classifying OMNIST frames with different degrees of occlusion. Also shown is the classification accuracy when a noise frame is presented. Each data
		point indicates the mean and standard deviation over ten runs starting from different random weight initializations.} \label{fig:ANNs}
\end{figure*}

\section*{Comparison with ANNs}
Neuromorphic algorithms commonly trade off some accuracy compared to more conventional ML and inference, but we will now show that this SNN is in fact a quite competitive algorithm in a broad ML context, and in this particular task it outperforms powerful ANN models not only in accuracy, but in all relevant aspects. We compare it to ANNs that we train with backpropagation of errors, specifically a
multilayer perceptron (MLP), a convolutional neural network (CNN), a recurrent neural network (RNN), and long short-term memory (LSTM) units. The performance of each network depends on its size, so we used ANNs with at least as many trainable parameters as the SNN (see Methods section). The MLP and the CNN generally are well-equipped to tackle tasks of recognizing frames of static
images. Indeed, after training on the MNIST training set, they correctly recognize respectively
$96.89\%\pm0.17\%$ and $98.92\%\pm0.21\%$ of the MNIST testing frames. However, their performance on the OMNIST testing set is significantly lower, i.e. $56.89\%\pm0.78\%$ and $65.91\%\pm2.68\%$ respectively, after augmenting the
MNIST training set with noisy frames as an additional, 11th class of training images. Their accuracy increases to $80.76\%\pm0.58\%$ and $84.19\%\pm0.32\%$
respectively when they are trained on the 990,089 frames of the OMNIST training set (Fig. \ref{fig:ANNs}\textbf{a}), which is still lower than the performance of the SNN
with ST-STDP. Even the training access to OMNIST and supervision did not suffice the ANNs to outperform the unsupervised and MNIST-trained SNN. This
advantage stems from the fact that ST-STDP installs a short-term memory in the SNN, which allows it to make use of the relationships between frames,
while the CNN and the MLP are not equipped to deal with temporal aspects of data.

On the other hand, the RNN and the LSTM are expected to also exploit the slight temporal regularity present in the OMNIST data, if trained on the OMNIST sequence. The task of these recurrent networks is an easier task than the task of the SNN that is only trained on static MNIST data. However, even these networks and even though they are trained on the full OMNIST training sequence over multiple epochs with backpropagation through time, they achieve an accuracy of $82.01\%\pm0.17\%$ and $84.78\%\pm4.21\%$ which is still lower than the SNN's performance, even though the SNN is only
trained on MNIST's 60,000 static images in a single epoch, and in an unsupervised and local manner (Fig. \ref{fig:ANNs}\textbf{a}).

The key reason for the superiority of the SNN against the RNN and LSTM is randomness in the data. The strengths of recurrent ANNs, particularly through supervised training, become important in tasks where the temporal structure of the training data is repeatable and regular, such that it can be learned, and then applied to the testing data. However, here, the OMNIST data sequences have a degree of randomness, particularly in the duration of each digit's sequence, in the randomly noisy frames, and in the random order of digits, such that these sequences cannot be learned precisely.
The SNN achieves to navigate the random changes through the on-line adaptation of synapses and the short-term memory of STP. It achieves to do deal with these changes better than the recurrent ANNs because each neuron is equipped not with one but with multiple and independent short-term memories in its synapses, as opposed to the single short-term memory per neuron in RNN and LSTM networks. Thus, STP realizes an input-feature-specific, i.e. in this case pixel-specific, memory (see Supplementary Information, section \ref{sec:reshapes}). As a result, a recently active neuron with ST-STDP propagates forward through time in its synaptic states not only the information that input similar to its preferred stimulus was just observed but also the specific features that this input just had.
This specificity allows each neuron to keep recognizing objects that persist across frames even if these objects transform into pixels different from those stored in the long-term weights (see Supplementary Information, Fig. \ref{fig:weirdtail}). In addition, it prevents false recognition of a suddenly introduced different object, such as a noisy frame, as if it were continuation of the prior object that is still in short-term memory.
It is important to note that the time constant of STP in the SNN was a single hyperparameter for all neurons and synapses, whereas the RNN and LSTM have multiple time constants, one per recurrent connection. In addition, while the SNN's time constant was only hand-tuned, the RNN and LSTM time constants were objectively optimized by backpropagation through time, showing that the SNN's advantage does not lie in the choice of its time constant parameter, but in the unique model structure.
In Supplementary Information, section \ref{sec:LSTM} we provide some further elaboration on the contrast between short-term memory in LSTM and in ST-STDP.

Noise is a key aspect of the dataset that makes this SNN more suitable than the LSTM for the task. In OMNIST, we have relaxed the theoretical assumptions, and little randomness other than noise is involved in object shapes and their transformations, as the occlusion-introduced transformations are repeatable and rather small. The role of noise is to introduce some of the randomness in input object shapes that is assumed by our theoretical problem setting, and it is essential to the results, along with the random duration and the continuity of each digit.
This lack of randomness in transformations makes OMNIST one of the easier problems that ST-STDP can be applied to, and other models could perform well too if noise were not present, e.g. as shown by LSTM's accuracy in non-noisy frames (Fig. \ref{fig:ANNs}\textbf{b}). This problem of occlusions was still chosen, for its visual simplicity, for its significance to the field of computer vision, and for its stronger realism than a hypothetical dataset with digits that truly morph randomly. 
In more general and more natural scenarios where objects, e.g. digits, actually transform and morph randomly, the assumptions that make the problem best addressed by STP would already be met without the addition of noise. Similarly, in tasks where the possible transformations are unknown, such as in problems where temporal data are unavailable during training, the accuracy advantages of ST-STDP become even stronger. This is demonstrated in our results by the fact that the SNN is the most accurate model on OMNIST despite training only on MNIST, and by its further-increased advantage when the MLP and CNN are also only trained on MNIST. In fact, the accuracy comparison to the LSTM, trained on the OMNIST training sequence, is arguably not fair for the SNN, as the SNN is asked to solve a different and harder task, namely the task of generalizing from the static MNIST set. Still, it performs best.

Fig. \ref{fig:ANNs} summarizes the results on ML-related benchmarking. However, it should be recalled that merely the accuracy advantage of our algorithm in a ML benchmark does not summarize the full scope of this article. The qualitative advantages we described are perhaps more interesting, and are summarized in the inset of Fig. \ref{fig:ANNs}\textbf{a}. Equally importantly, results reported here pertain also to theoretical ML, to neuroscience, and to neuromorphic computing, and we discuss these further in the next section.

\section*{Discussion}
Our results are centred on a single mechanism, but have multiple implications. They span from theoretical derivations of the optimal solution for a category of ML problems, to a new role of synaptic plasticity, a newly derived theory for dynamic Bayesian inference in circuit models of the cerebral cortex, simulations showing processing of distorted and changing sensory inputs by such biological microcircuit models, a new dataset addressing demands of neuromorphic computing, a simple ML solution to visual recognition under some types of occlusion, and an implementation of an event-based algorithm suitable for neuromorphic hardware that demonstrates in practice and in combination many of the previously hypothesized advantages of neuromorphic computing.

More specifically, we have demonstrated theoretically and experimentally that certain properties so far unique to SNNs are required for the optimal processing of randomly but
continuously changing environments, as natural environments commonly are. This indicates that spike-based operations in the brain can provide it with statistically optimal internal models of dynamic environments and may be partly responsible for the brain's
computational superiority to machines in interacting with natural settings. It should be noted that while spiking activity itself may have its own advantages for the brain and ML, the optimality in our specific case does not arise from spikes per se, but rather by the consequent synaptic dynamics. Such temporal responses triggered by perisynaptic events are a hallmark of SNNs and have been absent from conventional ANN models, but as we show, they can be modelled independently from spikes. We have also shown that cortical neural microcircuit models are able to generalize from static, to transforming and novel sensory inputs through Bayesian computation, and that this could make such circuits robust to significant input distortions.
We identified Hebbian short-term plasticity as a key mechanism for this theoretical and practical optimality. This 
suggests a previously unexplored link from low-level biophysics of neurons to high-level cortical functionality, and reveals a new functional role for STP.
Moreover, we showed in practice for the first time that the performance of cortically-modelled networks can
surpass that of usually powerful conventional ANNs in a certain task, using only spikes and local unsupervised plasticity.

In addition to its biological implications, this algorithm is a new neuromorphic ML approach to modelling dynamic environments. In conjunction with the theoretical problem setting of environments with continuous and random dynamics that we emphasized here, it suggests a category of applications as best addressed by neuromorphic computing. Moreover, our rigorous proof formalizes and validates the common view in the neuromorphic community that neuromorphic computing could be best applied in problems with natural dynamics. Importantly, our neuromorphic approach differs from the currently dominant deep learning techniques for neural AI. Elastic clustering and ST-STDP operate by online adaptation to temporally local information, as opposed to the usual global training with an all-encompassing dataset in a separate prior phase. This partly circumvents a notorious
limitation of deep learning, namely that its effectiveness is limited by the breadth and size of the training set. This is demonstrated in our
experiments by the SNN's training on only static and untransformed images, in just one epoch, and with fewer examples. Furthermore, the SNN's training phase
not only is short, but also it does not include examples of the visual dynamics. These qualitative advantages, i.e. strong generalization, learning with local learning rules and without supervision, reliance on smaller training datasets, event-based operation, and on-line adaptivity, have long been among the principal aims of neuromorphic computing, and are also in the focus of conventional AI and ML research. Here we demonstrate that at least in certain cases, these long-standing neuromorphic aims are actually all possible simultaneously, and without trading off accuracy.

We must note that a broad range of deep-learning techniques that were not used here can improve the performance of ANNs on this task, so that they could eventually outperform this SNN, assuming no improvements are added to the SNN as well. Another limitation of our work is that the task's chosen specifics are tailored to be difficult to address with conventional ANN schemes. On the one hand, this specificity provides a concretely beneficial neuromorphic computing application, examples of which have been in high demand by neuromorphic researchers. On the other hand, it reconfirms that further research on biologically-plausible AI approaches is necessary to broaden their applicability. However, it is remarkable that such a simple and biologically plausible network can challenge ANNs in such a demanding task at all. It is the first demonstration in which supervision, non-local learning, training on temporal sequences and including occluded examples, non-spiking neural activations, a larger training set, multiple training epochs and neuronal layers, convolutions of CNNs, and gating mechanisms of LSTMs, do not immediately suffice to greatly outperform a simple spiking circuit with STDP and purely neuromorphic properties, that is not even trained on the actual tested OMNIST task, but on standard MNIST images. This surprising result is due to the randomness in input changes. Random dynamics -- frequent in nature and exemplified by the input sequence used in the experiments -- are unlearnable from example sequences in an off-line manner even if supervised backpropagation is applied, but are captured by on-line adaptivity.

Improvements to conventional ML techniques for tasks like OMNIST are not only hypothetically possible, but in fact suggested by our theoretical derivation. Our theory shows that the mechanism to add to conventional algorithms to achieve optimal performance in environments with random continuous dynamics is Hebbian STP itself. Concretely, while here we focused on a biologically-plausible theory and implementation, in ML systems the advantages of online unsupervised elastic clustering can also be combined with a backpropagation-trained deep SNN or ANN. Of course, STP and ST-STDP are commonly associated only with spiking activity. Nevertheless, our theory does dictate a Hebbian STP alternative that implements the neural elastic clustering without spikes. In addition, the initial and resting positions of the elastic neurons (Fig. \ref{fig:clustering}\textbf{a}) do not have to be learned in a biologically-plausible manner through STDP, and the theory does not preclude elastic neurons from the hidden layers of a deep network.
In that case, supervised training of a deep neural network through backpropagation could be used to obtain the initial state of the network, whereas an unsupervised and local ST-STDP-like neuromorphic scheme could then be added to the network's layers during inference, for on-line
adaptation to dynamic data in random but continuous transforming environments. This would likely improve the network's performance, increase its biological relevance, and practically integrate neuromorphic concepts into ANNs. This method might be useful in helping neural networks generalize from static training settings to dynamic environments, such as from images to videos, from short audio snippets to recordings in cocktail-party \cite{haykin2005cocktail} auditory settings, from individual typed sentences to longer evolving texts, etc. These are readily implementable next steps that can apply the strengths of ST-STDP and elastic clustering to larger AI problems than this first demonstrated application that was purposely constrained by biological plausibility. Thus, one outcome of this article is the theoretical introduction of STP, a typical SNN mechanism, to networks with non-spiking activations and to conventional ML as an elastic clustering algorithm. Therefore, even though this SNN's performance on OMNIST may ultimately be surpassed by supervised deep neural networks, the significance of mechanisms from the field of SNNs for both cortical circuits and AI should remain a key message.

Contrary to ML, biological analogues to global supervised training with backprop are possibly absent. Therefore, adaptivity similar to neural elastic clustering, combined with the temporal continuity of natural sensory inputs, may have an active role in enabling the CNS to perform robustly even in the presence of only unsupervised and synaptically-local learning, as in our mathematical model and simulations.

All in all, the results reported here stem from the systematic identification of analogies between biophysics and the spatiotemporal and causal structure of the
world. We propose this as a strategy that could reveal more of the principles that optimize the brain for its environment.

\bibliographystyle{naturemag}
{\footnotesize \bibliography{References_nmi}}
\pagebreak

\clearpage
\pagebreak
\section*{Methods}
\subsection*{Synopsis of the theoretical analysis}\label{sec:formulation}
Formal definitions and assumptions (see Supplementary Information, section \ref{sec:formulation_sup}), and the remainder of the complete derivation are
provided in the Supplementary Information. Here we provide its outline. First, we derive the probability distribution of the data, i.e. of future
observations given those of the past, and given the assumptions for continuity and randomness. For this, based on information theory
\cite{cover2012Wiley}, we show that the distribution of an object's future observations is a Gaussian centred at its last observation (Fig.
\ref{fig:data_model}\textbf{b}, and Supplementary Information, section \ref{sec:pdf}). The combination of this with the stationary unconditional-on-history probability
distribution that accounts for the possibility of novel object introductions, composes the probability density function (PDF) of an observed sample
conditional on its past. Using the unconditional PDF and Bayes' theorem, the conditional
PDF is expressed as a mixture of likelihood functions each attributed to a possible "hidden cause", i.e. label of the observation. Second, we model these functions
using distributions that (a) suit the assumptions of the input, (b) are parameterized such that analytical derivation of the optimal parameters is
possible by minimizing the Kullback-Leibler divergence of the model from the data, and (c) can be described as neuron activation functions. The
derived optimal parameter vectors are determined by the mean values of the likelihood functions. By minimizing the Kullback-Leibler divergence of the
model from the data PDF, we show that its components' optimal parameters are given by the means of the data distribution's components (Supplementary
Information sections \ref{sec:model}, \ref{sec:model_lin}). Third, we show that these parameters, as means of distributions, can be regarded as
centroids of clusters. In addition, we show that, to optimize the model, these change with each incoming observation, with each inferred posterior,
and with time, rendering the parameter optimization interpretable as an online clustering algorithm (Supplementary Information, section
\ref{sec:clustering_sup}). Fourth, we extend the model to operate with stochastic measurements of the observations, estimating the value of the
underlying variable as a weighted average of past stochastic samples. The weighting of the sample history is derived from the random dynamics of
object transformations, and of observed object replacement. This allows the model's optimization algorithm to use spike-based input encoding
(Supplementary Information, section \ref{sec:stoch}). Fifth, we show that the dynamics of the cluster centroids have in fact the form of short-term
Hebbian plasticity, and we show how a neural network with this plasticity can realize this model and its optimization (Supplementary Information
section \ref{sec:neuro_exp}). Last, we conclude by deriving a fully spike-based machine learning model with spiking outputs, which maintains the sought generative
model of the data (Supplementary Information, section \ref{sec:snn_exp}). We derive the results for two models, one based on stochastic exponential
and the other based on leaky integrate-and-fire (LIF) neurons.

\subsection*{The OMNIST dataset}\label{sec:OMNIST}
The OMNIST testing dataset is a single continuous sequence derived from the 10,000 images of the MNIST handwritten digit testing dataset. It consists of subsequences, each
generated based on an MNIST image. An example of this type of subsequence is shown in Fig. \ref{fig:wta}\textbf{a}, albeit shortened for the figure, compared
to the subsequences of the actual testing and training OMNIST sets. Each MNIST image is shown  while an occluding square begins to move vertically,
from the top towards the bottom of the frame, and decelerates until it stops at a certain height of the frame. After a random number of frames, the
digit and the occlusion disappear and are replaced by a random noisy object that varies between frames. The noise is generated randomly at each
frame, and repeated for four frames. Subsequently, a different MNIST frame is presented, and the process is repeated for all MNIST testing digits,
resulting in a video sequence of 164,915 total frames. Similarly, an OMNIST training video, used to train the recurrent ANNs, is derived from the separate set of 60,000 images of the
MNIST training dataset, and results in a length of 990,089 frames.

The specific geometry and dynamics of the occlusion and the noise were arbitrarily chosen as follows. Each MNIST frame is repeated for $11\leq
n_f\leq 14$ times, where $n_f$ is randomly and uniformly chosen for each frame. At the first frame for each digit, the square occluding object is
located immediately outside the visible part of the square image, so that the first frame of each digit's subsequence is identical to the original
MNIST frame. Subsequently, the occluding object begins moving downwards into the frame with a speed of 3 pixels per frame. The occlusion's speed is
reduced to 1 pixel per frame when the occlusion is 18 pixels inside the image. When the occlusion reaches 19 pixels, it stops moving until the
completion of the sequence of $n_f$ frames for this digit. The noisy object is a rectangle with a width of 15 and a height of 12 pixels, chosen to
roughly match the central frame region that the MNIST digits cover. It is generated at each frame as follows. From the 784 pixels of the whole frame,
200 are randomly selected, and, from those, the ones outside the limits of the rectangle are excluded. Each selected pixel is assigned a uniformly
random brightness value between 0 and 255.

The dataset's design was chosen to best manifest the strengths of the ST-STDP-enabled SNN, compared to other algorithms, under the constraints of a
very simple architecture and training, like our unsupervised WTA. The untransformed dataset must be able to be handled by a single-trained-layer,
unsupervised network, thus MNIST was chosen because it fulfils this requirement. While hand-written character recognition using the derived OMNIST dataset remains,
as demonstrated in the main text, a difficult task for the compared ANNs, this design makes the task achievable by the SNN. Even more difficult, and
less arbitrary, datasets could likely be handled by networks combining ST-STDP with more complex architectures. For example, multilayer networks
could be trained in a supervised manner, to extract more abstract features than individual pixels, and to recognize patterns in harder training sets
than MNIST. On-line adaptivity through ST-STDP could be added subsequently to such a network, during inference on the transforming testing data.

\subsection*{Simulations}
\textbf{SNN structure and operation}: The SNN was structured and operated during training according to \cite{diehl2015FCN} and the associated code in
\cite{diehl2015Github}. Each pixel of the 28x28 input image corresponded to an input neuron firing Poisson spike trains with a rate proportional to
the brightness of the pixel. The 784 input neurons were connected through excitatory synapses with 400 output neurons in an all-to-all fashion. In
addition, each output neuron was connected to a different inhibitory neuron, and each inhibitory neuron was connected to all other output neurons
through inhibitory synapses, implementing lateral inhibition among output neurons. Synapses were conductance-based as opposed to current-based, so
each excitatory input spike to a neuron caused the excitatory conductance of the neuron to increase by the corresponding synapse's efficacy, while
inhibitory spikes acted on an inhibitory conductance. The conductances decayed exponentially. Each conductance, multiplied by the difference of the
neuron's membrane potential from the synapse's resting potential, was added to an excitatory or inhibitory current respectively for excitatory and
inhibitory synapses, and the total current into the neuron changed its membrane potential linearly with time, while the membrane potential also
decayed exponentially towards the neuron's resting potential, at all times. Once the neuron's firing threshold was reached, the neuron fired an
output spike, and was reset to its reset potential. A refractory period prevented spikes from being produced for a few milliseconds after each output
spike.

\textbf{Deviations of SNN simulations from theory}: To test the model, we chose to simulate it by using LIF neurons, which are a convenient model
suitable for simulations or implementations with efficient electronic circuits. The theoretically optimal model requires normalized inputs,
continuous normalization of the synaptic efficacies, an additional intrinsic neuronal plasticity continuously updating the neuronal threshold, and
divisive inhibition in the case of LIF neurons, as opposed to subtractive. Nevertheless, in our simulations we used a simplified model without these
particular idealities, using only conventional neuromorphic primitives with the addition of ST-STDP. This demonstrated that ST-STDP itself is
powerful, can be robust to the absence of the theoretical idealities, and is thus suitable for simple neuromorphic hardware or simulations. In
particular, omitting the additional neuronal memory mechanism that the intrinsic neuronal plasticity would introduce, not only simplified the
simulation, but also allowed us to perform a more direct contrast of ST-STDP with other models such as RNNs or LSTMs. These networks also include a
decaying neuronal memory, implemented by the self-recurrency, and analogous to an intrinsic neuronal plasticity. Therefore, not including this aspect
in the simulated SNN allows us to attribute the demonstrated benefits unambiguously to the plasticity of synapses itself, and to distinguish them
from potential advantages due to a sheer diversity of short-term memory mechanisms (neuronal and synaptic).

\textbf{SNN training}: During training, short-term plasticity was not introduced, so the component $F$ of the synaptic efficacy $G$ was inactive, and
fixed at zero, and the efficacies were equivalent to the weights $W$, which were initialized with random values. Each of the 60,000 images from the
MNIST training set was input into the network as a \unit[350]{ms} spike train, followed by a \unit[150]{ms} resting time. The training set was
presented over a single epoch, i.e. only once. Long-term plasticity in the form of STDP was active during training. Weights were depressed when a
presynaptic spike followed a postsynaptic one, by an amount proportional to a trace decaying exponentially from the time of the single last
postsynaptic spike. Potentiation events were governed by a triplet rule, where a presynaptic spike followed by two postsynaptic spikes caused an
increase in the weight proportionally to two traces: one trace decaying exponentially from the time of the last presynaptic spike, and also another
trace decaying exponentially from the last postsynaptic spike. At each time step of the training simulation the weights of each neuron were
normalized through division by their sum. In addition, a homeostatic mechanism was changing each neuron's intrinsic excitability, through an adaptive
threshold increasing every time the neuron fires, and exponentially decaying to its resting value at all times except during the \unit[150]{ms}
resting phases between input presentation. If the presentation of an image produced fewer than five output spikes in total, then the example was
repeated with an increased intensity, i.e. by increasing the input firing rates.
The parameters used during training were those of \cite{diehl2015FCN, diehl2015Github}.

\textbf{SNN inference}: To test the performance of the network after its training, we first froze the weights and the firing thresholds. Then we
associated each neuron with a label, which was taken to be the label to which the neuron was most responsive across the last presentation of the
60000 training examples. That was the only part where labels were used, therefore the network's weights were learned in a completely unsupervised way. Subsequently, we tested the recognition of the MNIST and the OMNIST testing
datasets. In the case of MNIST, each tested image was recognized as belonging to the digit class whose corresponding output neurons had the highest
average firing rate during the 350 ms of the image's presentation. Similarly to the training phase, the examples were repeated with increasing
intensity until at least five output spikes were produced. For OMNIST, we followed the same testing protocol, but we removed the resting phase
between frames, and we did not repeat the frames that caused the network to produce few or no output spikes. This made the implementation ready for
future operation with real streaming input, as there was no need to store the data for repetition. The frames that produced no spikes were assigned
an 11th inferred label, corresponding to the noisy object of the OMNIST dataset. First we tested the network with ST-STDP turned off, as during
training, i.e. with the efficacies $G$ fixed to equal the weights $W$. Separately, we tested it after enabling ST-STDP, acting on the short-term
component $F$ of the efficacies. ST-STDP was implemented as short-term facilitation only, by using a trace each keeping track of the recent
presynaptic history. The trace decayed exponentially, and increased with each presynaptic spike. With each postsynaptic spike, the synaptic efficacy
increased by an amount, proportional to the value of the trace at that time, by a parameter $\gamma$ (Eq. \ref{eq:F}), which was fixed per synapse
and was dependent on the synapse's fixed resting weight $W$ (see Supplementary Information, section \ref{sec:weightdependent}). Subsequently, the
short-term component $F$ decayed exponentially towards zero, and therefore the efficacy $G$ towards its equilibrium value equal to the weight $W$. The parameters used during inference are given in Table 1.

\begin{table}[htbp]
	\centering
	\begin{tabular}{lc}
		\multicolumn{1}{c}{\textbf{Parameter}} & \textbf{Value} \\
		\midrule
		\midrule
		\textbf{ST-STDP} &  \\
		\midrule
		ST-STDP: STP time constant  ($1/\lambda$ in Eq. \ref{eq:lambda} \& Fig. \ref{fig:clustering}c) & 300 ms \\
		ST-STDP: STDP kernel's max ($\gamma$ in Eq. \ref{eq:F} \& Fig. \ref{fig:clustering}b) & 0.7 \\
		ST-STDP: STDP kernel time const. (Fig. \ref{fig:clustering}b) & 20 ms \\
		&  \\
		\textbf{Neuronal soma firing} &  \\
		\midrule
		Refractory period of excitatory neurons  & 5 ms \\
		Refractory period of inhibitory neurons & 2 ms \\
		Resting potential of excitatory neurons & -65 mV \\
		Resting potential of inhibitory neurons & -60 mV \\
		Membrane potential time const. of exc. neurons & 100 ms \\
		Membrane potential time const. of inh. neurons & 10 ms \\
		Reset potential of excitatory neurons & -65 mV \\
		Reset potential of inhibitory neurons & -45 mV \\
		&  \\
		\textbf{Synaptic conductance} &  \\
		\midrule
		Resting excitatory conductance & 0 mS \\
		Resting inhibitory conduct. of exc. neurons & -100 mS \\
		Resting inhibitory conduct. of inh. neurons & -85 mS \\
		Excitatory conductance time constant & 1 ms \\
		Inhibitory conductance time constant & 2 ms \\
	\end{tabular}%
\caption{\textbf{SNN simulation parameters during inference}}
	\label{tab:addlabel}%
\end{table}%

\textbf{ANN training and testing}: 11 output units were used in each ANN for OMNIST, to cover the ten digits and the additional noisy class of inputs. The multilayer perceptron (MLP) we used was structured to use one input layer of 784 neurons corresponding to the
image pixels, one fully connected hidden layer of 40 rectified linear units (ReLU), a layer of 11 linear units, and a final soft-max operation. We trained the
network using Adam, using cross-entropy as the loss function, with a learning rate of 0.001, and batches of 100 training examples. The training set
was shuffled between each full presentation of the set, and 20 epochs reached convergence. The structure of the convolutional network (CNN) comprised
a 28x28 input layer, a first convolutional layer computing eight features using a 5x5 filter with ReLU activation, a first max pooling layer with a
2x2 filter and stride of 2, a second convolutional layer of eight features using a 5x5 ReLU filter, a second max pooling layer with a 2x2 filter and
stride of 2, a densely connected layer of 400 ReLU neurons and a final dense layer of 11 outputs, passed through a soft-max operation. The network
was trained to minimize cross-entropy, with gradient descent and dropout (dropout probability of 0.4), with a learning rate of 0.001. The MLP and the
CNN were each trained in two separate manners, namely once on the MNIST and once on the OMNIST training set, before testing on the OMNIST testing
set. When the 60000 MNIST training images were used for training, the training set was augmented with an additional 11th class of 6000 noisy frames
taken from the OMNIST training set. We also trained a fully recurrent neural network (RNN) on the OMNIST training video. The RNN consisted of 784
input neurons, 400 hidden RNN units, fully connected to 11 output units, on which soft-max was applied. To train it, we used gradient descent with a
learning rate of 0.001. Training inputs were batches of 128 sequences of a fixed length. The sequences were subsequences from the OMNIST video, and
their length was equal to the number of frames of the longest digit's observation, including its subsequent noisy frames, within the OMNIST video,
i.e. 21 time steps, i.e. frames. The RNN was stateful, such that the recurrent units' states at the beginning of a batch were those at the end of the
previous batch. Each batch was a continuation of the previous one from the OMNIST video. The sequence of 990089 total training frames was presented
twice during training. The minimized loss function was the cross entropy averaged over the 21 time steps of an input sequence. A long short-term
memory (LSTM) network was also trained on the same task. The network consisted of 784 inputs, and 90 hidden LSTM cells fully connected to 11 output
units on which soft-max was applied. This resulted in a network size of 315990 parameters, slightly larger than the 314000 trained synaptic
efficacies and firing thresholds in the SNN of 400 excitatory neurons. The training procedure was the same as for the RNN.

\pagebreak
\beginsupplement
\setlength{\abovedisplayskip}{3pt}
\setlength{\belowdisplayskip}{3pt}
\title{\Huge \bfseries Supplementary Information}
\author{}
\maketitle
\vspace{3ex}
\section{Problem formulation and outline of our approach}
\label{sec:formulation_sup}
\begin{figure}[h]
	\centering
	\begin{tabular}{c}
		\includegraphics[width = 89mm]{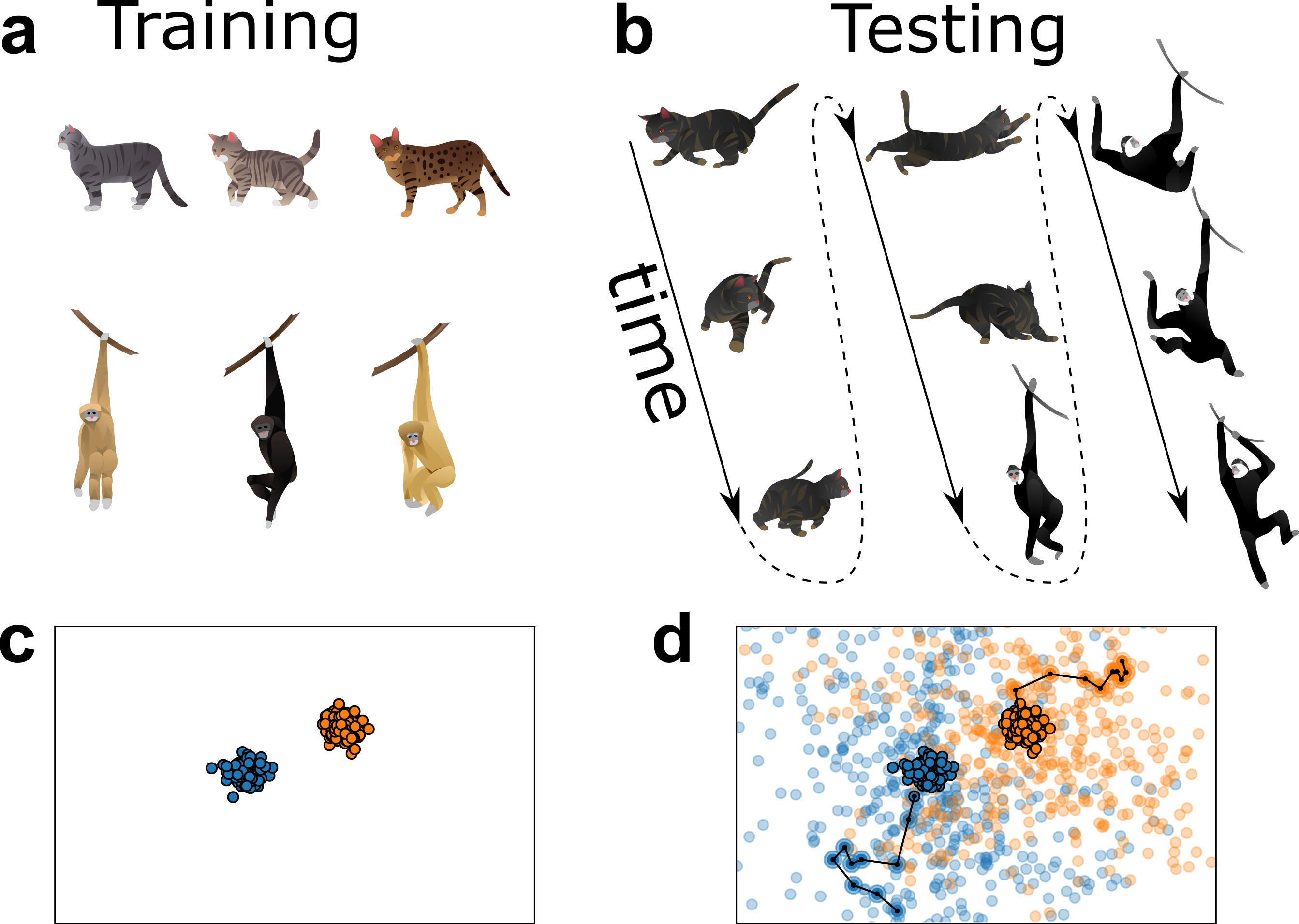}
	\end{tabular}
	\caption{\textbf{Schematic depicting the type of problems addressed here.} \textbf{a}, Training with typical static examples. \textbf{b}, During testing, the environment transforms continuously but randomly. \textbf{c}, Training examples may be easily separable and predictable. \textbf{d}, Testing patterns are highly unpredictable and non-separable, unless the continuity of objects in time (black line paths) is considered.} \label{fig:icml}
\end{figure}
We shall describe a quite generic statistical category of spatiotemporal data. Here we first summarize the reasoning, and then we provide
mathematical definitions and derivations. The employed data model is a formalization of the postulate that in natural input streams, due to the
continuity of natural environments, each instance of the observed input is likely a transformed but similar repetition of a recent instance, unless
the observer turns his attention to a different object. The data at each time instance, for example an image of an animal, is associated with a
category that is unknown, i.e. a hidden label or "cause", e.g. the animal's species. A training set with typical examples of such objects is
available, e.g. frontal images of animals (Fig. \ref{fig:icml}, \textbf{a} and \textbf{c}). After training, each new object that is encountered is typical, from the same
distribution as the training set, but is subsequently characterised by temporal dynamics, e.g. animals moving in a field (Fig. \ref{fig:icml}, \textbf{b} and \textbf{d}, and
Fig. \ref{fig:data_model}\textbf{a}). Because of the continuity of time and space, each observed object is likely to be observed at a point soon after as
well, but unlikely to be the one observed at a point far in the future. Each object's properties can change randomly but are continuous in time.

\subsection{Assumptions}
\label{sec:assumptions}
Specifically, we make the following assumptions.
\begin{enumerate}
	\setlength\itemsep{0.em}
	\item We model the environment as a set $E$ of objects: $E=\{_iO, \forall i\in\mathbb{N}\}$.
	For example, the environment may be the images of the MNIST handwritten digits, in which case the testing set contains $10,000$ examples of the infinite possible objects, or the space of three-dimensional objects, or a dictionary of word vocalizations etc.
	\item Each object $_iO$ in the environment is associated with a hidden label or "cause" or "class" $_iC$ from a finite set of $K$ possible labels, where $K$ is known:
	$_iC \in \{C^{(k)},\, \forall k \leq K\in \mathbb{N}\}.$
	E.g., in the case of the 10 digits of the MNIST dataset, $K=10$.
	\item \label{as:color} Each object $_iO$ in the environment corresponds at time $t$ to an $n$-dimensional vector $_i\boldsymbol{O}_t\in \mathbb{R}^n$. For example, in the MNIST dataset, objects have n=784 dimensions that correspond to the image pixels.
	\item \label{as:wiener} Each object vector $_i\boldsymbol{O}_t$ undergoes a transformation that is random and has continuous but unknown paths in time, e.g. the transformations due to moving occlusions in the OMNIST dataset.
	\item The environment is observed at discrete time points $t\in \{t_1, t_2,...,t_T\}, T \in \mathbb{N}$. The resulting observation at each time instance $t$ is an $n$-dimensional vector $\boldsymbol{X}_t\in \mathbb{R}^n$.
	\item \label{as:poisson1} At each time instance $t$, exactly one object $\prescript{}{i}{O}$ is observed from the environment, be it $O_t=\prescript{}{i}{O}$. A stationary Poisson process with a mean rate $\alpha$ triggers changes of the observed object $\prescript{}{i}{O}$ into a different one, $\prescript{}{j}{O}$, at random times.
	\item \label{as:f} An object $\prescript{}{l}{O}$ last observed at time $t_i$, i.e. $O_{t_i}=\prescript{}{l}{O}$, is likely to also be observed at time $t>t_i$, i.e. $O_t=O_{t_i}$, with a probability that changes with time $t-t_i$ and ultimately decays to zero.
	Let $A_{t_i}$ be the event that the object $\prescript{}{l}{O}$ last observed at time $t_i$ is also observed at time $t$.
	Then, specifically, a decaying function $f(t-t_i)$ determines the odds $\prescript{}{l}{p}$ in favour of the observation of the object $\prescript{}{l}{O}$ such that:
	\begin{equation}
		P\left(A_{t_i}\right)
		=\frac{\prescript{}{l}{p}}{\sum_{m=1}^\infty \prescript{}{m}{p}}=\frac{f(t-t_i)}{\sum_{m=1}^\infty \prescript{}{m}{p}}
	\end{equation}
	\begin{equation}
		\text{and } \lim\limits_{t\rightarrow +\infty}f(t)=0.
	\end{equation}
	\item \label{as:poisson2} At each time instance, the odds that the visible object will be replaced by a novel, previously unseen object, which we call event $B$, are constant and equal to $\beta$, i.e. \begin{equation}
		P(B)=\frac{\beta}{\sum_{m=1}^\infty \prescript{}{m}{p}}.\end{equation}
	
	The events $B$ and $A_{t_i}$ for all $1\leq t_i<t$ cover all possibilities, i.e. either a possible introduction of a novel object or a repetition of one previously observed object. Therefore, from Assumption \ref{as:poisson2} combined with Assumption \ref{as:f} it is \begin{equation}
		\sum_{m=1}^\infty \prescript{}{m}{p}=\sum_{i=1}^{T}f(t-t_i)+\beta.\end{equation}
	\item \label{as:uncond} The probability distribution of the novel objects is unknown, but it is independent from the history:
	\begin{equation}
		p(\boldsymbol{X}_t|B,\boldsymbol{X}_{t_i},C_{t_j})=p(\boldsymbol{X}_t|B), \forall i,j: t_i\neq t\neq t_j,
	\end{equation}and it is the mixture of the history-independent distributions attributed to each of the $K$ labels $C^{(k)}$:
	\begin{equation}
		p(\boldsymbol{X}_t|B)=\sum_{k=1}^{K}p(\boldsymbol{X}_t|C^{(k)}_t,B)P(C^{(k)}_t|B).
		\label{eq:pstar}
	\end{equation}
\end{enumerate}
The task of the observer, i.e. the machine learning problem that is addressed (Fig. \ref{fig:data_model}\textbf{a}, Fig. \ref{fig:icml}), is to maintain an up-to-date generative model of future observations, given the past ones, under the Assumptions 1-9 stated above.
\subsection{Sketch of the derivation}
First, based on the assumptions, we derive the probability distribution of future observations given the past ones, which has two components, one conditional on the past, and one unconditional, i.e. independent, probability distribution component describing introductions of novel objects. The history-dependent part is derived conditional on the history of past observations and of past posterior probabilities that infer the hidden class label. Using this distribution and Bayes' theorem we also get the analytical form of posterior probability for each class given one observation, its history, and past posteriors. These past posterior probabilities, however, are still unknown. To estimate them, we formulate a parametrized generative mixture model of the probability distribution, and we find the analytical form of its optimal parameters given the past inputs, i.e. the parameters that minimize the cross-entropy between the model and the data. The result is that the optimal parameter vector of each component of the mixture model is determined by the mean of each component of the data distribution. These parameters can be therefore regarded as centroids of clusters, with positions that change with each new observation and with time, and we describe this derived novel clustering algorithm. We notice that the dynamics of the cluster centroids are in fact identical to short-term Hebbian plasticity, and we show how a neural network with this plasticity can realize this optimal clustering. Lastly, we extend the model to operate with stochastic measurements of the observations, which allows the algorithm to use spike-based input encoding, and we conclude by deriving a fully spike-based machine-learning model, i.e. with spiking outputs too, that maintains the sought generative model of the data.
\section{The probability distribution of the data}
\label{sec:pdf}
Let $\boldsymbol{X}_t=\boldsymbol{X}_{T+1}$ be the $T+1^{st}$ sample observed at time $t=t_{T+1}>t_T$, and its associated hidden label be $C_t=C_{T+1}$.
Let $\boldsymbol{\mathcal{X}}_t$ be the sequence of the samples preceding $\boldsymbol{X}_t$.
Let $\mathcal{C}_t$ be a sequence of random variables, such that these random variables represent the conditional probabilities $P(C^{(k)}_{t_i}|\boldsymbol{X}_{t_i},\boldsymbol{\mathcal{X}}_{t_i},\mathcal{C}_{t_i}), \, \forall i: t_1<t_i<t,$ of the hidden labels preceding $C_t$, where $C^{(k)}_t$ is the event that the hidden label of the observation at time $t$ is $C^{(k)}$. In case the label of the observation were known to be $C^{(k)}$, then $P(C^{(k)}_{t_i}|\boldsymbol{X}_{t_i},\boldsymbol{\mathcal{X}}_{t_i},\mathcal{C}_{t_i})=1$ and $P(C^{(j)}_i|\boldsymbol{X}_{t_i},\boldsymbol{\mathcal{X}}_{t_i},\mathcal{C}_{t_i})=0, \forall j\neq k$. Importantly, this way of referring to the labels allows us to apply it also to labels that are hidden, due to its probabilistic form. It is also practical for the derivation of the statistical model of the data in the next section.
\begin{thm}
	\label{thm:pdf}
	Given the history $\boldsymbol{\mathcal{X}}_t$ and $\mathcal{C}_t$ of a future sample $\boldsymbol{X}_t$, the probability distribution of this sample is
	\begin{multline}
		p(\boldsymbol{X}_t|\boldsymbol{\mathcal{X}}_t, \mathcal{C}_t)\\
		=\frac{1}{\sum_{i=1}^T f(t-t_i)+\beta}
		\cdot\sum_{k=1}^{K}\Bigg[\sum_{i=1}^{T}\mathcal{N}\left(\boldsymbol{X}_{t_i},\boldsymbol{\Sigma}_{t_i,t}\right)P(C^{(k)}_{t_i}|\boldsymbol{X}_{t_i},\boldsymbol{\mathcal{X}}_{t_i},\mathcal{C}_{t_i})\\ \cdot f(t-t_i) +\beta p(\boldsymbol{X}_t|C^{(k)}_t,B)P(C^{(k)}_t|B)\Bigg],\label{eq:data_generative_model}
	\end{multline}
	where $\mathcal{N}\left(\boldsymbol{X}_{t_i},\boldsymbol{\Sigma}_{t_i,t}\right)$ is a normal distribution centred at $\boldsymbol{X}_{t_i}$, and with a covariance matrix $\boldsymbol{\Sigma}_{t_i,t}$ that depends on the time interval $t-t_i$.
\end{thm}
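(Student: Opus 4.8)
The plan is to obtain \eqref{eq:data_generative_model} from a single application of the law of total probability, using the partition of the sample space into the mutually exclusive and exhaustive events introduced in Assumptions \ref{as:f} and \ref{as:poisson2}: either $\boldsymbol{X}_t$ is a re-observation of the object last seen at some $t_i$ (event $A_{t_i}$, $i=1,\dots,T$), or it is a novel object (event $B$). First I would write
\[
p(\boldsymbol{X}_t\mid\boldsymbol{\mathcal{X}}_t,\mathcal{C}_t)
=\sum_{i=1}^{T}p(\boldsymbol{X}_t\mid A_{t_i},\dots)\,P(A_{t_i})
+p(\boldsymbol{X}_t\mid B,\dots)\,P(B),
\]
and read off the mixing weights directly from the assumptions: $P(A_{t_i})=f(t-t_i)/Z$ and $P(B)=\beta/Z$, with the common normaliser $Z=\sum_{i=1}^{T}f(t-t_i)+\beta$ established immediately after Assumption \ref{as:poisson2}. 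This already produces both the prefactor $1/Z$ and the $f(t-t_i)$ and $\beta$ weights that appear in the claim.

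The next step is the re-observation likelihood $p(\boldsymbol{X}_t\mid A_{t_i},\dots)$. By the Markov/independent-increment structure implicit in Assumption \ref{as:wiener}, conditioning on $A_{t_i}$ makes $\boldsymbol{X}_t$ depend on the past only through $\boldsymbol{X}_{t_i}$, and the hypothesis of random transformations with continuous paths pins the transition kernel down to a Gaussian centred at $\boldsymbol{X}_{t_i}$ whose covariance $\boldsymbol{\Sigma}_{t_i,t}$ grows with the lag $t-t_i$. Here I would invoke the information-theoretic argument referenced in the Methods: a process with continuous sample paths, stationary independent increments, and no preferred direction has, by a maximum-entropy (equivalently L\'evy-characterisation) argument, Gaussian increments, so $p(\boldsymbol{X}_t\mid A_{t_i},C^{(k)}_{t_i},\dots)=\mathcal{N}(\boldsymbol{X}_{t_i},\boldsymbol{\Sigma}_{t_i,t})$ independently of the label $k$. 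To surface the class structure I would then decompose over the hidden label at $t_i$, writing $p(\boldsymbol{X}_t\mid A_{t_i},\dots)=\sum_{k=1}^{K}\mathcal{N}(\boldsymbol{X}_{t_i},\boldsymbol{\Sigma}_{t_i,t})\,P(C^{(k)}_{t_i}\mid\boldsymbol{X}_{t_i},\boldsymbol{\mathcal{X}}_{t_i},\mathcal{C}_{t_i})$, where the posteriors are exactly the probabilistic labels stored in $\mathcal{C}_t$; summing over $k$ recovers the bare Gaussian since $\sum_k P(C^{(k)}_{t_i}\mid\cdot)=1$, a useful consistency check.

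For the novel-object branch I would use history-independence, $p(\boldsymbol{X}_t\mid B,\boldsymbol{\mathcal{X}}_t,\mathcal{C}_t)=p(\boldsymbol{X}_t\mid B)$ from Assumption \ref{as:uncond}, and expand it as the $K$-component label mixture of \eqref{eq:pstar}. Substituting the two branch likelihoods together with the mixing weights into the total-probability expression, factoring out $1/Z$, and interchanging the finite sums over $i$ and $k$ then yields \eqref{eq:data_generative_model} verbatim. The only genuinely analytic step — and hence the main obstacle — is the justification of the Gaussian transition kernel from the minimal ``random but continuous'' hypothesis of Assumption \ref{as:wiener}; everything else is bookkeeping with the law of total probability and the odds supplied by Assumptions \ref{as:f}--\ref{as:uncond}. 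I would therefore spend most care on making the maximum-entropy/continuous-increment argument rigorous (in particular that the covariance depends only on $t-t_i$), and on confirming that $\{A_{t_i}\}_{i=1}^{T}\cup\{B\}$ is genuinely exhaustive so that no residual probability mass is left unaccounted for.
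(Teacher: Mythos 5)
Your proposal is correct and takes essentially the same route as the paper's proof: the law of total probability over the exhaustive events $\{A_{t_i}\}_{i=1}^{T}\cup\{B\}$ with weights $f(t-t_i)/Z$ and $\beta/Z$ where $Z=\sum_{i=1}^{T}f(t-t_i)+\beta$, the maximum-entropy argument turning the ``random but continuous'' assumption into the label-independent Gaussian kernel $\mathcal{N}\left(\boldsymbol{X}_{t_i},\boldsymbol{\Sigma}_{t_i,t}\right)$ (the paper likewise invokes unbiased increments with time-growing variance, maximum entropy, and a Wiener-process mean-field view), and the $K$-label mixture expansion of both branches followed by substitution and interchange of the finite sums. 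Your only cosmetic deviation is decomposing directly over the hidden label at $t_i$, whereas the paper decomposes over $C^{(k)}_t$ and then uses the definition of $A_{t_i}$ to equate $P(C^{(k)}_t|\boldsymbol{\mathcal{X}}_t,\mathcal{C}_t,A_{t_i})$ with the stored posterior at $t_i$ --- these are trivially equivalent since conditional on $A_{t_i}$ the object, and hence its label, is unchanged.
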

\begin{proof}
	\begin{multline}
		p(\boldsymbol{X}_t|\boldsymbol{\mathcal{X}}_t,\mathcal{C}_t)   =\sum_{i=1}^{T}p(\boldsymbol{X}_t|\boldsymbol{\mathcal{X}}_t,\mathcal{C}_t,A_{t_i})P(A_{t_i})
		\\+p(\boldsymbol{X}_t|\boldsymbol{\mathcal{X}}_t,\mathcal{C}_t,B)P(B)\\
		=\frac{\sum_{i=1}^{T}p(\boldsymbol{X}_t|\boldsymbol{\mathcal{X}}_t,\mathcal{C}_t,A_{t_i})f(t-t_i)+p(\boldsymbol{X}_t|B)\beta}{\sum_{i=1}^T f(t-t_i)+\beta}  \label{eq:broken_down}
	\end{multline}
	The second term $p(\boldsymbol{X}_t|B)$ is the unconditional-on-history probability which is fixed and independent of time or history, as assumed in Section \ref{sec:assumptions}, Assumption \ref{as:uncond}.
	We aim to expand the first term also as a function of the known assumptions and of the visible aspects of the data. We decompose this distribution into the mixture of its $K$ classes:
	\begin{multline}
		\sum_{i=1}^{T}p(\boldsymbol{X}_t|\boldsymbol{\mathcal{X}}_t,\mathcal{C}_t,A_{t_i})f(t-t_i)=
		\\=\sum_{k=1}^{K}\sum_{i=1}^{T}p(\boldsymbol{X}_t|C^{(k)}_t,\boldsymbol{\mathcal{X}}_t,\mathcal{C}_t,A_{t_i})P(C^{(k)}_t|\boldsymbol{\mathcal{X}}_t,\mathcal{C}_t,A_{t_i})\\\cdot f(t-t_i) .\label{eq:intprod}
	\end{multline}
	The first term of the product in Eq. \ref{eq:intprod} is
	\begin{equation}
		p(\boldsymbol{X}_t|C^{(k)}_t,\boldsymbol{\mathcal{X}}_t,\mathcal{C}_t,A_{t_i})
		=p(\boldsymbol{X}_t|\boldsymbol{\mathcal{X}}_t,A_{t_i}),
		\label{eq:history_only}
	\end{equation}
	because the probability distribution of the new sample in event $A_{t_i}$ depends only on the past samples, not the labels.
	
	In addition, the second term in Eq. \ref{eq:intprod}, due to the definition of $A_{t_i}$, is
	\begin{equation}
		P(C^{(k)}_t|\boldsymbol{\mathcal{X}}_t,\mathcal{C}_t,A_{t_i})=P(C^{(k)}_t|\boldsymbol{\mathcal{X}}_{t_i},\mathcal{C}_{t_i},A_{t_i}). \label{eq:delta_exp}
	\end{equation}
	
	We will be using $P_{t_i}(C^{(k)})\coloneqq P(C^{(k)}_t|\boldsymbol{\mathcal{X}}_{t_i},\mathcal{C}_{t_i},A_{t_i})$ as the shorthand form of $P(C^{(k)}_t|\boldsymbol{\mathcal{X}}_{t_i},\mathcal{C}_{t_i},A_{t_i})$.
	
	In a key step, through assumption \ref{as:wiener} we derive a prior belief about the expected distribution of the increments of an object
	$\prescript{}{j}{} \boldsymbol{O}$ as a function of time, from the object's random but continuous dynamics. In the absence of additional information,
	the increment $\delta \prescript{}{j}{} \boldsymbol{O}_{t_i,t}\coloneqq\prescript{}{j}{} \boldsymbol{O}_{t}-\prescript{}{j}{} \boldsymbol{O}_{t_i}$ is unbiased,
	i.e. the mean of its distribution is zero. The temporal continuity of the objects implies that the variance of this distribution increases with time
	and at each point in time it is specified. Based on this, the maximum-entropy estimate probability distribution of this increment is a Gaussian
	\cite{cover2012Wiley}. Therefore, according to the principle of maximum entropy, this Gaussian is the best estimate for the distribution. Taken
	together, the Gaussian aspect and the increasing variance imply that, in expectation, the dynamics of each object are a Wiener process, which can be used as a Mean Field Approximation of the
	objects' dynamics. The Wiener process that governs the evolution of the objects implies that a Wiener process also describes the evolution of the
	object's observed subset of features $\boldsymbol{X}_{t}$ between two observations. Therefore, $A_{t_i}$, i.e. knowing that the object at time $t$ was last
	observed at time $t_i$, implies:
	\begin{equation}
		A_{t_i}
		\implies \boldsymbol{X}_{t}\sim \mathcal{N}\left(\boldsymbol{X}_{t_i},\boldsymbol{\Sigma}_{t_i,t}\right),\label{eq:Wiener}
	\end{equation}
	where $\mathcal{N}\left(\boldsymbol{X}_{t_i},\boldsymbol{\Sigma}_{t_i,t}\right)$ is a normal distribution centred at $\boldsymbol{X}_{t_i}$, and with a covariance matrix that depends on $t-t_i$.
	By using Eq. \ref{eq:delta_exp} and Eq. \ref{eq:Wiener}, Eq. \ref{eq:intprod} becomes
	\begin{equation}\sum_{i=1}^{T}p(\boldsymbol{X}_t|\boldsymbol{\mathcal{X}}_t,\mathcal{C}_t,A_{t_i})f(t-t_i)
		=\sum_{k=1}^{K}\sum_{i=1}^{T}\mathcal{N}\left(\boldsymbol{X}_{t_i},\boldsymbol{\Sigma}_{t_i,t}\right)  \, P_{t_i}(C^{(k)}) f(t-t_i)  .\label{eq:intprod2}
	\end{equation}
	By using Eq. \ref{eq:intprod2} and Eq. \ref{eq:pstar}, Eq. \ref{eq:broken_down} proves the theorem.
\end{proof}
Theorem \ref{thm:pdf} yields a distribution, which, given past observations $\boldsymbol{X}_{t_i}$, past labels $C_{t_i}$, as well as distributions $\mathcal{N}\left(\boldsymbol{X}_{t_i},\boldsymbol{\Sigma}_{t_i,t}\right)$ and $p(\boldsymbol{X}_t|C^{(k)}_t)$, can generate new samples $\boldsymbol{X}_t$ from the actual data.
On the other hand, through the same distribution, for any given sample and its history one can infer the probabilities associated with each possible hidden label of the sample.
\begin{thm}
	For any given sample $\boldsymbol{X}_t$, and history $\boldsymbol{\mathcal{X}}_t$ and $\mathcal{C}_t$, the probabilities associated with each possible value $C^{(k)}$ of the hidden class label $C_t$ that led to the generation of the observation are inferred as
	\begin{multline}
		P(C^{(k)}_t|\boldsymbol{X}_t,\boldsymbol{\mathcal{X}}_t,\mathcal{C}_t)=\frac{p(\boldsymbol{X}_t,C^{(k)}_t|\boldsymbol{\mathcal{X}}_t,\mathcal{C}_t)}{\sum_{l=1}^{K}p(\boldsymbol{X}_t,C^{(l)}_t|\boldsymbol{\mathcal{X}}_t,\mathcal{C}_t)}, \label{eq:data_inference_model_visible_causes}
	\end{multline}
	\begin{multline}
		\text{where } p(\boldsymbol{X}_t,C^{(k)}_t|\boldsymbol{\mathcal{X}}_t,\mathcal{C}_t)
		\\=\frac{1}{\sum_{i=1}^T f(t-t_i)+\beta} \cdot\sum_{i=1}^{T}\Big[\mathcal{N}\left(\boldsymbol{X}_{t_i},\boldsymbol{\Sigma}_{t_i,t}\right)P_{t_i}(C^{(k)}) f(t-t_i)
		\\ +\beta p(\boldsymbol{X}_t|C^{(k)}_t,B)P(C^{(k)}_t|B)\Big]. \label{eq:data_joint_model_visible_causes}
	\end{multline}
\end{thm}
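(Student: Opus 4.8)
The plan is to obtain the posterior by a direct application of Bayes' theorem and then to read off the explicit form of the joint density straight from the class-decomposition that already appears inside the proof of Theorem~\ref{thm:pdf}. Because the generative marginal $p(\boldsymbol{X}_t|\boldsymbol{\mathcal{X}}_t,\mathcal{C}_t)$ in Eq.~\ref{eq:data_generative_model} is already written as an outer sum over the class index $k$, the heart of the argument is simply to recognize each $k$-summand as the joint density $p(\boldsymbol{X}_t,C^{(k)}_t|\boldsymbol{\mathcal{X}}_t,\mathcal{C}_t)$.

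First I would write Bayes' theorem in the form $P(C^{(k)}_t|\boldsymbol{X}_t,\boldsymbol{\mathcal{X}}_t,\mathcal{C}_t)=p(\boldsymbol{X}_t,C^{(k)}_t|\boldsymbol{\mathcal{X}}_t,\mathcal{C}_t)/p(\boldsymbol{X}_t|\boldsymbol{\mathcal{X}}_t,\mathcal{C}_t)$ and then marginalize the denominator over the hidden label, $p(\boldsymbol{X}_t|\boldsymbol{\mathcal{X}}_t,\mathcal{C}_t)=\sum_{l=1}^{K}p(\boldsymbol{X}_t,C^{(l)}_t|\boldsymbol{\mathcal{X}}_t,\mathcal{C}_t)$. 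This immediately delivers the posterior of Eq.~\ref{eq:data_inference_model_visible_causes}, conditional only on having the explicit joint density in hand.

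To supply that joint density (Eq.~\ref{eq:data_joint_model_visible_causes}), I would revisit the intermediate steps of the proof of Theorem~\ref{thm:pdf}. The class decomposition was introduced there via the law of total probability in Eq.~\ref{eq:intprod}, where the inner product $p(\boldsymbol{X}_t|C^{(k)}_t,\boldsymbol{\mathcal{X}}_t,\mathcal{C}_t,A_{t_i})\,P(C^{(k)}_t|\boldsymbol{\mathcal{X}}_t,\mathcal{C}_t,A_{t_i})$ is, by the definition of conditional probability, exactly the class-conditional joint $p(\boldsymbol{X}_t,C^{(k)}_t|\boldsymbol{\mathcal{X}}_t,\mathcal{C}_t,A_{t_i})$. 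Applying Eq.~\ref{eq:history_only}, Eq.~\ref{eq:delta_exp}, and Eq.~\ref{eq:Wiener} collapses the likelihood factor to $\mathcal{N}(\boldsymbol{X}_{t_i},\boldsymbol{\Sigma}_{t_i,t})$ and the prior factor to $P_{t_i}(C^{(k)})$; weighting the $A_{t_i}$ branch by $f(t-t_i)$ and the novelty branch by $\beta$ (using Eq.~\ref{eq:pstar} for the $B$ term) reproduces precisely the bracketed $k$-summand of Eq.~\ref{eq:data_generative_model}, which is the sought joint.

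The step I expect to demand the most care is not any calculation but the bookkeeping claim that each $k$-indexed summand of the marginal is genuinely the joint $p(\boldsymbol{X}_t,C^{(k)}_t|\cdots)$ rather than a merely convenient algebraic grouping. This is justified because the decomposition arose from total probability over the mutually exclusive, exhaustive events $\{A_{t_i}\}_i$ and $B$, and from a second total-probability split over the classes $C^{(k)}$, so the factorization into likelihood times class-prior is preserved branch by branch; summing over $k$ then recovers the marginal, confirming the two displayed equations are mutually consistent and that Bayes' theorem closes the argument.
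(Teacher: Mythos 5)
Your proposal is correct and follows essentially the same route as the paper, whose entire proof is the one-line remark that the result follows from Bayes' theorem applied to Eq.~\ref{eq:data_generative_model}; your elaboration simply makes explicit the identification of each $k$-summand of the marginal as the joint $p(\boldsymbol{X}_t,C^{(k)}_t|\boldsymbol{\mathcal{X}}_t,\mathcal{C}_t)$, which the paper leaves implicit. Note only that your reading (with the $\beta p(\boldsymbol{X}_t|C^{(k)}_t,B)P(C^{(k)}_t|B)$ term added once per class $k$, outside the sum over $i$, as in Eq.~\ref{eq:data_generative_model}) is the consistent one, whereas the theorem statement's placement of that term inside the sum over $i$ appears to be a typographical slip.
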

This is easily shown through Bayes' theorem applied to the distribution of Eq. \ref{eq:data_generative_model}.
But, to compute $p(\boldsymbol{X}_t|\boldsymbol{\mathcal{X}}_t, \mathcal{C}_t)$ and $P(C^{(k)}_t|\boldsymbol{X}_t,\boldsymbol{\mathcal{X}}_t,\mathcal{C}_t)$ with Eq. \ref{eq:data_generative_model} and Eq. \ref{eq:data_inference_model_visible_causes}, the past probabilities $P_{t_i}(C^{(k)})$ are needed and are unknown. To estimate these, we will make a parametrized model of the data distribution, and find its parameters such that it is as similar as possible to the real distribution.
\section{Modelling the data with a mixture of exponentials}
\label{sec:model}
\subsection{The mixture model and its optimal parameters}
\label{sec:optimal_mean}
\begin{defn}
	\label{def:model}
	Consider a mixture model distribution $q$:
	$q(\boldsymbol{X}_t)=\sum_{k=1}^{K}q(\boldsymbol{X}_t|C^{(k)}_t)\,Q(C^{(k)}_t),$
	approximating some data distribution $p$ that is also a mixture of K components.
	We choose a mixture of exponentials and we parametrize $Q(C^{(k)}_t;G^{(k)}_0)$ also as an exponential, specifically:
	\begin{equation}
		q(X_{t}^{(j)}|C^{(k)}_t;G^{(jk)})=e^{G^{(jk)}\cdot \frac{X_{t}^{(j)}}{||\boldsymbol{X}_t||}},\, \forall j>0, k \label{eq:g_param}
	\end{equation}
	\begin{equation}
		Q(C^{(k)}_t;G^{(0k)})=e^{G^{(0k)}},\,\forall k.\label{eq:g0_param}\end{equation}
	
	In addition, the parameter vectors are subject to the normalization constraints:
	$||\boldsymbol{G}^{(k)}||=1,\, \forall k$,
	and
	$
	\sum_{k=1}^{K}e^{G^{(0k)}}=1. \label{eq:norm_0}$
\end{defn}

Assuming that the dimensions $X_{t}^{(j)}$ of the observed variable $\boldsymbol{X}_t$ are conditionally independent from each other, then $p(\boldsymbol{X}_t)=\prod_{j=1}^{n}p(X_{t}^{(j)})$, so the model we have chosen is a reasonable choice because it factorizes similarly:
\newline $q^{(k)}\coloneqq q(\boldsymbol{X}_t|C^{(k)}_t; \boldsymbol{G}^{(k)})=\prod_{j=1}^{n}q(X_{t}^{(j)}|C^{(k)}_t;G^{(jk)})=e^{\sum_{j=1}^{n}G^{ (jk)}\frac{X_{t}^{(j)}}{||\boldsymbol{X}_t||}}=e^{u^{(k)}_t}, \label{eq:multinomial}
$
where $u^{(k)}_t=\frac{ \boldsymbol{G}^{(k)}\cdot \boldsymbol{X}_t}{|| \boldsymbol{G}^{(k)}||\cdot||\boldsymbol{X}_t||}$, i.e. the cosine similarity of the two vectors.
\begin{thm}\label{thm:optimal}
	The optimal parameters of such a mixture model are
	\begin{equation}
		\prescript{}{opt}{}G^{(0k)}=\ln P(C^{(k)}_t)
		\label{eq:G0}
	\end{equation}
	\begin{equation}
		\text{and } \prescript{}{opt}{}\boldsymbol{G}^{*(k)}=\frac{ \prescript{}{opt}{}\boldsymbol{G}^{(k)}}{|| \prescript{}{opt}{}\boldsymbol{G}^{(k)}||}=\frac{E_{p^{(k)}}\left[\boldsymbol{X}_t\right]}{||E_{p^{(k)}}\left[\boldsymbol{X}_t\right]||}, \label{eq:Gstar}
	\end{equation}
	\begin{equation}
		\text{where } \prescript{}{opt}{}\boldsymbol{G}^{(k)}=c\cdot E_{p^{(k)}}\left[\boldsymbol{X}_t\right], c\in\mathbb{R}, \label{eq:G}
	\end{equation}
	\begin{equation}
		\text{and } p^{(k)}\coloneqq p(\boldsymbol{X}_t|C^{(k)}_t)
	\end{equation}
	for every $k$.
\end{thm}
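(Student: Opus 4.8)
The plan is to use the fact that minimizing the Kullback--Leibler divergence of $q$ from $p$ is equivalent to maximizing the expected log-likelihood $E_p[\ln q]$, since the entropy of $p$ is a parameter-independent constant. The decisive first step is to lift the problem from the marginal mixture to the joint distribution over the pair $(\boldsymbol{X}_t, C_t)$, which is the natural objective here because model and data share the same $K$-component latent structure; this replaces the intractable $\ln\sum_k$ of the marginal likelihood by a sum of logarithms. Writing $q(\boldsymbol{X}_t, C^{(k)}_t)=Q(C^{(k)}_t)\,q^{(k)}$ and taking the expectation under $p$, the objective splits as
\begin{equation}
E_p[\ln q]=\sum_{k=1}^{K}P(C^{(k)}_t)\ln Q(C^{(k)}_t)+\sum_{k=1}^{K}P(C^{(k)}_t)\,E_{p^{(k)}}[\ln q^{(k)}],
\end{equation}
so that the biases $G^{(0k)}$ and the synaptic vectors $\boldsymbol{G}^{(k)}$ decouple completely, and the $K$ components decouple from one another.

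For the first sum I would substitute $Q(C^{(k)}_t)=e^{G^{(0k)}}$ and maximize $\sum_k P(C^{(k)}_t)\,G^{(0k)}$ under the constraint $\sum_k e^{G^{(0k)}}=1$ via a single Lagrange multiplier $\mu$. The stationarity condition $P(C^{(k)}_t)=\mu\,e^{G^{(0k)}}$ together with the constraint forces $\mu=1$ and yields $\prescript{}{opt}{}G^{(0k)}=\ln P(C^{(k)}_t)$, which is Eq.~(\ref{eq:G0}). Equivalently, this term is the negative of the KL divergence between the discrete distributions $\{P(C^{(k)}_t)\}$ and $\{Q(C^{(k)}_t)\}$, whose maximum (zero) is attained exactly when they coincide.

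For the second sum I would treat each component separately. Using $q^{(k)}=e^{u^{(k)}_t}$ with $u^{(k)}_t=\boldsymbol{G}^{(k)}\!\cdot\boldsymbol{X}_t/(\|\boldsymbol{G}^{(k)}\|\,\|\boldsymbol{X}_t\|)$, the properly normalized conditional density is $q^{(k)}/Z(\boldsymbol{G}^{(k)})$, and the crucial observation is that, under the constraint $\|\boldsymbol{G}^{(k)}\|=1$ and normalized inputs, $q^{(k)}$ is a von Mises--Fisher density whose partition function $Z$ depends only on the fixed concentration $\|\boldsymbol{G}^{(k)}\|$, not on its direction. Hence $\ln Z$ is constant in the optimization and drops out, leaving the linear objective $E_{p^{(k)}}[u^{(k)}_t]=\boldsymbol{G}^{(k)}\!\cdot E_{p^{(k)}}[\boldsymbol{X}_t]/\|\boldsymbol{G}^{(k)}\|$ (taking $\|\boldsymbol{X}_t\|=1$). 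By the Cauchy--Schwarz inequality, or equivalently by a Lagrange multiplier for $\|\boldsymbol{G}^{(k)}\|^2=1$, this is maximized precisely when $\boldsymbol{G}^{(k)}$ is aligned with $E_{p^{(k)}}[\boldsymbol{X}_t]$, giving $\prescript{}{opt}{}\boldsymbol{G}^{(k)}=c\,E_{p^{(k)}}[\boldsymbol{X}_t]$ and, after normalization, $\prescript{}{opt}{}\boldsymbol{G}^{*(k)}=E_{p^{(k)}}[\boldsymbol{X}_t]/\|E_{p^{(k)}}[\boldsymbol{X}_t]\|$, i.e. Eqs.~(\ref{eq:G}) and~(\ref{eq:Gstar}).

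I expect the main obstacle to be the treatment of the conditional normalization constant: the reduction to a linear objective is valid only because fixing $\|\boldsymbol{G}^{(k)}\|$ and restricting to the unit sphere makes $Z$ direction-independent. Establishing this carefully --- that the exponential conditionals really are normalizable von Mises--Fisher densities and that the concentration is pinned by the constraint rather than being a free parameter requiring co-optimization --- is the step demanding the most care, alongside the initial lift to the joint distribution that licenses the component-wise decoupling and thereby circumvents the log-of-a-sum that would otherwise obstruct a closed-form solution.
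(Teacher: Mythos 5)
Your proof is correct, and its skeleton coincides with the paper's own: minimize the KL divergence component-wise, match the mixture priors to obtain $\prescript{}{opt}{}G^{(0k)}=\ln P(C^{(k)}_t)$, and maximize the expected cosine similarity to align $\boldsymbol{G}^{(k)}$ with $E_{p^{(k)}}\left[\boldsymbol{X}_t\right]$. Where you differ is in rigor, at precisely the two places the paper glosses over. First, the paper asserts that minimizing the mixture KL reduces to minimizing each component's KL ``because the components are independent,'' which as stated sidesteps the log-of-a-sum obstruction you correctly identify; your lift to the joint distribution over $(\boldsymbol{X}_t, C_t)$, under which $E_p[\ln q]$ splits exactly into a prior-matching term plus per-component terms, is the argument that actually licenses that reduction (at the modest cost of optimizing an upper bound on the marginal KL, which deserves one sentence of acknowledgment). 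Relatedly, your Lagrange-multiplier (equivalently Gibbs-inequality) treatment of the biases derives what the paper simply imposes by setting $Q(C^{(k)}_t)=P(C^{(k)}_t)$. Second, the paper takes $E_{p^{(k)}}[\ln q^{(k)}]=E_{p^{(k)}}[u^{(k)}_t]$ without ever mentioning a partition function; your observation that, with normalized inputs and the constraint $||\boldsymbol{G}^{(k)}||=1$, the conditional is a von Mises--Fisher density whose $\ln Z$ depends only on the pinned concentration and therefore drops out of the direction-only optimization is a genuine patch, since without it the cross-entropy objective is not well defined for the unnormalized $e^{u^{(k)}_t}$. The final alignment step --- your Cauchy--Schwarz argument versus the paper's appeal to ``symmetry of the cosine similarity'' --- establishes the same fact, and both versions require $||\boldsymbol{X}_t||=1$ (or the normalization absorbed into the expectation) to pull the expectation inside the inner product, which you state explicitly and the paper leaves implicit; note also that both arguments really yield $c>0$, the theorem's looser $c\in\mathbb{R}$ being resolved to the positive ray only by the final normalization to $\prescript{}{opt}{}\boldsymbol{G}^{*(k)}$.
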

\begin{proof}
	
	The model $q$ is optimal from a minimum cross-entropy perspective if its parameters equal those parameters $\boldsymbol{G}=\prescript{}{opt}{}\boldsymbol{G}$ that minimize the model's Kullback-Leibler divergence with the data distribution $p$. $D_{KL}(p(\boldsymbol{X}_t)||q(\boldsymbol{X}_t;\boldsymbol{G}))$.
	Because $p^{(k)}\coloneqq p(\boldsymbol{X}_t|C^{(k)}_t)$ is independent from $p^{(l)}$, and $q^{(k)}\coloneqq q(\boldsymbol{X}_t|C^{(k)}_t; \boldsymbol{G}^{(k)})$ is independent from $ \boldsymbol{G}^{(l)}$ for every $l\neq k$,
	we can find the set of parameters that minimize the KL divergence of the mixtures, by minimizing the KL divergence of each component $k$:
	$\min D_{KL}(p^{(k)}||q^{(k)}),\, \forall k,
	$ and simultaneously setting
	$P(C^{(k)}_t)=Q(C^{(k)}_t;G^{(k)}_0), \, \forall k.$
	
	From Eq. \ref{eq:g0_param} and this last condition, Eq. \ref{eq:G0} of the Theorem can be proven.
	Further,
	\begin{gather}
		\prescript{}{opt}{}\boldsymbol{G}^{(k)}  \coloneqq \arg \min_{ \boldsymbol{G}^{(k)} } D_{KL}(p^{(k)}||q^{(k)})\nonumber\\
		=\arg \min_{ \boldsymbol{G}^{(k)} }\int_{\boldsymbol{X}_t}  p^{(k)}  \ln \frac{p^{(k)} }{q^{(k)} }d\boldsymbol{X}_t\nonumber\\
		=\arg \max_{ \boldsymbol{G}^{(k)} } E_{p^{(k)}}\left[u^{(k)}_t\right].
		\label{eq:argmin}
	\end{gather}
	%
	This follows from using the definition of $q^{(k)}$, and it is the expected value of the cosine similarity $u^{(k)}_t$.
	By virtue of the symmetry of the cosine similarity, it follows that
	\begin{multline}
		\prescript{}{opt}{}\boldsymbol{G}^{(k)} =\arg \max_{ \boldsymbol{G}^{(k)} } \cos\left( \boldsymbol{G}^{(k)}, E_{p^{(k)}}\left[\boldsymbol{X}_t\right]\right)
		\\=c\cdot E_{p^{(k)}}\left[\boldsymbol{X}_t\right], c\in\mathbb{R}.
	\end{multline}
	Enforcement of the normalization of the parameter vector results in the unique solution $\prescript{}{opt}{}\boldsymbol{G}^{*(k)}$.
\end{proof}

\subsection{The optimal parameters given the history of observations}
\label{sec:optimal_givenpast}
We will now further specify this solution, for the specific distribution $p(\boldsymbol{X}_t|\boldsymbol{\mathcal{X}}_t, \mathcal{C}_t)$ as described in Eq. \ref{eq:data_generative_model}, and its components $p^{(k)}(\boldsymbol{X}_t|\boldsymbol{\mathcal{X}}_t, \mathcal{C}_t)$.

\begin{thm}
	\label{thm:opt_params}
	Let it be $ \boldsymbol{W}\coloneqq E_{p^{(k)}_B}[\boldsymbol{X}_t]$, and $e^{W^{(k)}_0}\coloneqq P(C^{(k)}_t|B)$.
	The optimal parameters of the model of Definition \ref{def:model}, for data following the assumptions of paragraph \ref{sec:assumptions}, are, for each component $k$,
	\begin{equation}
		\prescript{}{opt}{}\boldsymbol{G}^{ (k)}_t
		=\frac{1}{\beta  \, e^{W^{(k)}_0}}\sum_{i=1}^{T}\boldsymbol{X}_{t_i}
		Q_{t_i}(C^{(k)})f(t-t_i) + \boldsymbol{W} \label{eq:F3_dynamics}
	\end{equation}
	\begin{equation}
		e^{G^{ (0k)}_{t}}=\frac{\sum_{i}^{T} Q_{t_i}(C^{(k)})f(t-t_i)+e^{W^{(0k)}}\beta}{\sum_{i}^{T}f(t-t_i)+\beta}, \label{eq:thr}
	\end{equation}
	\begin{equation}
		\text{where } Q_t(C^{(k)})=\frac{e^{u^{(k)}_t+G^{(0k)}_{t}}}{\sum_{l=1}^{K}e^{u^{(l)}_t+G^{(0l)}_{t}}}. \label{eq:Q_model}
	\end{equation}
\end{thm}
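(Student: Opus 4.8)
The plan is to specialize the generic optimality result of Theorem \ref{thm:optimal} to the concrete data distribution derived in Theorem \ref{thm:pdf}. Theorem \ref{thm:optimal} already supplies the two building blocks I need: the optimal bias obeys $\prescript{}{opt}{}G^{(0k)}=\ln P(C^{(k)}_t)$, and the optimal efficacy vector is $\prescript{}{opt}{}\boldsymbol{G}^{(k)}=c\cdot E_{p^{(k)}}[\boldsymbol{X}_t]$ for a free scalar $c\in\mathbb{R}$, with $p^{(k)}=p(\boldsymbol{X}_t|C^{(k)}_t)$. Consequently the whole proof reduces to evaluating two integrals against the mixture of Eq. \ref{eq:data_generative_model}: the marginal class probability $P(C^{(k)}_t)$ and the component mean $E_{p^{(k)}}[\boldsymbol{X}_t]$, and then choosing $c$ so that the answer takes the advertised additive form.

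First I would isolate the $k$-th joint component $p(\boldsymbol{X}_t,C^{(k)}_t|\boldsymbol{\mathcal{X}}_t,\mathcal{C}_t)$ from Eq. \ref{eq:data_joint_model_visible_causes}, which is (up to the common normalizer $\sum_i f(t-t_i)+\beta$) a weighted sum of the past-anchored Gaussians $\mathcal{N}(\boldsymbol{X}_{t_i},\boldsymbol{\Sigma}_{t_i,t})$ with weights $P_{t_i}(C^{(k)})f(t-t_i)$, plus the novelty term with weight $\beta\,P(C^{(k)}_t|B)$. To obtain the marginal $P(C^{(k)}_t)$ I integrate this joint over $\boldsymbol{X}_t$; since each Gaussian and the novelty density $p(\boldsymbol{X}_t|C^{(k)}_t,B)$ integrate to one, the integral collapses to $\bigl[\sum_i P_{t_i}(C^{(k)})f(t-t_i)+\beta P(C^{(k)}_t|B)\bigr]\big/\bigl[\sum_i f(t-t_i)+\beta\bigr]$. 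Writing $e^{W^{(0k)}}$ for $P(C^{(k)}_t|B)$, using $e^{G^{(0k)}_t}=P(C^{(k)}_t)$ from Theorem \ref{thm:optimal}, and replacing the unknown true posteriors $P_{t_i}(C^{(k)})$ by the model's own posterior estimates $Q_{t_i}(C^{(k)})$ of Eq. \ref{eq:Q_model}, yields Eq. \ref{eq:thr} immediately.

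For the efficacy vector I would compute $E_{p^{(k)}}[\boldsymbol{X}_t]$ by integrating $\boldsymbol{X}_t$ against the \emph{conditional} density $p^{(k)}=p(\boldsymbol{X}_t,C^{(k)}_t|\cdots)/P(C^{(k)}_t)$. Because the mean of each Gaussian is its own centre $\boldsymbol{X}_{t_i}$ and the mean of the novelty density is $\boldsymbol{W}=E_{p^{(k)}_B}[\boldsymbol{X}_t]$, the numerator becomes $\sum_i \boldsymbol{X}_{t_i}Q_{t_i}(C^{(k)})f(t-t_i)+\beta\,\boldsymbol{W}\,e^{W^{(0k)}}$, while the denominator is exactly the same factor $\sum_i Q_{t_i}(C^{(k)})f(t-t_i)+\beta\,e^{W^{(0k)}}$ that appeared in the marginal. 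Since Theorem \ref{thm:optimal} fixes only the direction of $\boldsymbol{G}^{(k)}$ and leaves the scale $c$ free, I would exploit this freedom and pick $c=\bigl[\sum_i Q_{t_i}(C^{(k)})f(t-t_i)+\beta\,e^{W^{(0k)}}\bigr]\big/\bigl(\beta\,e^{W^{(0k)}}\bigr)$, which cancels the denominator and splits the result into a history-dependent sum plus the constant $\boldsymbol{W}$, reproducing Eq. \ref{eq:F3_dynamics} exactly.

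The hard part is not the integration, which is routine once the component densities are correctly identified, but the self-referential substitution $P_{t_i}(C^{(k)})\to Q_{t_i}(C^{(k)})$: the exact minimizer of the Kullback--Leibler divergence involves the true past posteriors, which are unobservable, so the argument must justify closing the loop with the model's own outputs, rendering Eq. \ref{eq:F3_dynamics} and Eq. \ref{eq:thr} implicit and recursive rather than closed-form. A secondary point requiring care is that the chosen scalar $c$ (and the matching normalization of $G^{(0k)}$) must be compatible with the unit-norm constraint $\|\boldsymbol{G}^{(k)}\|=1$ of Definition \ref{def:model}; this is exactly what licenses the rescaling, since the constraint pins down only the direction, and it is this freedom that converts the bare mean $E_{p^{(k)}}[\boldsymbol{X}_t]$ into the interpretable decomposition $\boldsymbol{G}^{(k)}_t=\boldsymbol{F}^{(k)}_t+\boldsymbol{W}$.
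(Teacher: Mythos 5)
Your proposal is correct and follows essentially the same route as the paper's proof: specializing Theorem \ref{thm:optimal} to the history-conditioned mixture of Theorem \ref{thm:pdf}, using $E_{\mathcal{N}(\boldsymbol{X}_{t_i},\boldsymbol{\Sigma}_{t_i,t})}[\boldsymbol{X}_t]=\boldsymbol{X}_{t_i}$ and the novelty-component mean $\boldsymbol{W}$ to evaluate $E_{p^{(k)}}[\boldsymbol{X}_t]$, exploiting the free scale $c$ to produce the additive decomposition of Eq. \ref{eq:F3_dynamics}, and marginalizing over $A_{t_i}$ and $B$ to obtain Eq. \ref{eq:thr}. You also correctly identify the two delicate points the paper itself handles --- the self-referential substitution $P_{t_i}(C^{(k)})\to Q_{t_i}(C^{(k)})$ (which the paper resolves by iterative bootstrapping from $G_1=W_1$) and the fact that the norm constraint pins down only the direction of $\boldsymbol{G}^{(k)}$ --- so nothing essential is missing.
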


\begin{proof*}
	\begin{equation}
		p(\boldsymbol{X}_t|\boldsymbol{\mathcal{X}}_t, \mathcal{C}_t)=\sum_{k=1}^{K}p^{(k)}(\boldsymbol{X}_t|\boldsymbol{\mathcal{X}}_t, \mathcal{C}_t)\cdot P(C^{(k)}_t|\boldsymbol{\mathcal{X}}_t,\mathcal{C}_t),
	\end{equation}
	and in conjunction with the alternative expression of $p(\boldsymbol{X}_t|\boldsymbol{\mathcal{X}}_t, \mathcal{C}_t)$ in Eq. \ref{eq:data_generative_model}, it follows that
	\begin{multline}
		p^{(k)}(\boldsymbol{X}_t|\boldsymbol{\mathcal{X}}_t, \mathcal{C}_t)\cdot P(C^{(k)}_t|\boldsymbol{\mathcal{X}}_t,\mathcal{C}_t)=\frac{1}{\sum_{i=1}^T f(t-t_i)+\beta}
		\\\cdot\sum_{i=1}^{T}\Big[\mathcal{N}\left(\boldsymbol{X}_{t_i},\boldsymbol{\Sigma}_{t_i,t}\right)P_{t_i}(C^{(k)}) f(t-t_i)+\beta p(\boldsymbol{X}_t|C^{(k)}_t,B)P(C^{(k)}_t|B)\Big]
	\end{multline}
	assuming $P(C^{(k)}_t|\boldsymbol{\mathcal{X}}_t,\mathcal{C}_t)\neq 0$.
	In this Eq., $P(C^{(k)}_t|\boldsymbol{\mathcal{X}}_t,\mathcal{C}_t)$ is still unknown. But it is
	\begin{multline}
		P(C^{(k)}_t|\boldsymbol{\mathcal{X}}_t,\mathcal{C}_t)
		\\= \frac{\sum_{i=1}^{T}P(C^{(k)}_{t_i}|\boldsymbol{X}_{t_i},\boldsymbol{\mathcal{X}}_{t_i},\mathcal{C}_{t_i})f(t-t_i) +\beta P(C^{(k)}_t|B)}{\sum_{i}^{T} f(t-t_i)+\beta},
	\end{multline}
	therefore
	\begin{multline}
		p^{(k)}_t=
		\frac{1}{Z_t^{(k)}} \sum_{i=1}^{T}\Big [\mathcal{N}\left(\boldsymbol{X}_{t_i},\boldsymbol{\Sigma}_{t_i,t}\right)P(C^{(k)}_{t_i}|\boldsymbol{X}_{t_i},\boldsymbol{\mathcal{X}}_{t_i},\mathcal{C}_{t_i})f(t-t_i)
		\\+\beta p(\boldsymbol{X}_t|C^{(k)}_t,B)P(C^{(k)}_t|B)\Big], \label{eq:pk}
	\end{multline}
	where $Z_t^{(k)}$ is the appropriate normalization factor. $P(C^{(k)}_{t_i}|\boldsymbol{X}_{t_i},\boldsymbol{\mathcal{X}}_{t_i},\mathcal{C}_{t_i})$ are missing,
	but can be estimated by the model as
	\begin{equation}
		Q_t(C^{(k)})\coloneqq Q\left(C^{(k)}_{t}|\boldsymbol{X}_t;\prescript{n+1}{}{}\boldsymbol{G}_t^{(k)}\right),
	\end{equation}
	where $\prescript{n+1}{}{}\boldsymbol{G}_t^{(k)}=\left(G_t^{(jk)},\, j=0...n\right)$
	if its optimal parameters are known.
	For this, we derive the analytical form of $Q_t(C^{(k)})$.
	It is
	\begin{multline}
		q\left(\boldsymbol{X}_t;\prescript{n+1}{}{}\boldsymbol{G}_t\right)
		\\=\sum_{k=1}^{K}q(\boldsymbol{X}_t|C^{(k)}_t;\boldsymbol{G}^{ (k)}_t)Q(C^{(k)}_t;G^{ (0k)}_{t})
		=\sum_{k=1}^{K}e^{u^{(k)}_t+G^{(0k)}_{t}}
	\end{multline}
	and thus from Bayes' rule we arrive at Eq. \ref{eq:Q_model}.
	Using this estimate $Q_t(C^{(k)})$, and the fact that $E_{\mathcal{N}\left(\boldsymbol{X}_{t_i},\boldsymbol{\Sigma}_{t_i,t}\right)}\left[\boldsymbol{X}_t\right]=\boldsymbol{X}_{t_i}$ we compute $E_{p^{(k)}_t}[\boldsymbol{X}_t]$ from Eq. \ref{eq:pk}, and thus the optimal (un-normalized) parameter vector $ \prescript{}{opt}{}\boldsymbol{G}^{(k)}$ for every $k$ and $t$ with Eq. \ref{eq:G}, proving Eq. \ref{eq:F3_dynamics}.
	Lastly, through Eq. \ref{eq:G0} we find $e^{G^{ (0k)}_{t}}$ as well.
\end{proof*}
For $t=t_1$, the optimal estimate of $P_1(C^{(k)})$ is $Q\left(C^{(k)}_{1}|\boldsymbol{X}_{1};\prescript{n+1}{}{}\boldsymbol{G}_1^{(k)}\right)$, where $G_{1}^{(jk)}=W_{1}^{(jk)}$. Using this first estimate, iteratively we can calculate
the subsequent optimal parameters $\prescript{}{opt}{}\boldsymbol{G}^{ (k)}_{t}$ and $\prescript{}{opt}{}G^{ (0k)}_{t}$, and probabilities $Q_t(C^{(k)})$.
$ \boldsymbol{W}$ and $W^{(0k)}$ themselves can both be estimated with a standard technique such as Expectation-Maximization (EM), using samples from distribution $p(\boldsymbol{X}_t|B)$.
\section{Modelling the data with a linear mixture}
\label{sec:model_lin}
\subsection{The mixture model and its optimal parameters}
\begin{defn}
	\label{def:model_lin}
	Consider a mixture model distribution $q$:
	$q(\boldsymbol{X}_t)=\sum_{k=1}^{K}q(\boldsymbol{X}_t|C^{(k)}_t)\,Q(C^{(k)}_t),$
	approximating some data distribution $p$ that is also a mixture of K components.
	We choose a mixture of piecewise linear likelihood functions, specifically:
	\begin{equation}q^{(k)}\coloneqq q(\boldsymbol{X}_t|C^{(k)}_t;\boldsymbol{G}^{(k)})=\frac{1}{Z}\max\left( u^{(k)}_t ,\,0\right)
	\end{equation}
	where $Z$ is a normalization factor, and
	$u^{(k)}_t=\frac{ \boldsymbol{G}^{(k)}\cdot\boldsymbol{X}_t}{|| \boldsymbol{G}^{(k)}||\cdot||\boldsymbol{X}_t||}$, i.e. the cosine similarity of the two vectors $\boldsymbol{G}^{(k)}$ and $\boldsymbol{X}_t$.
	The joint probability is \begin{equation}q(\boldsymbol{X}_t,C^{(k)}_t)\equiv q(\boldsymbol{X}_t|C^{(k)}_t)Q(C^{(k)}_t)=\max\left(\frac{u^{(k)}_t}{Z}  ,\,0\right)Q(C^{(k)}_t)\end{equation}
	and the parametrized model approximates it as
	\begin{equation}
		q(\boldsymbol{X}_t,C^{(k)}_t;\prescript{n+1}{}{}\boldsymbol{G}_t^{(k)})\approx \max\left(\frac{u^{(k)}_t+G^{(0k)}}{Z},\,0\right).\end{equation}
	
	In addition, the parameter vectors are subject to the normalization constraints:
	$||\boldsymbol{G}^{(k)}||=1,\, \forall k$,
	and
	$
	\sum_{k=1}^{K}e^{G^{(0k)}}=1. \label{eq:norm_0}$
\end{defn}

\begin{thm}
	\label{thm:optcos}
	The optimal parameters of such a mixture model are
	\begin{equation}
		\prescript{}{opt}{}\boldsymbol{G}^{*(k)}=\frac{ \prescript{}{opt}{}\boldsymbol{G}^{(k)}}{|| \prescript{}{opt}{}\boldsymbol{G}^{(k)}||}=\frac{E_{p^{(k)}}\left[\boldsymbol{X}_t\right]}{||E_{p^{(k)}}\left[\boldsymbol{X}_t\right]||}, \label{eq:Gstar}
	\end{equation}
	\begin{equation}
		\text{where } \prescript{}{opt}{}\boldsymbol{G}^{(k)}=c\cdot E_{p^{(k)}}\left[\boldsymbol{X}_t\right], c\in\mathbb{R}, \label{eq:G}
	\end{equation}
	\begin{equation}
		p^{(k)}\coloneqq p(\boldsymbol{X}_t|C^{(k)}_t)
	\end{equation}
	for every $k$
	and the optimal bias parameter of each component $k$ is related to $P(C^{(k)}_t)$ as
	\begin{multline}
		\iff P(C^{(k)}_t)\\
		=\sqrt{1-\left(\prescript{}{opt}{}G^{(0k)}\right)^2}+\prescript{}{opt}{}G^{(0k)}\arccos(-\prescript{}{opt}{}G^{(0k)}). \label{eq:P_G0}
	\end{multline}
	and can be approximated as
	\begin{equation}
		\prescript{}{opt}{}G^{(0k)}\approx P(C^{(k)}_t)-1
	\end{equation}
	for every $k$.
\end{thm}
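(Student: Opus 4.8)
The plan is to treat the two claims separately: the parameter-vector identity, which I would obtain by reusing the argument of Theorem~\ref{thm:optimal} almost verbatim, and the bias relation \eqref{eq:P_G0}, which carries the genuinely new content. For the vector part I would again minimise $D_{KL}(p^{(k)}\|q^{(k)})$ over the unit vector $\boldsymbol{G}^{(k)}$, equivalently maximise $E_{p^{(k)}}[\ln q^{(k)}]$. The single extra step needed for the piecewise-linear likelihood of Definition~\ref{def:model_lin} is to note that the normaliser $Z$ of $q^{(k)}=\tfrac1Z\max(u^{(k)}_t,0)$ depends only on the fixed shape $\max(\cos\theta,0)$ integrated against the rotation-invariant measure on the sphere of normalised inputs, hence is independent of the \emph{direction} of $\boldsymbol{G}^{(k)}$. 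The objective then reduces, as before, to an increasing function of the cosine similarity between $\boldsymbol{G}^{(k)}$ and $E_{p^{(k)}}[\boldsymbol{X}_t]$, and the symmetry of the cosine (Cauchy--Schwarz) forces $\boldsymbol{G}^{(k)}\propto E_{p^{(k)}}[\boldsymbol{X}_t]$, which is the vector identity and, after normalisation, its unit form.

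For the bias I would impose the marginal-matching condition that makes the model consistent with the class priors, namely $Q(C^{(k)}_t)=\int q(\boldsymbol{X}_t,C^{(k)}_t)\,d\boldsymbol{X}_t=P(C^{(k)}_t)$, using the parametrised joint $\max\!\big(\tfrac{u^{(k)}_t+G^{(0k)}}{Z},0\big)$. Writing $g:=G^{(0k)}$ and parametrising normalised inputs by the angle $\theta$ to $\boldsymbol{G}^{(k)}$ so that $u^{(k)}_t=\cos\theta$, the rectifier is active precisely on $\{\cos\theta>-g\}$, i.e. $\theta\in(0,\arccos(-g))$. The remaining integral is elementary: with antiderivative $\sin\theta+g\theta$ it evaluates to $\sqrt{1-g^2}+g\arccos(-g)$, and the normalisation $Z$ (for the planar angular measure $Z=2$, which is exactly what makes $q^{(k)}$ a density) cancels the leading constant, yielding Eq.~\eqref{eq:P_G0}.

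The linear approximation then follows from the analytic properties of this map. Differentiating gives the clean identity $dP/dg=\arccos(-g)\ge 0$, so $P$ is increasing in $g$ with endpoints $P(-1)=0$ and $P(0)=1$; on the admissible probability range $g\in[-1,0]$ the chord through these two points is $P=g+1$, i.e. $G^{(0k)}\approx P(C^{(k)}_t)-1$, which is the stated first-order relation.

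I expect the bias integral to be the crux. The vector part is a routine transcription of Theorem~\ref{thm:optimal} once the direction-independence of $Z$ is noted, whereas the bias requires evaluating a rectified angular integral in closed form and, more delicately, pinning down the measure and the value of $Z$ so that the prefactor is exactly unity -- the clean form $\sqrt{1-g^2}+g\arccos(-g)$ is tied to the planar (cosine) measure, and I would verify that the same bookkeeping that normalises $q^{(k)}$ is the one that calibrates the prior. A secondary subtlety is justifying, for the non-exponential rectified likelihood, that the direction optimisation still isolates the mean; here I would lean on the isotropy of the noise model about each component mean to argue that the objective depends on $\boldsymbol{G}^{(k)}$ only through that single cosine.
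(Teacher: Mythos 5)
Your proposal follows essentially the same route as the paper's proof: per-component KL minimisation with an angular/symmetry argument pinning $\boldsymbol{G}^{(k)}\propto E_{p^{(k)}}[\boldsymbol{X}_t]$ (your isotropy gloss is at the same level of rigor as the paper's $\phi_0=0$ stationarity check), then the prior-matching rectified integral in the planar angular measure with $Z=2$ giving $\sqrt{1-g^2}+g\arccos(-g)$, and the chord through $(-1,0)$ and $(0,1)$ justified by the derivative identity $dP/dg=\arccos(-g)$, which you state correctly (the paper misprints it as $\arccos(-P(C^{(k)}_t))$). The only piece you assume rather than derive is $g=G^{(0k)}\in[-1,0]$: the paper establishes $G^{(0k)}\le 0$ separately, from the consistency requirement that the joint-model estimate $\left(\cos\phi^{(k)}+G^{(0k)}\right)^+\approx\left(\cos\phi^{(k)}\right)^+Q(C^{(k)}_t)\le\left(\cos\phi^{(k)}\right)^+$, a one-line addendum you would need to include since the location of the rectifier's active set $\theta\in(0,\arccos(-g))$ presupposes it.
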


\begin{proof}
	The model is optimal from a minimum cross-entropy perspective if its parameters equal those parameters $\boldsymbol{G}=\prescript{}{opt}{}\boldsymbol{G}$ that minimize the Kullback-Leibler divergence between the mixture distributions p and q:
	\begin{equation}
		\min D_{KL}(p(\boldsymbol{X}_t)||q(\boldsymbol{X}_t|\boldsymbol{G}^{*})).
	\end{equation}
	
	Because $p^{(k)}\coloneqq p(\boldsymbol{X}_t|C^{(k)}_t)$ is independent from $p^{(l)}$, and $q^{(k)}\coloneqq q(\boldsymbol{X}_t|C^{(k)}_t, \boldsymbol{G}^{(k)})$ is independent from $ \boldsymbol{G}^{(k)}$ for every $l\neq k$,
	we can find the set of parameters that minimize the KL divergence of the mixtures, by minimizing the KL divergence of each component $k$:
	\begin{equation}\min D_{KL}(p^{(k)}||q^{(k)}),\, \forall k.
	\end{equation}
	
	As in Theorem \ref{thm:optimal}, it follows that
	\begin{equation}
		\prescript{n}{opt}{}\boldsymbol{G}^{(k)}=\arg \max E_{p^{(k)}}\left[\ln q^{(k)}\right].
	\end{equation}
	The parameter vectors that bring this expected value to a (local) maximum are found by demanding that its derivative equal zero:
	\begin{equation}
		\frac{\partial  E_{p^{(k)}}\left[\ln q^{(k)}\right]}{\partial  \boldsymbol{G}^{(k)}}=0. \label{eq:partial_devG}
	\end{equation}
	Each vector $ \boldsymbol{G}^{(k)}$ defines a unique angle $\phi^{(k)}_0$ with the fixed vector $E_{p^{(k)}}\left[\boldsymbol{X}_t\right]$: \begin{equation}\phi^{(k)}_0 =\angle\left( \boldsymbol{G}^{(k)},E_{p^{(k)}}\left[\boldsymbol{X}_t\right]\right). \label{eq:phi0}
	\end{equation}
	Therefore, equivalently to Eq. \ref{eq:partial_devG}, but using this uniquely corresponding angle instead of $\boldsymbol{G}^{(k)}$, we can demand that
	\begin{equation}
		\frac{\partial  E_{p^{(k)}}\left[\ln q^{(k)}\right]}{\partial \phi^{(k)}_0}=0.\label{eq:partial}\end{equation}
	But $q^{(k)}$ is defined based on the cosine similarity $u^{(k)}$ that is the cosine of an angle different from $\phi^{(k)}_0$. In particular, it is the cosine of
	\begin{equation}\phi^{(k)}\coloneqq\angle \left( \boldsymbol{G}^{(k)},\boldsymbol{X}_t\right).
	\end{equation}
	To find the solution of \ref{eq:partial}, we will express $q^{(k)}$ with respect to $\phi^{(k)}_0$. Let $\theta^{(k)}$ be such that
	\begin{equation}\phi^{(k)}=\phi^{(k)}_0+\theta^{(k)}. \label{eq:theta}
	\end{equation}
	From the definition of the model, it is
	\begin{multline}q^{(k)}=\frac{1}{Z}\max\left( \cos\phi^{(k)} ,\,0\right)\\
		=\frac{1}{Z}\max\left(\cos\left(\phi^{(k)}_0+\theta^{(k)}\right) ,\,0\right).
	\end{multline}
	The normalization factor $Z$ can be found as follows.
	\begin{gather}\int_{0}^{2\pi} q^{(k)}d\phi^{(k)}=1 \iff\\
		\int_{0}^{2\pi} \frac{1}{Z}\max\left( \cos\phi^{(k)} ,\,0\right)d\phi^{(k)}=1\iff\\
		Z=\int_{0}^{\pi/2} \cos\phi^{(k)} d\phi^{(k)}+\int_{3\pi/2}^{2\pi} \cos\phi^{(k)} d\phi^{(k)}=2,
	\end{gather}
	i.e.
	\begin{multline}q^{(k)}=\frac{1}{2}\max\left( \cos\phi^{(k)} ,\,0\right)\\ =\frac{1}{2}\max\left(\cos\left(\phi^{(k)}_0+\theta^{(k)}\right) ,\,0\right).
	\end{multline}
	Using this form we will find the optimal angle $\prescript{}{opt}{}\phi^{(k)}_0$ from Eq. \ref{eq:partial}. For $\pi/2<\phi^{(k)}<3\pi/2$, Eq. \ref{eq:partial} is true for any $\phi^{(k)}_0$. For $0\leq\phi^{(k)}\leq\pi/2$ or $3\pi/2\leq\phi^{(k)}\leq2\pi$, it is
	\begin{gather}
		E_{p^{(k)}}\left[\frac{1}{q^{(k)}}\frac{\partial q^{(k)}}{\partial \phi^{(k)}_0}\right]=0\iff \\
		E_{p^{(k)}}\left[\frac{1}{q^{(k)}}\frac{\partial \cos\left(\phi^{(k)}\right)}{\partial \phi^{(k)}_0}\right]=0\iff \\
		E_{p^{(k)}}\left[\frac{1}{q^{(k)}}(-\sin\phi^{(k)})\right]=0\iff\\
		E_{p^{(k)}}\left[\sin\phi^{(k)}\right]=0\iff\\
		E_{p^{(k)}}\left[\phi^{(k)}\right]=0. \label{eq:E_phi_k}
	\end{gather}
	We will now show that $\phi^{(k)}_0=0$ is a solution that satisfies this condition.
	If $\phi^{(k)}_0=0$, i.e. $\angle\left( \boldsymbol{G}^{(k)},E_{p^{(k)}}\left[\boldsymbol{X}_t\right]\right)=0$, then \begin{equation}\phi^{(k)}=\angle \left( \boldsymbol{G}^{(k)},\boldsymbol{X}_t\right)=\angle \left( E_{p^{(k)}}\left[\boldsymbol{X}_t\right],\boldsymbol{X}_t\right).\end{equation}
	Also, from Eq. \ref{eq:theta}, it follows that $\theta^{(k)}=\phi^{(k)}$, because $\phi^{(k)}_0=0$.
	Therefore, \begin{gather}\theta^{(k)}=\angle \left( E_{p^{(k)}}\left[\boldsymbol{X}_t\right],\boldsymbol{X}_t\right)\nonumber\\ \implies E_{p^{(k)}}\left[\theta^{(k)}\right]=\angle \left( E_{p^{(k)}}\left[\boldsymbol{X}_t\right],E_{p^{(k)}}\left[\boldsymbol{X}_t\right]\right)=0. \label{eq:E_theta}\end{gather}
	
	From Eq. \ref{eq:theta}, and using Eq. \ref{eq:E_theta} and our assumption that $\phi^{(k)}_0=0$, it follows that $E_{p^{(k)}}\left[\phi^{(k)}\right] =E_{p^{(k)}}\left[\phi^{(k)}_0\right]+E_{p^{(k)}}\left[\theta^{(k)}\right]=0$, and this is Eq. \ref{eq:E_phi_k}. Therefore $\phi^{(k)}_0=0$ is indeed a solution that satisfies the condition of Eq. \ref{eq:E_phi_k}.
	
	Because the optimal parameter vector $\prescript{n}{opt}{}\boldsymbol{G}^{(k)}$ satisfies this condition, then, from the definition of $\phi^{(k)}_0$ in Eq. \ref{eq:phi0}, it follows that
	\begin{equation}
		\prescript{n}{opt}{}\boldsymbol{G}^{(k)}=cE_{p^{(k)}}\left[\boldsymbol{X}_t\right] \label{eq:Gq_cosine}
	\end{equation}
	for any $c>0$.
	Enforcement of the requirement for normalization of the vector leads to the unique solution $ \prescript{}{opt}{}\boldsymbol{G}^{*(k)}$.
	
	Having determined $\prescript{n}{opt}{}\boldsymbol{G}^{(k)}$, it remains to determine the optimal value of the parameters $G^{(0k)}$ for every $k$, to complete the model of Def. \ref{def:model_lin}.
	We use the fact that
	\begin{gather}
		Q(C^{(k)}_t)=\int_{\phi^{(k)}}q(\phi^{(k)},C^{(k)}_t)d\phi^{(k)}
	\end{gather}
	To solve for the parameter $G^{(0k)}$, we introduce it by using, as per the model, the estimate \begin{equation}q(\phi^{(k)},C^{(k)}_t)\approx\widehat{q}(\phi^{(k)},C^{(k)}_t;\prescript{n+1}{}{}\boldsymbol{G}^{(k)})=\frac{\left(u^{(k)}_t+G^{(0k)}\right)^+}{2}.\end{equation}
	$u^{(k)}_t=\cos\phi^{(k)}\leq 0$ for $\frac{\pi}{2}\leq\phi^{(k)}\leq\frac{3\pi}{2}$.
	If $G^{(k)}_0\leq 0$, which is true, as we show at the end of this Theorem's proof, then the curve of $y=u^{(k)}_t+G^{(0k)}$ is the curve of $y=\cos\phi^{(k)}$, shifted lower by $|G^{(0k)}|$. This implies firstly that $u^{(k)}_t+G^{(0k)}=0$ when $\phi^{(k)}=\arccos(-G^{(0k)})$, and secondly that
	$0\leq \arccos(-G^{(0k)})\leq\frac{\pi}{2}$. This curve, i.e. $u^{(k)}_t+G^{(0k)}$, is now non-negative only for $0\leq\phi^{(k)}\leq\arccos(-G^{(0k)})$
	and for $\frac{3\pi}{2}+\arccos(-G^{(0k)}) \leq \phi^{(k)}\leq2\pi$. So,
	\begin{multline} Q(C^{(k)}_t)=\int_{0}^{\arccos(-G^{(0k)})}\frac{\cos\phi^{(k)}+G^{(0k)}}{2}d\phi^{(k)}\\
		+\int_{\frac{3\pi}{2}+\arccos(-G^{(0k)})}^{2\pi}\frac{\cos\phi^{(k)}+G^{(0k)}}{2}d\phi^{(k)}\\
		=2\cdot\frac{1}{2}\left.\left(\sin\phi^{(k)} +  G^{(0k)} \phi^{(k)}\right)\right|_0^{\arccos(-G^{(0k)})}\\
		=\sin\arccos(-G^{(9k)})+G^{(k)}_0\arccos(-G^{(0k)})\end{multline}
	\begin{multline}
		\iff Q(C^{(k)}_t)\\
		=\sqrt{1-\left(G^{(0k)}\right)^2}+G^{(0k)}\arccos(-G^{(0k)}). \label{eq:Q_G0_proof}
	\end{multline}
	by demanding that
	$Q(C^{(k)}_t)=P(C^{(k)}_t) \quad \forall k,
	$, this proves Eq. \ref{eq:P_G0} of the Theorem.
	This equation cannot be solved for $G^{(0k)}$ analytically.
	
	Our experimental results (see main paper, "Application to video recognition" section) show that the model can function well in practice without updating the bias or threshold parameters $G^{(0k)}$. Still, here, we do derive an approximation of $G^{(0k)}$ as a function of $P(C^{(k)}_t)$. We find from Eq. \ref{eq:P_G0} that $G^{(0k)}=-1 \implies P(C^{(k)}_t)=0$ and $G^{(0k)}=0 \implies Q(C^{(k)}_t)=1$. In addition, numerically, we find that $P(C^{(k)}_t)$ is an increasing function of $G^{(0k)}$.
	
	We will now quantify how non-linear this curve of Eq. \ref{eq:P_G0} is, in order to assess how reasonable it is to use a line as an estimate of the curve. The line that passes through the two points that we specified on the $(G^{(0k)}, P(C^{(k)}_t))$ plane, namely the points (-1, 0) and (0, 1), is described by $\widehat{P(C^{(k)}_t)}=\widehat{G^{(0k)}}+1$, i.e. it has a derivative equal to 1. We find that the derivative of the curve of Eq. \ref{eq:Q_G0_proof} is $\frac{\partial Q(C^{(k)}_t)}{\partial G^{(0k)}}=\arccos(-P(C^{(k)}_t))$. In the range $-1\leq G^{(0k)}\leq 0$, the derivative increases from a value of 0 at $G^{(0k)}=-1$, and crosses the value of 1 at $G^{(0k)}=-\cos(1)$. I.e., the curve initially diverges from the straight line, and at the point $G^{(0k)}=-\cos(1)$ it begins to converge. Therefore, at this point the line has its maximum distance from the curve. Specifically, the difference between the value of $P(C^{(k)}_t)$ at this point and its approximation $\widehat{P(C^{(k)}_t)}$ by the line is equal to $cos(1)-\sqrt{\sin^2(1)-\cos(1)}$ which, compared to the real value $P(C^{(k)}_t)=\sqrt{\sin^2(1)-\cos(1)}$ is a 31.9\% overestimation by the line, and that is the maximum divergence of the line, showing that it can be reasonable to use the line
	\begin{equation}
		\prescript{}{opt}{}\widehat{G^{(0k)}}=P(C^{(k)}_t)-1 \quad \forall k \label{eq:G0_cosine}
	\end{equation}
	as an approximation of the optimal value $\prescript{}{opt}{}G^{(0k)}$ of parameter $G^{(0k)}$.
\end{proof}
Our derivation relied on the assumption that $G^{(0k)}\leq 0$. We will show now that this is true.
\begin{equation}\widehat{q}(\phi^{(k)},C^{(k)}_t;\prescript{n+1}{}{}\boldsymbol{G}^{(k)})\approx q(\phi^{(k)},C^{(k)}_t)
\end{equation}
\begin{multline}\implies\left(\cos\phi^{(k)}+G^{(0k)}\right)^+\\
	\approx\left(\cos\phi^{(k)}\right)^+Q(C^{(k)}_t)\leq\left(\cos\phi^{(k)}\right)^+
\end{multline}
\begin{equation}\implies G^{(0k)}\leq 0.
\end{equation}
\subsection{The optimal parameters given the history of observations}
\begin{thm}
	Let it be $ \boldsymbol{W}\coloneqq E_{p^{(k)}_B}[\boldsymbol{X}_t]$, and $W^{(0k)}\coloneqq P(C^{(k)}_t|B)-1$.
	The optimal parameters of the model of Definition \ref{def:model_lin}, for data following the assumptions of paragraph \ref{sec:assumptions}, are, for each component $k$,
	\begin{multline}
		\prescript{n}{opt}{}\boldsymbol{G}^{(k)}\\
		= \frac{1}{\beta\left( W^{(0k)}+1\right)}\sum_{i=1}^{T}\boldsymbol{X}_{t_i}Q_{t_i}(C^{(k)})f(t-t_i) + \boldsymbol{W}^{(k)}, \label{eq:F3_dynamics_cos}
	\end{multline} and
	\begin{multline}
		\prescript{}{opt}{}G^{(0k)}\approx\prescript{}{opt}{}\widehat{G^{(0k)}}\\
		=\frac{\beta}{\sum_{i}^{T} f(t-t_i)+\beta}\\
		\cdot\left(\sum_{i=1}^{T}(Q_{t_i}(C^{(k)})-1)\frac{f(t-t_i)}{\beta} + W^{(0k)}\right), \label{eq:thr_cos}
	\end{multline}
	\begin{equation}
		\text{where }     Q_t(C^{(k)})=\frac{\max\left(u^{(k)}_t+G^{(0k)},\,0\right)}{\sum_{l=1}^{K}\max\left(u^{(l)}_t+G^{(0l)},\,0\right)}. \label{eq:Q_model_}
	\end{equation}
\end{thm}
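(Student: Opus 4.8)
The plan is to mirror the proof of Theorem \ref{thm:opt_params} almost step for step, since the two theorems differ only in the parametrization of the likelihood ($\max(u,0)$ versus $e^{u}$) and in the relationship between the bias parameter and the class prior. The decisive observation is that the characterization of the optimal weight vector as $\prescript{}{opt}{}\boldsymbol{G}^{(k)} = c\,E_{p^{(k)}}[\boldsymbol{X}_t]$ (Eq. \ref{eq:G}) was already established in Theorem \ref{thm:optcos} to hold for the piecewise-linear model exactly as it does for the exponential one. Consequently the entire derivation of the mean vector carries over unchanged, and only the treatment of the bias $G^{(0k)}$ needs the linear-model relation (Eq. \ref{eq:G0_cosine}) in place of the exponential identity $e^{G^{(0k)}}=P(C^{(k)}_t)$.

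For the weight vector I would begin from the history-conditioned component density $p^{(k)}_t$ in Eq. \ref{eq:pk}, which was obtained from the data distribution of Theorem \ref{thm:pdf} (Eq. \ref{eq:data_generative_model}) independently of the model's functional form and is therefore equally valid here. I then replace the unknown past posteriors $P(C^{(k)}_{t_i}|\dots)$ by the model estimates $Q_{t_i}(C^{(k)})$, now using the linear form of $Q$ in Eq. \ref{eq:Q_model_}. Taking the expectation $E_{p^{(k)}_t}[\boldsymbol{X}_t]$ and invoking $E_{\mathcal{N}(\boldsymbol{X}_{t_i},\boldsymbol{\Sigma}_{t_i,t})}[\boldsymbol{X}_t]=\boldsymbol{X}_{t_i}$ together with $E_{p^{(k)}_B}[\boldsymbol{X}_t]=\boldsymbol{W}$ collapses the Gaussian terms to the past samples $\boldsymbol{X}_{t_i}$ and the novelty term to $\boldsymbol{W}$. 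Applying $\prescript{}{opt}{}\boldsymbol{G}^{(k)}=c\,E_{p^{(k)}}[\boldsymbol{X}_t]$ and absorbing the normalizer $Z_t^{(k)}$ into the free scalar $c$ — choosing $c$ so that the coefficient of $\boldsymbol{W}$ equals one — then yields Eq. \ref{eq:F3_dynamics_cos}. The prefactor $1/[\beta(W^{(0k)}+1)]$ appears because $P(C^{(k)}_t|B)=W^{(0k)}+1$ by definition, which here replaces the factor $e^{W^{(0k)}}$ of the exponential case.

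For the bias I would take the posterior $P(C^{(k)}_t|\boldsymbol{\mathcal{X}}_t,\mathcal{C}_t)$ already computed inside the proof of Theorem \ref{thm:opt_params} as a ratio with denominator $\sum_{i}f(t-t_i)+\beta$, substitute the estimates $Q_{t_i}(C^{(k)})$ and $P(C^{(k)}_t|B)=W^{(0k)}+1$, and then apply the linear approximation $\prescript{}{opt}{}G^{(0k)}\approx P(C^{(k)}_t)-1$ from Eq. \ref{eq:G0_cosine}. Writing $S=\sum_{i}f(t-t_i)+\beta$ for the common denominator, the routine but essential piece of algebra is that subtracting $1=S/S$ converts the numerator terms $Q_{t_i}(C^{(k)})f(t-t_i)$ and $\beta(W^{(0k)}+1)$ into $(Q_{t_i}(C^{(k)})-1)f(t-t_i)$ and $\beta W^{(0k)}$, since the $-\beta$ coming from $-1\cdot S$ cancels the $+\beta$ inside $\beta(W^{(0k)}+1)$ while $-\sum_{i}f(t-t_i)$ distributes across the sum. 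Factoring out $\beta/S$ then gives exactly Eq. \ref{eq:thr_cos}.

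The main obstacle is conceptual rather than computational: one must verify that no intermediate step of Theorem \ref{thm:opt_params} secretly relied on the exponential form of $q^{(k)}$ beyond the shared identity $\prescript{}{opt}{}\boldsymbol{G}^{(k)}=c\,E_{p^{(k)}}[\boldsymbol{X}_t]$. Once that is confirmed, the weight-vector formula is immediate, and only the bias — where the exact exponential identity degrades into the linear approximation of Eq. \ref{eq:G0_cosine}, which is precisely what forces the ``$\approx$'' in Eq. \ref{eq:thr_cos} — demands genuine care.
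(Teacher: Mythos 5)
Your proposal is correct and follows essentially the same route as the paper's own proof: invoking Theorem \ref{thm:optcos} to get $\prescript{}{opt}{}\boldsymbol{G}^{(k)}=c\,E_{p^{(k)}}[\boldsymbol{X}_t]$, computing $E_{p^{(k)}_t}[\boldsymbol{X}_t]$ from Eq. \ref{eq:pk} with $Q_{t_i}(C^{(k)})$ substituted for the unknown past posteriors and $c$ chosen to normalize the coefficient of $\boldsymbol{W}^{(k)}$, and then deriving the bias by expanding $P(C^{(k)}_t)$ over the events $A_{t_i}$ and $B$, applying the linear approximation $\prescript{}{opt}{}G^{(0k)}\approx P(C^{(k)}_t)-1$, and carrying out exactly the subtract-$S/S$ algebra you describe. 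Your closing caveat --- that the exponential form of $q^{(k)}$ must not be used beyond the shared mean-characterization, and that the ``$\approx$'' in Eq. \ref{eq:thr_cos} is forced by Eq. \ref{eq:G0_cosine} --- matches the paper's reasoning precisely.
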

\begin{proof}
	Theorem \ref{thm:optcos} indicates that
	\begin{equation}
		\prescript{}{opt}{}\boldsymbol{G}^{(k)}=c\cdot E_{p^{(k)}}\left[\boldsymbol{X}_t\right],\, c>0
	\end{equation}
	for every $k$, where $\boldsymbol{X}_t\sim p(\boldsymbol{X}_t)$ from Eq. \ref{eq:data_generative_model}. As in the proof of Theorem \ref{thm:opt_params}, we can use Eq. \ref{eq:pk}, the fact that $E_{\mathcal{N}\left(\boldsymbol{X}_{t_i},\boldsymbol{\Sigma}_{t_i,t}\right)}\left[\boldsymbol{X}_t\right]=\boldsymbol{X}_{t_i}$, and, as an estimate of $P(C^{(k)}_{t_i}|\boldsymbol{X}_{t_i},\boldsymbol{\mathcal{X}}_{t_i},\mathcal{C}_{t_i})$, the model of Definition \ref{def:model_lin}, to find the parameters $\prescript{}{opt}{}\boldsymbol{G}^{(k)}$ for this data distribution $p(\boldsymbol{X}_t)$, given past observations.
	
	For each time instance $t_i$, $P(C^{(k)}_{t_i}|\boldsymbol{X}_{t_i},\boldsymbol{\mathcal{X}}_{t_i},\mathcal{C}_{t_i})$ can be estimated by using
	\begin{equation}
		Q_t(C^{(k)})\coloneqq Q\left(C^{(k)}_{t}|\boldsymbol{X}_t;\prescript{n+1}{}{}\boldsymbol{G}_t\right),
	\end{equation}
	if the optimal parameters are known.
	For this, we will now derive the parametric form of $Q_t(C^{(k)})$.
	From Bayes' rule, it is
	\begin{multline}
		Q_t(C^{(k)})=\frac{q(\boldsymbol{X}_t,C^{(k)}_t;\prescript{n+1}{}{}\boldsymbol{G}^{(k)}_t)}{q(\boldsymbol{X}_t|;\prescript{n+1}{}{}\boldsymbol{G}_t)}\\
		=\frac{q(\boldsymbol{X}_t,C^{(k)}_t;\prescript{n+1}{}{}\boldsymbol{G}^{(k)}_t)}{\sum_{l=1}^{K}q(\boldsymbol{X}_t,C^{(l)}_t;\prescript{n+1}{}{}\boldsymbol{G}^{(l)}_t)},
	\end{multline}
	where $q(\boldsymbol{X}_t,C^{(k)}_t;\prescript{n+1}{}{}\boldsymbol{G}^{(k)}_t)$ is approximated by the model of Definition \ref{def:model_lin} as
	\begin{equation}
		q(\boldsymbol{X}_t,C^{(k)}_t;\prescript{n+1}{}{}\boldsymbol{G}^{(k)})\approx \frac{1}{2} \max\left(u^{(k)}_t+G^{(0k)},\,0\right)
	\end{equation}
	i.e.
	\begin{equation}
		Q_t(C^{(k)})=\frac{\max\left(u^{(k)}_t+G^{(0k)},\,0\right)}{\sum_{l=1}^{K}\max\left(u^{(l)}_t+G^{(0l)},\,0\right)}. \label{eq:Q_model_cos}
	\end{equation}
	
	Let \begin{equation}
		W^{(k)}_0\coloneqq P(C^{(k)}_t|B)-1,
	\end{equation}
	and let
	\begin{equation}
		\boldsymbol{W}^{(k)}\coloneqq E_{p^{(k)}_B}[\boldsymbol{X}_t].
	\end{equation}
	Then, using the definition of Eq. \ref{eq:pk}, we find
	\begin{multline}
		E_{p^{(k)}_t}[\boldsymbol{X}_t]=
		\frac{1}{Z_t^{(k)}}\left( \sum_{i=1}^{T}\boldsymbol{X}_{t_i}P(C^{(k)}_{t_i}|\boldsymbol{X}_{t_i},\boldsymbol{\mathcal{X}}_{t_i},\mathcal{C}_{t_i})f(t-t_i)\right.\\
		+\Bigg.\beta  \boldsymbol{W}^{(k)}\left( W^{(k)}_0+1\right)\Bigg). \label{eq:expectationY}
	\end{multline}
	Therefore, because
	\begin{equation}
		\prescript{n}{opt}{}\boldsymbol{G}^{(k)}=c\cdot E_{p^{(k)}}\left[\boldsymbol{X}_t\right]
	\end{equation}
	for any chosen positive $c$, we can choose $c$ such that
	\begin{multline}
		\prescript{n}{opt}{}\boldsymbol{G}^{(k)}\\
		=\frac{1}{\beta\left(W^{(0k)}+1\right)} \sum_{i=1}^{T}\boldsymbol{X}_{t_i}Q_{t_i}(C^{(k)})f(t-t_i) + \boldsymbol{W}^{(k)}\label{eq:gfw_dynamics_cos},
	\end{multline}
	where we used the model $Q_{t_i}(C^{(k)})$ as an estimate of $P(C^{(k)}_{t_i}|\boldsymbol{X}_{t_i},\boldsymbol{\mathcal{X}}_{t_i},\mathcal{C}_{t_i})$.
	This proves Eq. \ref{eq:F3_dynamics_cos} of the Theorem.
	
	We will now also find the optimal bias parameters $\prescript{}{opt}{}G^{(k)}_0$.
	In Theorem \ref{thm:optcos} we showed that this can be approximated relatively well as
	\begin{gather}
		\prescript{}{opt}{}G^{(0k)}\approx\prescript{}{opt}{}\widehat{G^{(0k)}}=P(C^{(k)}_t)-1.
	\end{gather}
	Therefore, using
	\begin{gather}
		P(C^{(k)}_t) = \sum_{i=1}^{T} P(C^{(k)}_t|A_{t_i})P(A_{t_i})+P(C^{(k)}_t|B)P(B)\iff\\
		P(C^{(k)}_t)= \frac{\sum_{i=1}^{T}P(C^{(k)}_{t_i}|\boldsymbol{X}_{t_i},\boldsymbol{\mathcal{X}}_{t_i},\mathcal{C}_{t_i})f(t-t_i) +\beta\left( W^{(0k)}+1\right)}{\sum_{i}^{T} f(t-t_i)+\beta},
	\end{gather}
	we find that
	\begin{multline}
		\prescript{}{opt}{}G^{(0k)}\approx\frac{\sum_{i=1}^{T}P(C^{(k)}_{t_i}|\boldsymbol{X}_{t_i},\boldsymbol{\mathcal{X}}_{t_i},\mathcal{C}_{t_i})f(t-t_i) +\beta\left( W^{(0k)}+1\right)}{\sum_{i}^{T} f(t-t_i)+\beta}-1\\
		=\frac{\beta}{\sum_{i}^{T} f(t-t_i)+\beta}\\
		\cdot\left[\sum_{i=1}^{T}(P(C^{(k)}_{t_i}|\boldsymbol{X}_{t_i},\boldsymbol{\mathcal{X}}_{t_i},\mathcal{C}_{t_i})-1)\frac{f(t-t_i)}{\beta} + W^{(0k)}\right]. \label{eq:gf0_dynamics_cos}
	\end{multline}
	By making use of the model $Q_t(C^{(k)})$ as an estimate of $P_t(C^{(k)})$ in Eq. \ref{eq:gf0_dynamics_cos}, we prove Eq. \ref{eq:thr_cos} of the Theorem.
\end{proof}

\section{Clustering interpretation of the model and of its optimization: Elastic Clustering}
\label{sec:clustering_sup}
\begin{Algorithm}[h]
	\begin{algorithmic}
		\caption{Elastic Clustering}
		\FORALL {centroids $k$}
		\STATE $\prescript{n+1}{}{}\boldsymbol{F}^{(k)} \gets \boldsymbol{0}$\;
		\STATE $\prescript{n+1}{}{}\boldsymbol{G}^{(k)} \gets \prescript{n+1}{}{}\boldsymbol{W}^{(k)}$\COMMENT{Initialize centroids. eg with EM}
		\ENDFOR
		\FOR {$t = 1$ \textbf{to} $duration$}{
			\STATE $\prescript{n+1}{}{}\boldsymbol{F}^{(k)} \gets \prescript{n+1}{}{}\boldsymbol{F}^{(k)} - \lambda\cdot\prescript{n+1}{}{}\boldsymbol{F}^{(k)}$ \COMMENT{Elastically relax F}
			\STATE $\boldsymbol{G}^{(k)} \gets \boldsymbol{W}^{(k)}+\boldsymbol{F}^{(k)}$ \COMMENT{Update generative model with relaxed parameters}\;
			\STATE $G^{(0k)} \gets (W^{(0k)}+F^{(0k)})/Z^{(0)}_{t}$ \COMMENT{Update prior probabilities of the model. $Z^{(0)}_{t}$ used for possible normalization}
			\STATE \textbf{input} $\boldsymbol{X}$\;
			\IF{$\boldsymbol{X} \neq \boldsymbol{0}$} {
				\FOR {all centroids $k$}{
					\STATE $u^{(k)}\gets u(\boldsymbol{X},\boldsymbol{G}_k)$\COMMENT{Compute input-centroid proximity, i.e. summed weighted input of neuron $k$}
					\STATE $q_{X}^{(k)}\gets q(u^{(k)}, G^{(0k)})$\COMMENT{Compute joint probability of input and cause $k$, i.e. un-normalized activation of neuron, e.g. exponential activation}
				}
				\ENDFOR
				\STATE $q\_all \gets$ set of all $q_{X}^{(k)}$\;
				\FOR{all centroids $k$}{
					\STATE $Q_k\gets Q\left(q_{X}^{(k)},\, q\_all\right)$\COMMENT{Inference of posterior probability. eg $Q=$softmax, i.e. normalized activation of exponential neuron $k$, or $Q=$max}
					\STATE $\boldsymbol{F}^{(k)} \gets \boldsymbol{F}^{(k)}+\gamma\boldsymbol{X}Q^{(k)}$ \COMMENT{Input-centroid attraction. Hebbian aspect of synaptic plasticity}\;
					\STATE $F^{(0k)} \gets F^{(0k)}+\gamma Q^{(k)}$
				}\ENDFOR
			}\ENDIF
		}
		\ENDFOR
		\label{algo:elastic}
	\end{algorithmic}
\end{Algorithm}
Both the exponential and the linear model, i.e. Eq. \ref{eq:F3_dynamics} and Eq. \ref{eq:F3_dynamics_cos} show that $ \prescript{}{opt}{}\boldsymbol{G}^{ (k)}$ is the weighted average of two points in the n-dimensional space, one being the fixed term $\boldsymbol{W}^{(k)}$ and the other being a dynamic term, specifically a discrete convolution of $\boldsymbol{X}_{t_i} Q_{t_i}(C^{(k)})$ with $f(t)$, be it $ \boldsymbol{F}^{(k)}_t$:
\begin{equation}
	\boxed{ \boldsymbol{G}^{(k)}_t=
		\boldsymbol{F}^{(k)}_t
		+
		\boldsymbol{W}^{(k)}}.
\end{equation}
From the same Eq. \ref{eq:F3_dynamics} or Eq. \ref{eq:F3_dynamics_cos} it can be seen that, at every posterior inference result $Q^{(k)}_t$, the dynamic term is incremented as follows:
\begin{equation}
	\boxed{\boldsymbol{F}^{(k)}_t\leftarrow
		\boldsymbol{F}^{(k)}_t+
		\gamma\boldsymbol{X}_t
		Q^{(k)}_t}, \label{eq:F_sup}
\end{equation}
where $\gamma$ is a positive constant, specifically $\gamma\coloneqq \frac{f(0)}{\beta e^{W^{(k)}_0}}$ in the case of the exponential model (Eq.
\ref{eq:F3_dynamics}), and $\gamma\coloneqq \frac{f(0)}{\beta\left( W^{(k)}_0+1\right)}$ in the case of the linear model (Eq.
\ref{eq:F3_dynamics_cos}). In addition, this term $\boldsymbol{F}^{(k)}_t$ subsequently decays continuously according to the dynamics of $f(t)$, such
that $\boldsymbol{G}^{(k)}_t$ relaxes towards the fixed resting point $\boldsymbol{W}^{(k)}$. If $f(t)$ is exponential with a rate $\lambda$, then
\begin{equation} \boxed{\frac{d\boldsymbol{F}^{(k)}_t}{dt}=-\lambda\boldsymbol{F}^{(k)}_t}.\end{equation}
Eq. \ref{eq:F_sup} shows that each input $\boldsymbol{X}_{t_i}$ attracts $\boldsymbol{G}^{ (k)}$ in proportion to the inferred probability $Q_{t_i}(C^{(k)})$ that $\boldsymbol{X}_{t_i}$'s label was $C^{(k)}$. In addition, the attraction by each past input is in proportion to the value of a temporally evolving kernel $f(t)$. $f$ ultimately decays to zero, as assumed in Section \ref{sec:assumptions}, Assumption \ref{as:f}, for example exponentially.
That is, the movement of $\boldsymbol{G}^{ (k)}$ towards the inputs is elastic, as the dynamics is relaxation of $\boldsymbol{G}^{ (k)}$ towards a resting point $\boldsymbol{W}^{(k)}$. Therefore, the model can be described as a sequential clustering algorithm based on elastic centroids (Fig. \ref{fig:clustering}\textbf{a} and Algorithm 1).
Every input is associated by the algorithm with each cluster to a degree $Q_t^{(k)}$ that depends on the proximity of the input vector to the
centroid. If the proximity function for the clustering is specifically chosen to be the cosine similarity, then the Elastic Clustering can compute
the optimal parameters of the two generative models we described in \ref{def:model} and \ref{def:model_lin}. In particular, if $Q_t^{(k)}$ is
implemented as in Eq. \ref{eq:Q_model}, and in turn its bias parameter $G_{0t}^{(k)}$ also adapts to the input elastically with time as per Eq.
\ref{eq:thr}, then the clustering algorithm realizes the exponential model and the associated Bayesian optimization that were described in Section
\ref{sec:model}. On the other hand, if $Q_t^{(k)}$ is instead implemented as in Eq. \ref{eq:Q_model_}, and its bias parameter adapts as per Eq.
\ref{eq:thr_cos}, then the clustering algorithm realizes and approximately optimizes the linear model of Section \ref{sec:model_lin}.

\section{Neurosynaptic equivalence of the optimal model: ST-STDP}
\label{sec:neuro_equiv}
\subsection{Stochastic or spiking input}
\label{sec:stoch} The model remains functional if we assume that the samples $\boldsymbol{X}_{t}$ are not visible to the observer, but what is
visible instead is a stochastic measurement $\boldsymbol{X}^{stoch}_{t}$, drawn randomly from a probability distribution whose mean is the underlying
data point: $E[\boldsymbol{X}^{stoch}_{t}]=\boldsymbol{X}_{t}.$ An example of such inputs are Poisson-distributed rate-coded spiking inputs, a common
method of input coding in models of SNNs \cite{nessler2013PLoS,diehl2015FCN}, as well as a good model of biological rate-coding neurons
\cite{hubel1959JP,gerstner2014Cambridge,brette2015FSN}. To use the Elastic Clustering model, we need an estimate of the -- now hidden -- measured
data point $\boldsymbol{X}_{t}$. If the measurement distribution were stationary, then the underlying data point could be estimated as the uniformly
weighted average of the noisy measurements over the past. However, as each object is transformed, and the object that is visible switches, the
measurement distribution cannot be assumed stationary. Instead, the estimate of the data point is
\begin{multline}
	\boldsymbol{X}_{t}=
	E[\boldsymbol{X}^{stoch}_{t}]
	=\int\boldsymbol{X}^{stoch} p\left(\boldsymbol{X}^{stoch}\right) d\boldsymbol{X}^{stoch}
	\\=\int_{\tau=0}^{t}\boldsymbol{X}^{stoch}_{t-\tau} p\left(\boldsymbol{X}_{t}=\boldsymbol{X}_{t-\tau}\right) d\tau. \label{eq:stoch_sup}
\end{multline}
$p\left(\boldsymbol{X}_{t}=\boldsymbol{X}_{t-\tau}\right)$ can be derived from the dynamics of the objects in the environment.
\begin{multline}
	p\left(\boldsymbol{X}_{t}=\boldsymbol{X}_{t-\tau}\right)
	\\=p\left(\boldsymbol{X}_{t}=\boldsymbol{X}_{t-\tau}|O_t=O_{t-\tau}\right)P(O_t=O_{t-\tau})
	\\+p\left(\boldsymbol{X}_{t}=\boldsymbol{X}_{t-\tau}|O_t\neq O_{t-\tau}\right)P(O_t\neq O_{t-\tau})
	\\=p\left(\boldsymbol{X}_{t}=\boldsymbol{X}_{t-\tau}|O_t=O_{t-\tau}\right)e^{-\alpha\tau},
	\label{eq:stoch_cases}
\end{multline}
because the observed objects are replaced according to a Poisson process of rate $\alpha$ (Section \ref{sec:assumptions}, Assumption \ref{as:poisson1}), and because $p\left(\boldsymbol{X}_{t}=\boldsymbol{X}_{t-\tau}|O_t\neq O_{t-\tau}\right)=0$.
But we have concluded in the proof of Theorem \ref{thm:pdf} that, due to the assumptions of continuity, each object is generated by an underlying Wiener process. So the resulting probability density function is a multivariate Gaussian:
\begin{multline}
	p(\boldsymbol{X}_{t}|\boldsymbol{X}_{t_i}, O_t=O_{t_i})\equiv \mathcal{N}\left(\boldsymbol{X}_{t_i},\boldsymbol{\Sigma}_{t_i,t}\right)\\
	=\frac{\exp\left(-\frac 1 2 ({\boldsymbol{X}_t}-{\boldsymbol{X}_{t_i}})^\mathrm{T}{\boldsymbol\Sigma}_{t_i,t}^{-1}({\boldsymbol{X}_t}-{\boldsymbol{X}_{t_i}})\right)}{\sqrt{(2\pi)^n|\boldsymbol\Sigma_{t_i,t}|}},\label{eq:multinormal}\end{multline}
where $\boldsymbol\Sigma_{t_i,t}$ is the covariance matrix of the distribution, whose determinant $|\boldsymbol\Sigma_{t_i,t}|$ increases linearly with time $\tau=t-t_i$ with some positive rate $\sigma^2$, i.e.
$|\boldsymbol\Sigma_{t_i,t}|=\sigma^2\,\tau.$
Thus, because $\boldsymbol{X}_{t_i}$ is the mean of this distribution, it follows that
\begin{equation}
	p(\boldsymbol{X}_{t}=\boldsymbol{X}_{t_i}|\boldsymbol{X}_{t_i}, O_t=O_{t_i})
	=\frac{1}{\sqrt{(2\pi)^n\sigma^2\,\tau}}.
\end{equation}
So, using this result in Eq. \ref{eq:stoch_cases}, we conclude from Eq. \ref{eq:stoch_sup}, that the underlying data point at time $t$ is estimated
as proportional to the weighted integral of past measurements, where the weightings decay with $\tau^{-1/2}$ and with $e^{-\alpha \tau}$:
\begin{equation}
	\boxed{\boldsymbol{X}_{t}= \frac{1}{(2\pi)^{n/2}\cdot\sigma}
		\int_{\tau=0}^{t}\boldsymbol{X}^{stoch}_{t-\tau} \tau^{-1/2}e^{-\alpha \tau} d\tau}. \label{eq:stochX}
\end{equation}
This estimate of the input, used with Eq. \ref{eq:F3_dynamics}, enables the estimation of the optimal parameters, when inputs are stochastic measurements.

\subsection{Equivalence to Short-Term Hebbian plasticity}
\label{sec:neuro_exp} The cosine similarity between the input vector and each centroid's parameters underpins both the exponential and the linear
model. This similarity is precisely computed by a linear neuron that receives normalized inputs
$\boldsymbol{X}_t^*\coloneqq\frac{\boldsymbol{X}_t}{||\boldsymbol{X}_t||}$, and that normalizes its vector of synaptic efficacies:
$\boldsymbol{G}^{*(k)}_t\coloneqq\frac{\boldsymbol{G}_t}{||\boldsymbol{G}_t||}$.The cosine similarity is then the neuron's summed weighted input:
$u^{(k)}_t= \boldsymbol{G}^{*(k)}_t\cdot\boldsymbol{X}_t^*$ . Then, a set of $K$ such neurons in a soft-max configuration, each with a bias term
$G^{(0k)}_{t}$, computes the functions $Q_t^{(k)}$ of the exponential model (Section \ref{sec:model}, Eq. \ref{eq:Q_model}). The soft-max-configured
neurons implement and optimize the Bayesian generative model and the equivalent clustering model, if their parameters are set according to Eq.
\ref{eq:F3_dynamics}. The function $Q_t^{(k)}$ of the linear model (Section \ref{sec:model_lin}, Eq. \ref{eq:Q_model_} can also be computed neurally.
In particular, a rectified linear unit (ReLU) with normalized inputs and synaptic efficacies can compute the function
$\left(u^{(k)}_t+G^{(0k)}\right)^+=\max\left(u^{(k)}_t+G^{(0k)},\,0\right)$ and thus a set of $k$ such ReLU units can compute the ratio-based
function of Eq. \ref{eq:Q_model_}.

Furthering the neural analogy, Eq. \ref{eq:F3_dynamics} and Eq. \ref{eq:F3_dynamics_cos} show that, in both models, neuron $k$ has a resting weight
vector $\boldsymbol{W}^{(k)}$. It also shows that for a pair of input $\boldsymbol{X}_{t_i}$ and subsequent neuronal output $Q_{t_i}(C^{(k)})$, each
element of the parameter vector, i.e. the overall efficacy $G^{(jk)}=W^{(jk)}+F^{(jk)}$ of the j-th synapse, changes by a quantity proportional to
both the pre-synaptic input and the immediately subsequent post-synaptic neuron's output (also summarized in Eq. \ref{eq:F_sup}). This is a Hebbian
update similar to an STDP rule with a very short window for the timing dependence. The dynamics of the bias term for both models (Eq. \ref{eq:thr},
and, respectively, Eq. \ref{eq:thr_cos}) are a form of Hebbian short-term sensitization of the neuron with each input, that also depends on the
output $Q_{t_i}(C^{(k)})$.

If inputs are stochastic, e.g. rate-coded spike trains, using Eq. \ref{eq:stochX} with Eq. \ref{eq:F3_dynamics} or Eq. \ref{eq:F3_dynamics_cos},
shows that a synaptic efficacy increment occurs for each pair of pre- and post-synaptic activity (Fig. \ref{fig:data_model}\textbf{d}-\textbf{f}), with a timing
dependence (Fig. \ref{fig:clustering}\textbf{b}) such that the increment becomes smaller for longer time intervals $\tau$ elapsed between pre- and post-,
specifically decaying with $v(\tau)=\tau^{-1/2}e^{-\alpha \tau}$. Such a time-dependent change of the synaptic efficacy is STDP, in a generalized
form that can involve non-spiking, i.e. analog pre-synaptic input $X_{t_i-\tau}^{(j)}$ of synapse $j$ and analog post-synaptic activation $Q^{(k)}$
of neuron $k$. Compared to standard STDP that is based on spikes, the effect of the non-binary pre- and post-synaptic activity pair is the rescaling
of the synaptic update not only by the temporal distance within the pair, but also by the analog values $X_{t_i-\tau}^{(j)}$ (or
$X^{stoch(j)}_{t_i-\tau}$) and $Q^{(k)}(t_i)$, as is described in Eq. \ref{eq:F3_dynamics} and Eq. \ref{eq:F3_dynamics_cos} for the exponential and
the linear model respectively. In addition, the same Eq. shows that the synaptic update is transient (Fig. \ref{fig:clustering}\textbf{c}, and Fig.
\ref{fig:data_model}\textbf{g}-\textbf{h}) and decays exponentially with a time constant of $\frac{1}{\beta}$. This, therefore, is a case of STP, and combined with the
STDP effect, it is a case of ST-STDP, in a generalized form that can involve non-spiking inputs and non-spiking neuronal activations.
\subsection{Implementation with spiking neurons}
\label{sec:snn_exp} In this section we describe how, in addition to spiking \textit{input} (Section \ref{sec:stoch}), spiking \textit{output} can
also be used, to derive a stochastic approximation of the optimal model. To achieve this, we arrange spiking neurons in a winner-take-all (WTA) setup
(Fig. \ref{fig:wta}\textbf{b}) --- a powerful SNN architecture \cite{neftci2013PNAS,diehl2015FCN}. Such an SNN, when equipped with STDP, can approximate EM
\cite{nessler2013PLoS}. Thus, regular, i.e. long-term, STDP can be used here as well to learn the initial resting weights $W^{(jk)}$. After the
training, during the testing phase, short-term dynamics are added to the synapses, enabling ST-STDP. Making arguments similar to those in
\cite{nessler2013PLoS}, we will show that in this set-up too, the network does compute the necessary probabilities, here in order to implement the
Elastic Clustering, in the specific form that maintains an optimized Bayesian generative model. We use a stochastic model, in which the input firing
rate is proportional to $\boldsymbol{X}_t$'s analog value, and the output neurons' firing probability is a function of the membrane potential
$u^{(k)}$. Each input spike arriving at synapse $j$ causes an excitatory post-synaptic potential modelled as a step function of amplitude equal to
the synaptic efficacy $G^{(jk)}$ with a short duration. WTA competition between the neurons ensures that no more than one of the output neurons can
fire at a time. This competition is mediated by a lateral inhibition term $I(t)$ that is common to all neurons.

\paragraph{Stochastic exponential spiking neurons}To implement the spiking analogue to the exponential model, for normalized input
$\boldsymbol{X}_t^*$, the total excitation of a neuron k is $u^{(k)}_t= \boldsymbol{G}^{*(k)}\cdot\boldsymbol{X}_t^*+G^{(0k)}-I(t)$, where $G^{(0k)}$ is the neuron's intrinsic excitability. The output neurons are exponential, i.e. each neuron's spiking behaviour is modelled as an inhomogeneous Poisson process with a firing rate
\begin{equation}
	r^{(k)}=e^{u^{(k)}}=\frac{e^{ \boldsymbol{G}^{*(k)}\cdot\boldsymbol{X}_t^*+G^{(0k)}}}{e^{I}}=Q(C^{(k)}_t;G^{(0k)})\frac{q^{(k)}}{e^{I}},
\end{equation}
which uses the model's Definition \ref{def:model}. The combined firing output of the $K$ neurons is a Poisson process with rate $r_{all}=\sum_{l=1}^{K}r^{(l)}$. Therefore, if at one time instance the network produces an output spike, the conditional probability that this spike originated from neuron $k$ is
\begin{equation}\frac{r^{(k)}}{r_{all}}=\frac{e^{ \boldsymbol{G}^{*(k)}\cdot\boldsymbol{X}_t^*+G^{(0k)}}}{\sum_{l=1}^{K}e^{ \boldsymbol{G}^{*(l)}\cdot\boldsymbol{X}_t^*+G^{(0l)}}},
\end{equation}
which is independent of the inhibition term. Importantly, this is in fact exactly $Q_t(C^{(k)})$ as given in Eq. \ref{eq:Q_model}. Thus, each output
spike is a sample from the $Q^{(k)}$ distribution, so that the emission or not of a spike by neuron $k$ at time $t$ can be used in Eq.
\ref{eq:F3_dynamics} as an instantaneous stochastic estimate of $P_t(C^{(k)})$.
\paragraph{Leaky integrate-and-fire neurons}
The rectified linear model can also be extended to spiking outputs, as the exponential model was extended in Paragraph \ref{sec:snn_exp}. Here too,
we use a stochastic model in which the input firing rate is proportional to the input's analog value, but here the output neurons' firing probability
is a ReLU function of the membrane potential $u^{(k)}$. Here too the stochastic spiking ReLU neurons are configured in a WTA connectivity, such that
competition between the neurons ensures that at each time instance no more than one of the output neurons can fire, and this competition too is
mediated by an inhibition term that is common to all neurons. However, in this case inhibition is is divisive, so that the membrane potential of a
neuron $k$ is \begin{equation}u^{(k)}(t)=\frac{ \boldsymbol{G}^{*(k)}\cdot\boldsymbol{X}^*_t+G^{(0k)}}{I(t)}.
\end{equation} This is in accordance with a multitude of studies that
have described and modelled evidence for divisive effects of inhibition on neuronal gains \cite{mitchell2003Neuron,ayaz2009JN,wilson2012Nature}. We
model each output neuron's spiking behaviour as an inhomogeneous Poisson process with a firing rate
\begin{equation}
	R^{(k)}=ReLU(u^{(k)})=\frac{ReLU( \boldsymbol{G}^{*(k)}\cdot\boldsymbol{X}^*_t+G^{(0k)})}{I(t)}.
\end{equation} The combined firing output of the K neurons is a Poisson process with rate $R_{all}=\sum_{l=1}^{K}r^{(l)}$. Therefore, if at some time instance $t$ the network produces an output spike, the conditional probability that this spike originated from neuron $k$ can be expressed as
\begin{equation}\frac{R^{(k)}}{R_{all}}=\frac{ReLU( \boldsymbol{G}^{*(k)}\cdot\boldsymbol{X}^*_t+G^{(0k)})}{\sum_{l=1}^{K}ReLU( \boldsymbol{G}^{*(l)}\cdot\boldsymbol{X}^*_t+G^{(0l)})},
\end{equation}
which is independent of the inhibition term. This is in fact exactly $Q_t(C^{(k)})$ as given in Eq. \ref{eq:Q_model_cos}. Thus, each output spike is
a sample from the $Q^{(k)}$ distribution, so that the emission or not of a spike by neuron $k$ at time $t$ can be used in Eq.
\ref{eq:F3_dynamics_cos} as the value of $Q_t(C^{(k)})$ i.e. as an instantaneous stochastic inference of $P_t(C^{(k)})$ in Eq.
\ref{eq:gfw_dynamics_cos}.

Notably, a LIF spiking neuron's firing rate is linearly dependent on the weighted input for inputs that well surpass the firing threshold
\cite{amit1991Network}, and it is zero below threshold. Assuming stochastic inputs, a LIF neuron's output is stochastic, with a probability of
producing a spike within an infinitesimal time window proportional to its firing rate. Therefore, the stochastic spiking ReLU neuronal model is a
close approximation of a LIF neuron with noisy inputs, which, in turn, is a convenient model commonly used for simulations or emulations of spiking
neurons. This makes our model efficiently testable in practice using standard LIF neurons in simulations.

\paragraph{Short-term STDP}
The spiking outputs combined with the spiking inputs render the plasticity in the spiking realization of the model indeed an STDP rule, with an
additional short-term temporal dependence. The updates are event-based, such that at every post-synaptic spike, the synapses are updated according to
the spikes they received previously, the STDP kernel $v(\tau)=\tau^{-1/2}e^{-\alpha \tau}$, and the time $\tau$ that mediated between the pre- and
post-synaptic spikes (Eq. \ref{eq:F_sup} and Eq. \ref{eq:stochX}). The un-normalized synaptic efficacy $G^{(jk)}$ of synapse $j$ subsequently decays
towards the fixed weight $W^{(jk)}$ according to the short-term plasticity dynamics $f(t)$.

\section{Weight-dependent ST-STDP}
\label{sec:weightdependent} The uninformed prior belief about an object's transformation has characteristics of a Wiener process with no drift and implies a
normal distribution $\mathcal{N}\left(\boldsymbol{X}_{t_i},\boldsymbol{\Sigma}_{t_i,t}\right)$, centred around the most recent observation $\boldsymbol{X}_{t_i}$ of the object. In
a more general formulation of the model, Assumption \ref{as:wiener} from Section \ref{sec:assumptions} may include some prior knowledge about the
temporal evolution of an object. An informed prior would instead lead to a distribution centred around a mean point
$\boldsymbol{X}_{t_i}'=g(\boldsymbol{X}_{t_i}| \boldsymbol{\theta})$, according to a function $g$ of the input, parametrized by $\theta$.

For example, it may be $\boldsymbol{X}_{t_i}'=g(\boldsymbol{X}_{t_i})=\boldsymbol{\theta}\circ \boldsymbol{X}_{t_i}, \quad \boldsymbol{\theta},\boldsymbol{X}_{t_i}\in \mathbb{R}$, where $\circ$ symbolizes the element-wise product, i.e. where each dimension $X_{t_i}^{(j)}$ of $\boldsymbol{X}_{t_i}$ is weighted by a different coefficient $\theta^{(j)}$.

The weighted influence of each input feature on future samples, as represented by $\boldsymbol{\theta}$, is determined by a prior likelihood
distribution of the input feature, and this prior may be according to the history-independent distribution $p(\boldsymbol{X}_t|B)$ of Eq. \ref{eq:pstar}.
This describes situations where a feature of an observation of the $k$-th class is more likely to be repeated if it is generally a likely feature of
observations of that $k$-th class, than if it is not, and this likelihood is reflected in the distribution $p(\boldsymbol{X}_t|C^{(k)},B)$. In this
case, we can formulate this aspect of the data as
\begin{equation}
	\boldsymbol{X}_{t_i}'^{(k)}=g(\boldsymbol{X}_{t_i}, \boldsymbol{\theta}^{(k)})=\boldsymbol{\theta}^{(k)}\circ \boldsymbol{X}_{t_i}, \label{eq:informedprior}
\end{equation}
where
\begin{equation}
	\theta^{(jk)}=c+(1-c)W^{(jk)},\, j=1,2,...,n \label{eq:w_dependence}
\end{equation}
and $c<1$.

$\boldsymbol{X}_{t_i}'^{(k)}$ is then the datapoint that is used in place of $\boldsymbol{X}_{t_i}$ in the expressions of the parameter dynamics, and the one
that attracts the $k$-th centroid, so that the Hebbian update of Eq. \ref{eq:F_sup} becomes
\begin{gather}
	\boldsymbol{F}^{(k)}_t\leftarrow
	\boldsymbol{F}^{(k)}_t+
	\gamma\boldsymbol{X}_t'^{(k)}
	Q^{(k)}_t= \boldsymbol{F}^{(k)}_t+
	\gamma\boldsymbol{\theta}^{(k)}\circ \boldsymbol{X}_{t_i}
	Q^{(k)}_t\implies
\end{gather}
\begin{gather}
	F^{(jk)}_t\leftarrow
	F^{(jk)}_t+
	\gamma^{(jk)}\boldsymbol{X}_{t_i}^{(k)}
	Q^{(k)}_t,
\end{gather}
where $\gamma^{(jk)}=\gamma\theta^{(jk)}$
That is, the effect of incorporating such a prior knowledge is simply to let each synapse $j$ have a different strength of the ST-STDP effect, equating $\gamma\theta^{(jk)}$, which is dependent on the fixed weight $W^{(jk)}$, i.e. the effect is to render ST-STDP weight-dependent.
In data with transformations where novel features are as likely to be repeated, as typical features of an object type $k$, which would be the case if the objects' morphing is not biased by any prior distribution, then in Eq. \ref{eq:w_dependence}, $c=1$, and thus $\theta^{(jk)}=c$ which is equivalent to the original uninformed prior with a mean equal to the last observation of the object, i.e. from Eq. \ref{eq:informedprior}, $c=1\implies\boldsymbol{X}_{t_i}'^{(k)}=\boldsymbol{X}_{t_i}$, and there is no weight-dependence.

Eq. \ref{eq:informedprior} and Eq. \ref{eq:w_dependence} are the forms that we used in the model of elastic clustering through ST-STDP on the OMNIST
video, and they imply a simple weight-dependence involved in the short-term efficacy updates: a synapse with a stronger fixed (i.e. long-term) weight
$W^{(jk)}$ is more strongly potentiated by the Hebbian (short-term) updates of the ST-STDP rule.

\section{Simulation differences from theory}
\label{sec:sim_diffs} To test the model, we chose to simulate it by using LIF neurons, which are a convenient model suitable for simulations, or
implementations with efficient electronic circuits. The theoretically optimal model requires normalized inputs, continuous normalization of the
synaptic efficacies, an additional inherent neuronal plasticity continuously updating the neuronal threshold (Eq. \ref{eq:thr_cos}), and divisive
inhibition in the case of LIF neurons, as opposed to subtractive. Nevertheless, our simulations used a simplified model without these particular
idealities and demonstrated that the model can be robust to their absence. Omitting the neuronal memory mechanism that the intrinsic neuronal
plasticity would introduce, not only simplified the simulation, but also allowed us to perform a more direct contrast of ST-STDP with other models
such as RNNs or LSTMs which do include an analogous decaying neuronal memory, and therefore allows us to attribute the demonstrated benefits of the
simulated model to a plasticity mechanism that is specific to synapses.

\section{ST-STDP temporarily reshapes the internal representation of neurons}
\label{sec:reshapes}
\begin{figure}[h!]
	\centering
	\begin{tabular}{c}
		\includegraphics[width = 89mm]{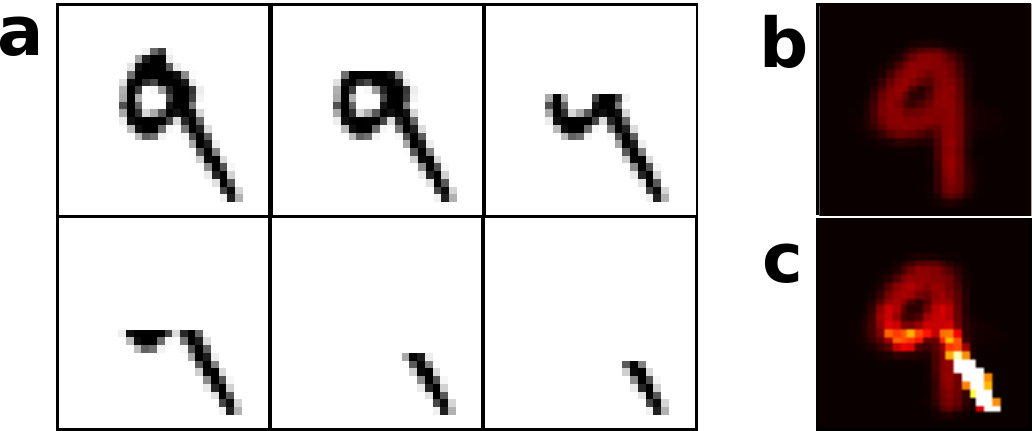}
	\end{tabular}
	\caption{\textbf{Example of temporary expansion of a neuron's internal representation.} \textbf{a}, A sequence involving a handwritten digit "9" being progressively occluded. \textbf{b}, The weights of a neuron that has been trained to specialize in a handwritten form of digit 9. It can be seen that the learned pattern is different from the input digit (first frame of panel A). \textbf{c}, The efficacies of the same neuron during the last frame of A. Through ST-STDP, the neuron adapts its representation of digit 9 to match the different form of digit 9, and to recognize even the highly occluded last frame despite its very small overlap with the original learned weights. This expansion is short-term and, accoding to ST-STDP, the neuron's efficacy vector will relax to its original form after the specific input digit is replaced by a different one, or after the neuron stops recognizing it, i.e. stops firing in response to it.} \label{fig:weirdtail}
\end{figure}
Equipping neurons with ST-STDP allows them to temporarily change their internal representation, to learn the currently most accurate representation
of the category they are already representing. This lets a neuron temporarily strengthen the most relevant of its permanent features it already has
stored in its previously learned long-term weights, but it also allows the neuron to acquire new features that a current version of an object may
have, but the neuron has not previously learned. An example of this is shown in (Fig. \ref{fig:weirdtail}), where a neuron temporarily expands its
internal representation (Fig. \ref{fig:weirdtail}\textbf{c}) to be able to recognize a digit (digit 9) through a feature ('tail' of digit, last frame in Fig.
\ref{fig:weirdtail}\textbf{a}) that is not included in its long-term weights (Fig. \ref{fig:weirdtail}\textbf{b} vs \textbf{c}). This possibility can be exploited to apply
ST-STDP to other types of transformations than occlusions.

These become possible by virtue of the synapse-specific nature of the short-term memory in the case of ST-STDP, as opposed to a neuron-specific memory, as is implemented by the recurrency in the case of RNN and LSTM. 

\onecolumn
\section{ST-STDP realizes elastic clustering}
\begin{figure*}[h!]
	\centering
	\includegraphics[width=155mm]{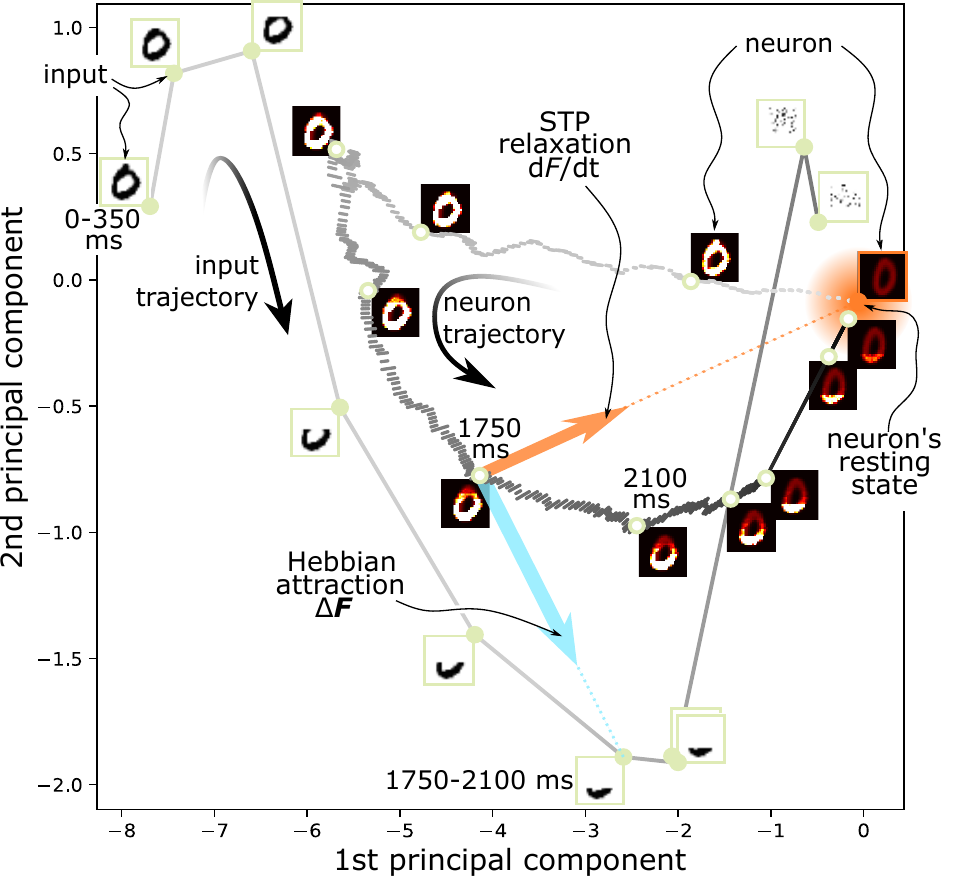}
	\caption{\textbf{Experimental evidence that ST-STDP realizes elastic clustering.}  (cf. Fig. \ref{fig:clustering}\textbf{a} and Fig. \ref{fig:wta}.) The trajectory of a neuron's efficacy vector on the plane of the first two principal components, as it follows the input sequence and relaxes to its resting position, demonstrating qualitatively that the elastic clustering algorithm is indeed implemented by the simulated SNN. The input and neuron here are those of Fig. \ref{fig:wta}\textbf{a} and \textbf{d}. Green-filled circles beginning from the datapoint on the left-hand side (arrow labelled "input trajectory") correspond to the inputs of Fig. \ref{fig:wta}\textbf{a}. The neuron's initial and resting state is the orange highlighted point (right-hand side). The thick grey-scale sequence of points depicts the neuron's path (arrow labelled "neuron trajectory"). White-filled green circles correspond to Fig. \ref{fig:wta}\textbf{d}'s snapshots. Grey points are the in-between states, plotted every \unit[1]{ms}. Each time the neuron fires, it is attracted by the recent input (cyan arrow) through Hebbian plasticity, and thus follows the input's path. The STP (orange arrow) continuously tends to relax the neuron towards its resting state, which ultimately happens as the neuron stops firing (straight last segment of the neuron's trajectory).} \label{fig:pca}
\end{figure*}

\onecolumn
\section{Application of elastic clustering to other types of transformations}
\label{sec:head}
OMNIST provides a demonstrative example of changing transformations that can be encountered in temporal data, and which can be
tackled successfully by our elastic clustering and its implementation with spiking neurons. Our theoretical analysis shows that the strength of the
approach is quite more general. As an example, it can be applied to the continuous recognition of patterns that morph with time, or whose observation
changes because the sensor changes its properties with time. Here (Fig. \ref{fig:rotatinghead}) we show a proof-of-concept toy example where a
neuron has been trained to recognize the side view of a person's face, but has never encountered its frontal view. As the head rotates though, the
neuron, equipped with ST-STDP, expands its internal representation to match the rotating head, and to ultimately still recognize the frontal view,
despite its lack of prior training on this pattern. This is a simple form of transfer learning, where the neuron used its previous knowledge of one
view of the person, to learn a different one, by exploiting the common features between temporally contiguous input samples.

\begin{figure*}[h!]
	\centering
	\includegraphics[width=160mm]{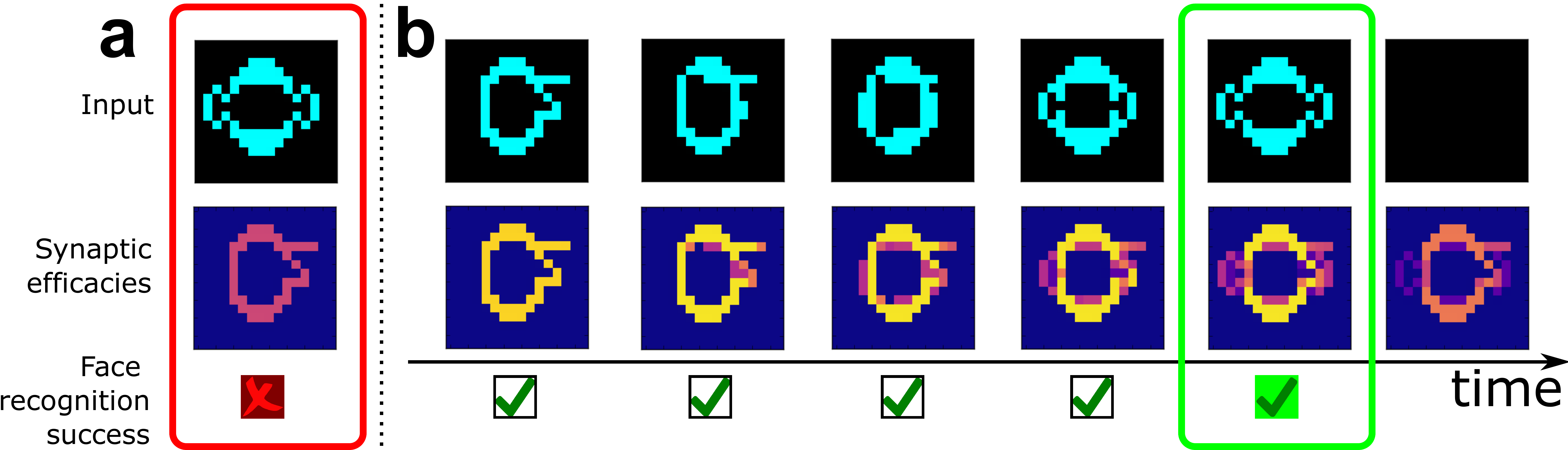}
	\caption{\textbf{Concept demonstration results of ST-STDP applied to morphing patterns.} \textbf{a}, One neuron has been pre-trained to recognize the
		side view of a person's face (middle), but the front view (top) can't be recognized (bottom) when it is presented out of context. \textbf{b}, Use of
		the temporal context through STP enables recognition. The side view is successfully recognized, and as the person rotates his head in subsequent
		frames, the face continues being recognized, even at its front view (green rectangle) that could not be recognized in panel A. This is possible
		thanks to the ST-STDP type of STP at the synapses. ST-STDP facilitates synapses whose inputs contribute to a successful face recognition (middle
		row). The facilitation is short-term, and the synapses that don't contribute to the postsynaptic neuron's activation transiently recover to their
		relaxed state (last frame).} \label{fig:rotatinghead}
\end{figure*}

\twocolumn
\section{Short-term memory in ST-STDP vs RNN and LSTM}
\label{sec:LSTM}
The SNN's
advantage compared to the RNN and LSTM in the OMNIST task stems partly from the fact that its short-term memory is independent for each synapse (see Supplementary Information
\ref{sec:reshapes}), as opposed to the neuronal-level memory implemented by the recurrency in RNN and LSTM networks.
This feat enables the spiking neurons to keep in short-term memory recent specific features of digits (as opposed to a whole digit memorized by the neuron's soma), so that the subsequent noise cannot be mis-recognized as a digit, unless it closely matches these specific features, i.e. pixels, of the prior digit's recent form. This gives the SNN a very high accuracy on noisy frames (Fig. \ref{fig:ANNs}\textbf{b}, right-most data-point). 
This requires a short time constant for the short-term component of the synaptic efficacies, to maintain a memory of only the most recent frame's visible features. This short time constant however causes some additional errors on highly occluded frames (Fig. \ref{fig:ANNs}\textbf{b}, 80-100\% occlusion). As the noise frames are
frequent, and the SNN’s performance in highly occluded frames remains reasonable, the SNN
reaches a better performance than the ANNs in the overall dataset.

The LSTM's and the RNN's better accuracy in highly occluded frames
is due to their choice of a longer time constant. A longer time constant is of course possible for
the SNN too, if necessary. Therefore, if the highly occluded frames were the priority, the SNN
does have the option to prioritize those rather than the noisy frames, by setting its time constant
to a longer value, as expected, and as we did observe while tuning the SNN’s parameters. The
alternative option, i.e. trading off some accuracy in highly occluded frames to further increase
overall performance as in the SNN, is not available to the ANNs, because a synapse-specific
short-term memory mechanism is not present. Their performance on the overall dataset is already optimized
by backpropagation through time.

Note that the theoretically optimal SNN also includes an additional short-term memory at the neuron's soma, implemented as an adaptive excitability of the neuron. We did not include this in our simulations, in order to focus on the advantages of the synaptic short-term memory which is the unique mechanism here, compared to the ANNs. Including the neuronal-level short-term memory mechanism as well may further improve the SNN's accuracy.

In addition, the short-term memory parameters, i.e. the recurrent weights, of the RNN and LSTM units were optimized through supervised training, and were flexible enough to be different for each unit, while the SNN's temporal parameter of STP was only hand-tuned, and it was a single one shared over the whole network's synapses. An optimization procedure for the SNN too, and an independent temporal parameter for each synapse is expected to further increase the SNN's accuracy. 

In summary, the key advantage of the SNN compared to LSTM does not lie in the choice of its time constant parameter, but in the qualitative difference of the model, which enables multiple independent short-term memories per unit, where each memory corresponds to one synapse.

\end{document}